\documentclass[accepted]{uai2021} 

\usepackage[american]{babel}

\usepackage{natbib} 
    \bibliographystyle{plainnat}
    
\usepackage{mathtools} 
\usepackage{booktabs} 
\usepackage{tikz} 

\usepackage{natbib}
\usepackage{hyperref}       
\usepackage{url}            
\usepackage{amsfonts}       
\usepackage{nicefrac}       
\usepackage{microtype}      
\usepackage{bbold}
\usepackage{capt-of}
\usepackage{amsmath, amsthm, amssymb, graphicx, cite, epsfig,epstopdf,color,soul}
\usepackage{tabularx}
\usepackage[ruled,vlined]{algorithm2e}
\allowdisplaybreaks
\usepackage{color}
\usepackage{enumerate}
\usepackage{multicol}
\usepackage{empheq}
\usepackage{caption} 
\usepackage{float}


\newcommand{\Eset}{\mathbb{E}}

\newcommand{\Rset}{\mathbb{R}}

\newcommand{\Acal}{{\cal A}}

\newcommand{\Ecal}{{\cal E}}

\newcommand{\Gcal}{{\cal G}}

\newcommand{\Mcal}{{\cal M}}
\newcommand{\Ncal}{{\cal N}}
\newcommand{\Ocal}{{\cal O}}
\newcommand{\Pcal}{{\cal P}}

\newcommand{\Rcal}{{\cal R}}
\newcommand{\Scal}{{\cal S}}

\newcommand{\Vcal}{{\cal V}}




\renewcommand{\arraystretch}{1.1}

\newtheorem{prop}{Proposition}

\newtheorem{thm}{Theorem}

\newtheorem{assump}{Assumption}
\newtheorem{remark}{Remark}







\renewcommand{\j}{j}

\newcommand{\vct}[1]{\boldsymbol{#1}}
\newcommand{\mtx}[1]{\boldsymbol{#1}}






\newcommand{\set}[1]{\mathcal{#1}}




\newcommand{\vu}{\vct{u}}

\newcommand{\vx}{\vct{x}}

\newcommand{\vmu}{\vct{\mu}}

\newcommand{\vtheta}{\vct{\theta}}
\newcommand{\vrho}{\vct{\rho}}
\newcommand{\vxi}{\vct{\xi}}
%


\newcommand{\mI}{\mtx{I}}

\newcommand{\mL}{\mtx{L}}

\newcommand{\mV}{\mtx{V}}

%

%



\newcommand{\setA}{\set{A}}

\newcommand{\setS}{\set{S}}

\newtheorem{theorem}{Theorem}[section]

\newtheorem{lemma}[theorem]{Lemma}

\usepackage{tikz}



\title{A Decentralized Policy Gradient Approach to Multi-Task Reinforcement Learning}

%
%
\author[1]{Sihan Zeng}
\author[1]{Malik Aqeel Anwar}
\author[2]{Thinh T. Doan}
\author[1]{Arijit Raychowdhury}
\author[1]{Justin Romberg}
\affil[1]{%
    School of Electrical and Computer Engineering\\
    Georgia Institute of Technology\\
    Atlanta, Georgia, USA
}
\affil[2]{%
    Bradley Department of Electrical and Computer Engineering\\
    Virginia Tech\\
    Blacksburg Virginia, USA
}

\begin{document}
\maketitle

\begin{abstract}
We develop a mathematical framework for solving multi-task reinforcement learning (MTRL) problems based on a type of policy gradient method. The goal in MTRL is to learn a common policy that operates effectively in different environments; these environments have similar (or overlapping) state spaces, but have different rewards and dynamics. We highlight two fundamental challenges in MTRL that are not present in its single task counterpart, and illustrate them with simple examples. We then develop a decentralized entropy-regularized policy gradient method for solving the MTRL problem, and study its finite-time convergence rate. We demonstrate the effectiveness of the proposed method using a series of numerical experiments.  These experiments range from small-scale "GridWorld" problems that readily demonstrate the trade-offs involved in multi-task learning to large-scale problems, where common policies are learned to navigate an airborne drone in multiple (simulated) environments.

\end{abstract}

\section{INTRODUCTION}
In reinforcement learning (RL), an agent tries to learn an optimal policy through repeated interactions with its environment, modeled as a Markov decision process (MDP), with the goal of optimizing its long-term cumulative rewards.  
Combined with powerful function approximation such as neural networks, (deep) reinforcement learning has received great successes in solving challenging problems in different applications, including game playing \citep{mnih2015human,silver2016mastering,openai2019dota}, healthcare \citep{yu2019reinforcement,esteva2019guide}, robotics \citep{kober2013reinforcement,Haarnoja_2019}, and autonomous navigation \citep{kretzschmar2016socially,zhu2017target,anwar2019autonomous}. These results, however, are primarily achieved only on a single task, and every new task almost requires the agent to be re-trained from scratch.

Multi-task reinforcement learning (MTRL) addresses this problem by finding a single policy that is simultaneously effective for a number of tasks. We are interested in developing a new method for MTRL by using a group of learning agents. We consider a scenario where multiple agents, each learning in its own environment, work together to learn a common policy by sharing their policy parameters. This common policy may perform slightly worse than the optimal policy for each local task, but is general enough to solve all local tasks reasonably well. In other words, the agent learns how to perform well not only in its own environment but also in unseen environments explored by other agents.



Existing approaches to solving problems of this nature \citep{hessel2019multi,espeholt2018impala,Yu_gradientSurgery2020} typically use a specific ``master/worker'' model for agent interaction, where worker agents independently collect observations in their respective environments, which are then summarized (perhaps through a gradient computation) and reported to a central master.
We are interested in understanding MTRL under a more flexible, decentralized communication model where agents only share information with a small subset of other agents.
This framework is inspired by applications where centralized coordination is unwieldy or impossible; one example would be a network of mobile robots exploring different parts of an area of interest that can only communicate locally. This question has not yet been addressed in the existing literature, and our focus, therefore, is to solve this important problem by developing a decentralized policy gradient method. 
%
%

\textbf{Main Contributions}\vspace{0.1cm}\\
$\bullet$ We present a clean mathematical formulation for MTRL problems over a network of agents, where each task is assigned to a single agent.  Framing the problem in the language of distributed optimization allows us to develop a decentralized policy gradient algorithm that finds a single policy that is  effective for each of the tasks.\vspace{0.1cm}\\
$\bullet$  We present, in Section~\ref{SEC:CHALLENGES_MTRL}, two simple examples that illustrate the fundamental differences between learning a policy for one task and learning for multiple tasks. \vspace{0.1cm}\\
$\bullet$ We provide theoretical guarantees for the performance of our decentralized policy gradient algorithm.  We show that in the tabular setting, the algorithm converges to a stationary point of the global (non-concave) objective.  Under a further assumption on the structure of environments' dynamics, the algorithm converges to the globally optimal value.\vspace{0.1cm}\\
$\bullet$ We demonstrate the effectiveness of the proposed method using numerical experiments on challenging MTRL problems.  Our small-scale ``Grid World'' problems, which can be reliably solved using a complete tabular representation for the policy, demonstrate how the decentralized policy gradient algorithm balances the interests of the agents in different environments.  Our experiments for learning to navigate airborne drones in multiple (simulated) environments show that the algorithm can be scaled to problems that require significant amounts of data and use neural network representations for the policy function.

\subsection{Related Works}
In recent years, multi-task RL has become an emerging topic as a way to scale up RL solutions. This topic has received a surge of interests, and a number of solutions have been proposed for solving this problem, including policy distillation
\citep{rusu2015policy,traore2019discorl}, distributed RL algorithms over actors/learner networks \citep{espeholt2018impala,hessel2019multi,liu2016decoding, Yu_gradientSurgery2020}, and transfer learning \citep{gupta2017learning,d'eramo2020sharing}. Distributed parallel computing has also been applied to speed up RL algorithms for solving single task problems  \citep{mnih2016asynchronous, Arun_massiveRL2015,assran2019gossip}.

Similar to our work, \citet{espeholt2018impala,hessel2019multi} also aim to solve MTRL with policy gradient algorithms in a distributed manner. These works propose sharing the local trajectories/data collected by workers in each environment to a centralized server where learning takes place. When the data dimension is large, the amount of information required to be exchanged could be enormous. In contrast, exchanging the policy parameters could be a more compact and efficient form of communication in applications with a large state representation but a much smaller policy representation. Moreover, we observe that a wide range of practical problems do not allow for a centralized communication topology \citep{ovchinnikov2014decentralized}. Motivated by these observations, we consider a decentralized policy gradient method where the agents only exchange their policy parameters according to a decentralized communication graph. This makes our work fundamentally different from the existing literature. Indeed, our work can be considered as a decentralized and multi-task variant of the policy gradient method studied in \citet{AgarwalKLM2019}, where the authors consider a single-task {RL}.


Other works in meta-learning and transfer learning also essentially aim to achieve MTRL, where these two methods essentially attempt to reduce the resources required to learn a new task by utilizing related existing information; see for example  \citet{wang2016learning,nagabandi2018learning,anwar2019autonomous}. Our work is fundamentally different from these papers, where we address MTRL by leveraging the collaboration between a number of agents.

We also note some relevant works on decentralized algorithms in multi-agent reinforcement learning (MARL), where a group of agents operate in a common environment and aim to solve a single task \citep{zhang2019distributed, Chu_MARL2020,Qu_ScalableRL2019,DoanMR2019_DTD(0),Ding_TDHomotopy_2019,Li_F2A2_2020,Wai2018_NIPS,Kar2013_QDLearning,Lee_MARLReview2019,Zhang_MARLReview2019}. The setting in these work is different from ours since we consider multi-task RL, which is more challenging than solving a single task.

\section{MULTI-TASK REINFORCEMENT LEARNING}

We consider an MTRL problem with $N$ agents operating in $N$ different environments.  These environments, each characterized by a different Markov decision process ({\sf MDP}) as described below, might also be interpreted as encoding a different task that an agent attempts to accomplish.  Although each agent acts and makes observations in a single environment, their goal is to learn a policy
that is jointly optimal across all of the environments.  Information is shared between the agents through connections described by edges in an undirected graph.  We do not require the state spaces to be the same in each of the environments; in general, the learned joint policy is a mapping from the union of state spaces to the action space.\looseness=-1
\vspace{-5pt}

\subsection{MTRL FORMULATION}
\vspace{-5pt}
The {\sf MDP} at agent $i$ is given by the $5$-tuple $\Mcal_{i} = (\Scal_{i},\Acal,\Pcal_{i},\Rcal_{i},\gamma_{i})$ 
where $\Scal_{i}$ is the set of states, $\Acal$ is the set of possible actions, which has to be common across tasks, $\Pcal_{i}$ is the transition probabilities that specify the distribution on the next state given the current state and an action, $\Rcal_{i}:\Scal_{i}\times\Acal\rightarrow\Rset$ is the reward function,
and $\gamma_{i}\in(0,1)$ is the discount factor. We denote by $\Scal = \cup_{i}\Scal_{i}$, where $\Scal_{i}$ can share common states. We focus on randomized stationary policies ({\sf RSPs}), where agent $i$ maintains a policy $\pi_{i}$ that assigns to each $s\in\Scal_i$ a probability distribution $\pi_{i}(\cdot|s)$ over $\Acal$. \looseness=-1

Given a policy $\pi$, let $V^{\pi}_i$ be the value function associated with the $i$-th environment,
\begin{equation}
V_i^{\pi}(s_{i}) = \Eset\left[\sum_{k=0}^{\infty}\gamma_{i}^{k}\Rcal_{i}(s^{k}_{i},a_{i}^{k})\,|\,s_{i}^{0} = s_{i}\right],\,\, a_{i}^{k} \sim \pi(\cdot|s_{i}^{k}).
\label{sec:prob:value_function}
\vspace{-5pt}
\end{equation}
Similarly, we denote by $Q_i^{\pi}$ and $A_{i}^{\pi}$ the $Q$-function and advantage function in the $i$-th environment
\begin{align}
&Q_i^{\pi}(s_{i},a_{i}) = \Eset\left[\sum_{k=0}^{\infty}\gamma_{i}^{k}\Rcal(s_{i}^{k},a_{i}^{k})\,|\,s_{i}^{0} = s_{i}, a_{i}^{0} = a_{i}\right],\notag\\ &A_i^{\pi}(s_{i},a_{i}) = Q_i^{\pi}(s_{i},a_{i})- V_i^{\pi}(s_{i}).
\label{sec:prob:QA_function}
\end{align} 
Without loss of generality, we  assume that $\Rcal_i(s,a)\in[0,1]$, implying for any policy $\pi$ and $\forall s\in\Scal_i,a\in\Acal$
\begin{equation}
\hspace{-0.0pt}    0\leq V_i^{\pi}(s)\leq\frac{1}{1-\gamma_i},\quad -\frac{1}{1-\gamma_i}\leq A_i^{\pi}(s,a)\leq\frac{1}{1-\gamma_i}\cdot\label{eq:bounded_V_A}
\vspace{-5pt}
\end{equation}

Let $\rho_{i}$ be an initial state distribution over $\setS_i$, and with some abuse of notation we denote the long-term reward associated with this distribution as $V_{i}^{\pi}(\rho_{i}) = \Eset_{s_{i}\sim\rho_{i}}\left[V_{i}^{\pi}(s_{i})\right]$. 

To parameterize the policy, we consider the scenario where each agent maintains $\theta_i\in\mathbb{R}^{|\setS|\times|\setA|}$ and uses the popular softmax parameterization\footnote{Note that our method also works with other forms of stochastic policies.}, i.e.
\begin{equation}
    \pi_{\theta_i}(a\,|\,s) = \frac{\exp\left(\theta_{i\,;\,s,a}\right)}{\sum_{a'\in\Acal}\exp(\theta_{i\,;\,s,a'})}\cdot \label{sec:prob:softmax}    
\end{equation}
The goal of the agents is to cooperatively find a parameter $\theta^*$ that maximizes the total cumulative discounted rewards 
\vspace{-3pt}
\begin{equation}
\hspace{-0.03cm}\theta^* = \arg\max_{\theta} V^{\pi_{\theta}}(\boldsymbol{\rho}) \triangleq \sum_{i=1}^{N}V_{i}^{\pi_{\theta}}(\rho_{i}),
\,\, \vrho = [\rho_1;\hdots;\rho_N],  \label{sec:prob:obj}
\vspace{-5pt}
\end{equation}
which is a non-concave objective \citep{AgarwalKLM2019}.

Treating each of the environments as independent {\sf RL} problems would produce different policies $\pi_{i}^*$, each maximizing their respective $V_i^\pi$.  Our focus in this paper is to find a single $\pi^*$ that balances the performance across all environments.

\subsection{MTRL Challenges}
\label{SEC:CHALLENGES_MTRL}
While solving single task RL is well understood at least in the tabular settings, MTRL is more challenging than it appears from \eqref{sec:prob:obj}. Here, we provide two fundamental challenges of MTRL, which make this problem much more difficult than its single-task counterpart.

\textbf{Deterministic vs stochastic policies}. In single task RL it is known that under mild assumptions there exists a deterministic policy $\pi^*$ that maximizes the objective \citep{Puterman_book_1994}. In addition, the value function of the optimal deterministic policy satisfies the Bellman optimality equation, motivating the development of the popular Q-learning method. In MTRL, where each task operates under different dynamics (transition probability matrices), there need not be an optimal deterministic policy, and hence there is no natural analog to the Bellman equation.  We illustrate this below with a simple ``GridWorld'' example.

\begin{figure}
  \includegraphics[width=\linewidth]{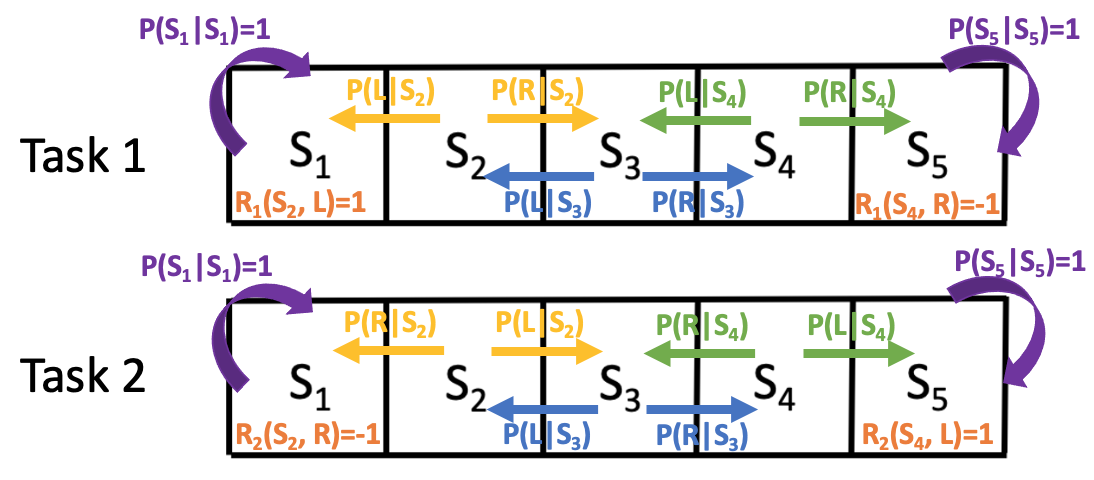}
  \vspace{-.7cm}
  \caption{2-Task GridWorld Problem}
  \label{fig:counterexample}
  \vspace{-.3cm}
\end{figure}
In the two-task GridWorld problem shown in Fig.~\ref{fig:counterexample}, there are two environments with the same state and action spaces. The dynamics and reward functions, however, are different. The two actions, labeled $L$ and $R$, deterministically move the agents to the left and right, respectively, in all states in Task 1.  In Task 2, the effect of $L$ and $R$ is reversed for states $S_2$ and $S_4$: applying $L$ ($R$) in $S_2$ transitions to $S_3$ ($S_1)$, while applying $L$ ($R$) in $S_4$ transitions to $S_5$ ($S_3$).  In both environments, the agents stay in states $S_1$ and $S_5$ when they reach them.  In Task 1 there is a reward of $+1$ for reaching $S_1$ and a penalty of $-1$ for reaching $S_5$; these rewards are reversed for Task 2.


We now consider what happens when we are asked to find a single policy that maximizes the sum of the cumulative rewards of the two tasks.
It is obvious that the optimal policy for state $S_2$ and $S_4$ is to always take action $L$ in order to reach the positive reward or to stay away from the negative reward. The only state whose optimal policy remains unclear is $S_3$. With the detailed computation left to the supplementary material, we find that the optimal (stochastic) policy $\pi^*$ is 
\vspace{-5pt}
\begin{equation*}
    \pi^*(a|S_3)=\left\{\begin{array}{ll} 0.5, & a=L, \\ 0.5, & a=R,\end{array}\right.
    \vspace{-5pt}
\end{equation*}
which yields $V^{\pi^*}(S_3)=\frac{2\gamma}{2-\gamma^2}\cdot$ 

By symmetry, the two possible deterministic policies
\vspace{-5pt}
\begin{align*}
    \pi_l(a|S_3)=\left\{\begin{array}{ll} 1, & a=L \\ 0, & a=R\end{array}\right.\text{and   }\pi_r(a|S_3)=\left\{\begin{array}{ll} 0, & a=L \\ 1, & a=R\end{array}\right.
    \vspace{-5pt}
\end{align*}
produce the same value for state $S_3$. For example, if the agent takes $\pi_{l}$ as an action in $S_{3}$, it keeps moving left in task $1$ and oscillates between $S_2$ and $S_3$ in task $2$. In this case, one can show that due to the discount factor $V^{\pi_l}(S_3) =\gamma$. A similar argument holds for the case when the agent takes $\pi_{r}$ in $S_{3}$, where we have $V^{\pi_r}(S_3) = \gamma$. In both cases, $V^{\pi_l}(S_3) = V^{\pi_r}(S_3)<V^{\pi^*}(S_3)$ when $\gamma>0$, which implies that any deterministic policy is sub-optimal.

As a consequence, RL methods that implicitly rely on the existence of a deterministic optimal policy (e.g., Q learning) cannot solve this type of problem. This provides additional motivation for us to study randomized policies and take on a policy gradient approach.


\textbf{Gradient domination condition}. In single task RL, it has been shown that the objective function, despite being non-concave, satisfies a kind of ``gradient domination'' condition \citep{AgarwalKLM2019}, which implies that every stationary point is globally optimal. This is important as it guarantees that the policy gradient algorithm, by reaching a stationary point in single task RL, can find the globally optimal policy. As we show below, in the multi-task problem we cannot expect to have this condition in the general setting. The landscape of the MTRL objective is so irregular that there could exist multiple stationary points which are not global optima.
We illustrate this issue with another simple example below.\looseness=-1

Let us consider again the 2-task GridWorld problem in Fig.\ref{fig:counterexample}. Here we make a slight modification to the dynamics of the tasks. In task 1 and task 2, regardless of the action taken in state $S_2$ and $S_4$, the transition probability is
\begin{align*}
    &P_1(s|S_2)=\left\{\begin{array}{ll} 1-p, & s=S_1 \\ p, & s=S_3\end{array}\right.\\
    &P_1(s|S_4)=\left\{\begin{array}{ll} 1-p, & s=S_3 \\ p, & s=S_5\end{array}\right.\notag\\
    &P_2(s|S_2)=\left\{\begin{array}{ll} p, & s=S_1 \\ 1-p, & s=S_3\end{array}\right.\\
    &P_2(s|S_4)=\left\{\begin{array}{ll} p, & s=S_3 \\ 1-p, & s=S_5\end{array}\right.
\end{align*}
for some $0.5 < p \leq 1$.

It is obvious that the policy gradient for states $S_2$ and $S_4$ will always be zero as the value function is a constant in terms of the policy at these two states.
We only have to optimize the policy for state $S_3$.

Under the softmax parameterization, we maintain parameters $\theta_{S_3,L}$ and $\theta_{S_3,R}$ to encode the policy
\vspace{-3pt}
\begin{equation*}
    \pi_{\theta}(L|S_3)=\frac{e^{\theta_{S_3,L}}}{e^{\theta_{S_3,L}}+e^{\theta_{S_3,R}}}\text{ and }\pi_{\theta}(R|S_3)=\frac{e^{\theta_{S_3,R}}}{e^{\theta_{S_3,L}}+e^{\theta_{S_3,R}}}\cdot
\end{equation*}

We consider the case where the agents always start from state $S_3$. It can be shown that $\theta_{S_3,L}=1,\theta_{S_3,R}=\infty$ (always taking action $R$) and $\theta_{S_3,L}=\infty,\theta_{S_3,R}=1$ (always taking action $L$) are both stationary points and achieve the global maximum of the objective \eqref{sec:prob:obj}, while $\theta_{S_3,L}=1,\theta_{S_3,R}=1$ (taking action $L$ and $R$ each with probability $0.5$) is a non-globally optimal stationary point\footnote{Derivation details can be found in Section \ref{sec:appendix_examples} of the supplementary material.}. When gradient based methods are used to optimize \eqref{sec:prob:obj}, it could be trapped at the stationary points without finding the global optimality. 
In Section \ref{sec:task_conflicting} and \ref{sec:global_optimality}, we dive deeper into the problem and show that globality optimality can be reached under a restrictive structural assumption.

\section{DECENTRALIZED POLICY GRADIENT}
\label{sec:alg}
One advantage of using softmax policies is that $\theta$ is unconstrained, making \eqref{sec:prob:obj} an unconstrained optimization problem. One may attempt to apply (stochastic) gradient ascent and utilize the existing standard techniques in (stochastic) optimization to analyze its performance. However, the optimal policy, which is possibly deterministic, may be attained only by sending $\theta$ to infinity. Such an exponential scaling with the parameters makes studying the convergence of this method more challenging. To handle this challenge, a common approach in the literature is to utilize the entropy-based regularization \citep{mnih2016asynchronous,AgarwalKLM2019}. In this paper, we use the relative-entropy as a regularization for the objective in \eqref{sec:prob:obj} inspired by \citet{AgarwalKLM2019}. Specifically, the relative-entropy of $\pi_{\theta}$ is given as
\begin{align}
\text{RE}(\pi_{\theta}) &\triangleq \mathbb{E}_{s\sim\text{Unif}_{\Scal}}\left[ \text{KL}\left(U_{\Acal}, \pi_{\theta}(\cdot | s)\right)\right] \notag\\
&=-\frac{1}{|\Scal||\Acal|} \sum_{a\in\Acal} \log \pi_{\theta}(a \,|\, s)- \log |\Acal|,
\end{align}
where $U_{\Acal}$ is the uniform distribution over $\Acal$ and $\text{KL}(p,q) = \Eset_{x\sim p}[-\log(q(x)/p(x))]$. The relative-entropy regularized variant of \eqref{sec:prob:obj} is then given as 
\begin{equation}
\hspace{-0.98pt} L^{\lambda}(\theta;\vrho) = \sum_{i=1}^{N}L_{i}^{\lambda}(\theta;\rho_{i}) = \sum_{i=1}^{N}\left(V_i^{\pi_{\theta}}(\rho_i) -   \lambda\text{RE}\left(\pi_{\theta}\right)\right),
\label{sec:prob:obj_theta_reg}
\end{equation}
where $\lambda$ is a regularization parameter. Defining the discounted state visitation distribution   $d^{\pi_{\theta}}_{i}$ under a policy $\pi_{\theta}$ in the $i$-th environment
\begin{equation*}
d^{\pi_{\theta}}_{i}(s\,|\,s_{0}) \triangleq  (1-\gamma_{i})\sum_{k=0}^{\infty}\gamma_{i}^{k}P_{i}^{\pi_{\theta}}(s_{i}^{k} = s\,|\,s_{i}^{0}=s_{0}),
\end{equation*} 
and $d_{i,\rho_{i}}^{\pi_{\theta}}(s) = \Eset_{s_{0}\sim\rho_{i}}[d^{\pi_{\theta}}_{i}(s\,|\,s_{0})]$.
The gradient of $L_{i}^{\lambda}$ is \footnote{The derivation is in Section \ref{sec:policy_gradient_MA_derivation} of the supplementary material.}
\begin{align}
 \frac{\partial L_i^{\lambda}(\theta;\rho_i)}{\partial \theta_{s, a}}&=\frac{1}{1-\gamma_i} d_{i,\rho_i}^{\pi_{\theta}}(s) \pi_{\theta}(a \,|\, s) A_i^{\pi_{\theta}}(s, a)\notag\\
 &\hspace{20pt}+\frac{\lambda}{|\mathcal{S}|}\left(\frac{1}{|\mathcal{A}|}-\pi_{\theta}(a | s)\right).
 \label{SEC:PROB:OBJ_THETA_REG:GRAD}
\end{align}

Our focus now is to apply gradient ascent methods for optimizing $L^{\lambda}$ in a decentralized setting. In fact, under some proper choice of $\lambda$ the proposed algorithm can get arbitrarily close to the stationary point of problem \eqref{sec:prob:obj}.

To optimize \eqref{sec:prob:obj_theta_reg}, we use the decentralized policy gradient method formally stated in Algorithm~\ref{Alg:Distributed_PG}. In this algorithm, each agent can communicate with each other through an undirected and connected graph $\Gcal=(\Vcal,\Ecal)$, where agents $i$ and $j$ can exchange messages if and only if they are connected in $\Gcal$. We denote by $\Ncal_{i} = \{j\,:\,(i,j)\in\Ecal\}$ the set of agent $i$'s neighbors. In addition, each agent $i$ maintains its own local policy parameter $\theta_{i}$, as an estimate of the optimal $\theta^*$ of \eqref{sec:prob:obj}. Finally, $\mu_{i}$ is the local initial distribution at agent $i$, which can be chosen differently from $\rho_{i}$.



\begin{algorithm}[!ht]
\SetAlgoLined
\textbf{Initialization:} Each agent $i$ initializes $\theta_i^0\in\Rset^d$, an initial distribution $\mu_i$, and step sizes $\{\alpha^k\}_{k\in\mathbb{N}}$.\;

 \For{k=1,2,3,...}{
  Each agent $i$ simultaneously implements:
  
    \hspace{10pt}1) Exchange $\theta_i^k$ with neighbors $j\in\mathcal{N}_i$
    
    \hspace{10pt}2) Compute the gradient $g_i^k$ of $L_i^{\lambda}(\theta_i^k;\mu_i)$
    
    \hspace{10pt}3) Policy update:
        \begin{equation}
        \theta^{k+1}_{i} = \sum_{j\in\mathcal{N}_i}W_{ij}\theta_{j}^{k} + \alpha^k g_i^k.
        \label{Alg:Distributed_PG:Update}
        \end{equation}
        \vspace{-10pt}
 }
\caption{Decentralized Policy Gradient Algorithm}
\label{Alg:Distributed_PG}
\end{algorithm}

At any time $k\geq0$, agent $i$ first exchanges its iterates with its neighbors $j\in\Ncal_{i}$ and compute the gradient $g_{i}^{k}$ of $L_{i}^{\lambda}(\theta_{i}^{k};\mu_{i})$ only using information from its environment. Agent $i$ updates $\theta_{i}$ by implementing \eqref{Alg:Distributed_PG:Update}, where it takes a weighted average of $\theta_{i}^{k}$ with  $\theta_{j}^{k}$ received from its neighbors $j\in\Ncal_{i}$, following by a local gradient step. The goal of this weighted average is to achieve a consensus among the agents' parameters, i.e., $\theta_{i} = \theta_{j}$, while the local gradient steps are to push this consensus point toward the optimal $\theta^*$. Here, $W_{ij}$ is some non-negative weight, which $i$ assigns for $\theta_{j}^{k}$. The conditions on $W_{ij}$ to guarantee the convergence of Algorithm \ref{Alg:Distributed_PG} are given in the next section.

\section{CONVERGENCE ANALYSIS}
\label{sec:convergence_analysis}
In this section, our focus is to study the performance of Algorithm \ref{Alg:Distributed_PG} under the tabular setting, i.e., $\theta\in\Rset^{|\Scal||\Acal|}$. It is worth recalling that each function $V_{i}^{\pi}$ in \eqref{sec:prob:obj} is in general non-concave. To show the convergence of our algorithm, we study the case when $g_{i}$ is an exact estimate of $\nabla L_{i}^{\lambda}$ (Eq. \eqref{SEC:PROB:OBJ_THETA_REG:GRAD}), and consider the weight matrix $W$ satisfying the following assumption, which is fairly standard in the literature of decentralized consensus-based optimization \citep{zhang2019distributed,DoanMR2019_DTD(0),zeng2020finite}.      
\begin{assump}
Let $W = [W_{ij}]\in\Rset^{N\times N}$ be a doubly stochastic matrix, i.e., $\sum_{i}W_{ij} = \sum_{j}W_{ij} = 1$, with $W_{ii} > 0 $. Moreover, $W_{ij}>0$ iff $i$ and $j$ are connected, otherwise $W_{ij} = 0$. 
\label{assump:W_doublystochastic}
\end{assump}

We denote by $\sigma_2$ and $\sigma_N$ the second largest and the smallest singular values of $W$, respectively. Our first main result shows that the algorithm converges to the stationary point of \eqref{sec:prob:obj} at a rate $\Ocal(1/\sqrt{K})$, where $\mu_{i} = \rho_{i}$, for all $i$.

\begin{thm}\label{THM:MTPG_TABULAR_SOFTMAX_SADDLE}
Suppose that Assumption \ref{assump:W_doublystochastic} holds. Let $\{\theta_{i}^{k}\}$, for all $i$, be generated by Algorithm \ref{Alg:Distributed_PG}. In addition, let $\mu_{i} = \rho_{i}$, for all $i$, and the step sizes $\alpha^{k} = \alpha$ satisfying
\begin{align}
\alpha \in \Big(0,\, \frac{1+\sigma_N}{\sum_{i=1}^N \frac{16}{(1-\gamma_i)^3}+\frac{4N\lambda}{|\Scal|}}\Big).\label{thm:MTPG_tabular_softmax_saddle:stepsize}    
\end{align}
Then $\forall i$, $\theta_{i}^{k}$ satisfies
\begin{align}
\begin{aligned}
&\min_{k<K}\Big\|\frac{1}{N}\sum_{j=1}^N\nabla V_j(\theta_i^k;\rho_j)\Big\|^2\\
&\leq \Ocal\Big( \frac{1}{K\alpha}+ \frac{\alpha^2}{N(1-\sigma_{2})^2\sum_{j=1}^{N}(1-\gamma_j)^{6}}+\frac{\lambda^2}{N}\Big).
\end{aligned}
\label{thm:MTPG_tabular_softmax_saddle:rate}
\end{align}
\vspace{-5pt}
\end{thm}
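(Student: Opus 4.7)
The plan is a standard decentralized gradient-ascent analysis on the smooth non-concave objective $L^{\lambda}(\theta;\vrho) = \sum_{i=1}^{N} L_i^{\lambda}(\theta;\rho_i)$. I would track (i) Lipschitz smoothness of each $L_i^{\lambda}(\cdot;\rho_i)$, (ii) the consensus error $\sum_i \|\theta_i^k - \bar\theta^k\|^2$ around the averaged iterate $\bar\theta^k = \frac{1}{N}\sum_i \theta_i^k$, and (iii) a descent step on the averaged iterate, then combine these three into \eqref{thm:MTPG_tabular_softmax_saddle:rate} after converting from $\nabla L^{\lambda}$ back to $\nabla V$.

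For step (i), the chain-rule computation of \citet{AgarwalKLM2019} for softmax-parameterized policies together with \eqref{eq:bounded_V_A} gives that $\nabla V_i^{\pi_\theta}(\rho_i)$ is $\tfrac{8}{(1-\gamma_i)^3}$-Lipschitz. The entropy regularizer in \eqref{sec:prob:obj_theta_reg} contributes an extra $\tfrac{2\lambda}{|\Scal|}$ to the smoothness constant, so $L^\lambda(\cdot;\vrho)$ is globally $L$-smooth with $L = \sum_i \tfrac{8}{(1-\gamma_i)^3} + \tfrac{2N\lambda}{|\Scal|}$, which explains the denominator in the step-size ceiling \eqref{thm:MTPG_tabular_softmax_saddle:stepsize}. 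I would also record a uniform bound on $\|g_i^k\|$ read directly off \eqref{SEC:PROB:OBJ_THETA_REG:GRAD}, using $|\pi_\theta|\le 1$ and $|A_i^{\pi_\theta}|\le 1/(1-\gamma_i)$ from \eqref{eq:bounded_V_A}, so that no further Lipschitz-gradient-on-iterates assumption is needed.

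For step (ii), stacking iterates into $\Theta^k$ and writing \eqref{Alg:Distributed_PG:Update} as $\Theta^{k+1} = W\Theta^k + \alpha G^k$, Assumption \ref{assump:W_doublystochastic} yields both $\bar\theta^{k+1} = \bar\theta^k + \tfrac{\alpha}{N}\sum_i g_i^k$ and the contraction
\begin{align*}
\sum_i \|\theta_i^{k+1} - \bar\theta^{k+1}\|^2 \le \sigma_2^2 \sum_i \|\theta_i^k - \bar\theta^k\|^2 + \alpha^2 \sum_i \|g_i^k\|^2.
\end{align*}
Unrolling this geometric recursion bounds the steady-state consensus error by $O(\alpha^2 \sum_i \|g_i\|^2 / (1-\sigma_2)^2)$, which after inserting the per-agent gradient bound produces the consensus term in \eqref{thm:MTPG_tabular_softmax_saddle:rate}. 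For step (iii), $L$-smoothness of $L^\lambda$ at $\bar\theta^k$ yields
\begin{align*}
L^\lambda(\bar\theta^{k+1};\vrho) \ge L^\lambda(\bar\theta^k;\vrho) + \tfrac{\alpha}{N}\bigl\langle \nabla L^\lambda(\bar\theta^k;\vrho),\textstyle\sum_i g_i^k\bigr\rangle - \tfrac{L\alpha^2}{2N^2}\bigl\|\textstyle\sum_i g_i^k\bigr\|^2.
\end{align*}
Decomposing $\sum_i g_i^k = \nabla L^\lambda(\bar\theta^k;\vrho) + \sum_i[\nabla L_i^\lambda(\theta_i^k;\rho_i) - \nabla L_i^\lambda(\bar\theta^k;\rho_i)]$ and using smoothness to bound the residual by $L^2 \sum_i \|\theta_i^k - \bar\theta^k\|^2$, the step-size condition \eqref{thm:MTPG_tabular_softmax_saddle:stepsize} makes the coefficient of $\|\nabla L^\lambda(\bar\theta^k;\vrho)\|^2$ strictly positive; the factor $(1+\sigma_N)$ in that ceiling arises by passing through $\tfrac12(I+W)$, whose smallest eigenvalue $(1+\sigma_N)/2$ absorbs the quadratic term more tightly than the naive $\alpha \le 1/L$ condition.

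Telescoping over $k=0,\ldots,K-1$ and using that $L^\lambda$ is uniformly bounded (by \eqref{eq:bounded_V_A} and the bounded entropy) delivers $\min_{k<K}\|\nabla L^\lambda(\bar\theta^k;\vrho)/N\|^2 \le O(1/(K\alpha))$ plus the consensus term. A final conversion uses $\|\nabla L^\lambda - \nabla V\|^2 \le O(N\lambda^2)$, read off the regularizer contribution in \eqref{SEC:PROB:OBJ_THETA_REG:GRAD}, together with $\|a\|^2 \le 2\|b\|^2 + 2\|a-b\|^2$ to generate the $\lambda^2/N$ bias, plus one more smoothness-times-consensus bound to replace $\bar\theta^k$ by an arbitrary $\theta_i^k$ in the final gradient, producing \eqref{thm:MTPG_tabular_softmax_saddle:rate}. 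The main obstacle I anticipate is the careful bookkeeping needed to land on the exact constants in \eqref{thm:MTPG_tabular_softmax_saddle:stepsize}: coupling the $(1+\sigma_N)$ factor from the descent lemma with the $(1-\sigma_2)^{-2}$ factor from consensus contraction while keeping the heterogeneous per-task smoothnesses $8/(1-\gamma_i)^3$ inside the sum, rather than collapsing to a worst-case $1/(1-\gamma_{\max})^3$, is what makes the bound informative when the discount factors differ substantially across tasks.
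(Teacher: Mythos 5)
Your proposal is correct in substance but follows a genuinely different route from the paper. You run the classical averaged-iterate analysis of decentralized gradient descent: a descent lemma at $\bar\theta^k=\frac{1}{N}\sum_i\theta_i^k$, a separate geometric recursion for the consensus error, and a coupling of the two via smoothness. The paper instead defines the Lyapunov function $\vxi_{\alpha,\lambda}(\vtheta;\vmu)=-\vct{1}^{T}\mL^{\lambda}(\vtheta;\vmu)+\frac{1}{2\alpha}\|\vtheta\|_{I-W}^{2}$ and observes that the update \eqref{Alg:Distributed_PG:Update} is \emph{exactly} gradient descent on $\vxi_{\alpha,\lambda}$ over the stacked variable $\vtheta^k$; monotone decrease of $\vxi_{\alpha,\lambda}$ then gives $\min_{k<K}\|\nabla\vxi_{\alpha,\lambda}(\vtheta^k)\|^2=O(1/(K\alpha^2))\cdot\|\vtheta^1-\vtheta^0\|^2$-type control in one shot, and the identity $\overline{\nabla\vxi}_{\alpha,\lambda}=-\overline{\nabla\mL}^{\lambda}$ (from $\vct{1}^T(I-W)=\vct{0}$) recovers the averaged gradient. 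The consensus lemma is invoked only once, at the very end, to swap $\theta_j^k$ for $\theta_i^k$. The practical difference shows up exactly where you flagged your "main obstacle": in the paper's route the step-size ceiling with the $(1+\sigma_N)$ factor falls out mechanically, because the smoothness constant of $\vxi_{\alpha,\lambda}$ is $\beta^{\lambda}+\alpha^{-1}(1-\sigma_N)$ and the descent coefficient becomes $\frac{1}{2}(\alpha^{-1}(1+\sigma_N)-\beta^{\lambda})$; your heuristic of "passing through $\frac12(I+W)$" does not actually arise in the averaged-iterate analysis (where $W$ disappears from the mean update entirely), so your route would naturally yield a ceiling of the form $\alpha\lesssim 1/\beta^{\lambda}$ rather than the stated $(1+\sigma_N)/(2\beta^{\lambda})$ — compatible with the theorem, but obtained differently. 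Two smaller points: your squared consensus recursion needs a Young's-inequality cross term (or should be run on norms rather than squared norms, as the paper's Lemma \ref{lem:bounded_deviation_from_mean} does, giving $\|\theta_i^k-\bar\theta^k\|\le\alpha D/(1-\sigma_2)$ directly); and note that the paper's Lemma \ref{lem:bounded_deviation_from_mean} and the telescoping both rely on identical initialization $\theta_i^0=\theta_j^0$ (so that $\|\vtheta^0\|_{I-W}^2=0$), an assumption your write-up should make explicit. With those repairs your decomposition delivers the same three terms in \eqref{thm:MTPG_tabular_softmax_saddle:rate}.
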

For an ease of exposition, we delay the analysis of this theorem to the supplementary material, where we provide an exact formula for the right-hand side of \eqref{thm:MTPG_tabular_softmax_saddle:rate}. First, our upper bound in \eqref{thm:MTPG_tabular_softmax_saddle:rate} depends quadratically on the inverse of the spectral gap $1-\sigma_{2}$, which shows the impact of the graph $\Gcal$ on the convergence of the algorithm. 
Second, this bound states that under a constant step size the norm of the gradient converges to a ball with radius $\Ocal(\alpha)$ at a rate $\Ocal(1/\sqrt{K})$. As the step size is reduced, we get closer to a stationary point of \eqref{sec:prob:obj}.
This rate matches the one for single task RL in \citet{AgarwalKLM2019}. However, while we only show the convergence to a stationary point, a global optimality is achieved there. Below, we provide insight on this discrepancy, which has already been suggested by the example in Section \ref{SEC:CHALLENGES_MTRL}.

\begin{remark}
we note that Algorithm \ref{Alg:Distributed_PG} can be applied when the policy is represented by function approximations (e.g., neural networks). Under an function approximation, we can guarantee the convergence to a stationary point with the rate stated in Theorem \ref{THM:MTPG_TABULAR_SOFTMAX_SADDLE} under an assumption on the Lipschitzness of the function approximation, and the proof of Theorem \ref{THM:MTPG_TABULAR_SOFTMAX_SADDLE} in Section \ref{sec:proof_MTPG_tabular_softmax_saddle} of the supplementary material still goes through. 
For conciseness, we do not formally state the theorem under function approximations, but will provide numerical simulations using neural networks in Section \ref{sec:simulation}.
\end{remark}

\subsection{Task Conflicting}
\label{sec:task_conflicting}
For single task RL, policy gradient methods can return a globally optimal policy in the tabular setting \citep{AgarwalKLM2019}. A natural question to ask is whether multi-task versions of the policy gradient methods can return a globally optimal policy in MTRL. 
Unfortunately, this question is no in general, which has been shown by the example in Section \ref{SEC:CHALLENGES_MTRL}. Here we provide a more mathematical explanation of the issue. First, we consider single task RL with the softmax parameterization and relative entropy regularizer. In this setting, by using policy gradient methods one can show that $\|\nabla_{\theta}L_i^{\lambda}(\theta;\mu_i)\| \rightarrow 0$; \citep[Corollary $5.1$]{AgarwalKLM2019}, which translates to $A_i^{\pi_{\theta}}(s,a)$ decaying to zero for all $s,a$ for a proper choice of $\lambda$. The decay of the advantage function is used to show the globally optimal convergence of policy gradient methods, where the objective function satisfies \citep[Theorem $5.2$]{AgarwalKLM2019} 
\begin{equation*}
V_i^{\pi^*}\left(\rho_i\right)-V_i^{\pi_{\theta}}\left(\rho_i\right) = \frac{1}{1-\gamma}\sum_{s,a}d_{i,\rho_i}^{\pi^*}(s)\pi^*(a|s)A_i^{\pi_{\theta}}(s,a).
\end{equation*}
This condition is often referred to as the gradient domination property, an analog of the Polyak-Lojasiewicz condition commonly used to show the global optimality in nonconvex optimatization. Unfortunately, we do not have this property in the general MTRL setting. Although each component task function $V_{i}^{\pi}(\rho_i)$ satisfies the gradient domination condition, their sum may not due to what we call distribution mismatch or task conflicting. To further explain this, using a step of analysis similar to \citet[Theorem $5.2$]{AgarwalKLM2019}, we have\looseness=-1
\begin{align*}
&V(\theta^*;\vrho)-V(\theta;\vrho)\notag\\
&=\sum_{i=1}^{N}\frac{1}{1-\gamma_i} \sum_{s\in\Scal_{i}}\sum_{a\in\Acal} d_{i,\rho_i}^{\pi_{\theta^{\star}}}(s) \pi_{\theta^{\star}}(a \,|\, s) A_i^{\pi_{\theta}}(s, a)\notag\\
&= \sum_{s,a}\pi_{\theta^{\star}}(a \,|\, s)\sum_{i:s\in\Scal_i}\frac{d_{i,\rho_i}^{\pi_{\theta^{\star}}}(s) }{d_{i,\mu_i}^{\pi_{\theta}}(s) }\frac{d_{i,\mu_i}^{\pi_{\theta}}(s)}{1-\gamma_i}  A_i^{\pi_{\theta}}(s, a).
\end{align*}
To achieve the global optimality one needs to have 
\[
\vspace{-3pt}
\sum_{i:s\in\Scal_i}\frac{d_{i,\rho_i}^{\pi_{\theta^{\star}}}(s) }{d_{i,\mu_i}^{\pi_{\theta}}(s) }\frac{d_{i,\mu_i}^{\pi_{\theta}}(s)}{1-\gamma_i}  A_i^{\pi_{\theta}}(s, a) \rightarrow 0,
\]
while the policy gradient algorithm can only return (cf. \eqref{SEC:PROB:OBJ_THETA_REG:GRAD})
\[
\vspace{-3pt}
\sum_{i:s\in\Scal_i}\frac{d_{i,\mu_i}^{\pi_{\theta}}(s)}{1-\gamma_i}  A_i^{\pi_{\theta}}(s, a) \rightarrow 0.
\]
With a similar line of analysis, one can show that this issue also arises in other forms of policy, e.g., direct parameterization, and with other types of policy gradient methods, e.g., mirror descent \citep{lan2021policy}. This problem is due to the ratios $d_{i,\rho_i}^{\pi_{\theta^{\star}}}(s)/d_{i,\mu_i}^{\pi_{\theta}}(s)$ being different across the environments. We call this ratio the distribution mismatch between the environments, representing the conflict between tasks.
Because of this distribution mismatch, gradient descent approach uses biased gradients in its update. One cannot easily correct this mismatch since the optimal policy $\pi_{\theta^*}$ is unknown.


\subsection{Achieving Global Optimality}
\label{sec:global_optimality}
Despite the difficulty of the MTRL problem, we provide a sufficient condition on the structure of the MDPs, where a global optimality can be achieved by Algorithm \ref{Alg:Distributed_PG}. 


\begin{assump}\label{assump:restrictive}
Let $\pi_{\theta^*}$ be an optimal policy solving \eqref{sec:prob:obj}. Then for any $\pi_{\theta}$ and $\boldsymbol{\mu}$ we have
\begin{equation}
        \frac{d_{i,\rho_i}^{\pi_{\theta^*}}(s)}{d_{i,\mu_i}^{\pi_{\theta}}(s)}=\frac{d_{j,\rho_j}^{\pi_{\theta^*}}(s)}{d_{j,\mu_j}^{\pi_{\theta}}(s)},\,\, \forall s:s\in\Scal_i\cap\Scal_j,\; \forall i,j \in[N].  
        \label{eq:discounted_visitation_assumption}
\end{equation}
\end{assump}
We know that $d_{i,\rho_{i}}^{\pi_{\theta}}(s_{i})$ (similarly, $d_{i,\mu_{i}}^{\pi_{\theta}}(s_{i})$) is the discounted fraction of time that agent $i$ visits state $s_{i}\in\Scal_{i}$ when using $\rho_{i}$ (similarly, $\mu_{i}$) as the initial distribution. Qualitatively, this assumption can be interpreted as enforcing that the joint states between the environments are equally explored. Mathematically, this assumption guarantees the objective function \eqref{sec:prob:obj} obeys a kind of gradient domination when each function $V_i^{\pi}(\rho_i)$ satisfies this condition.  

We note that Assumption \ref{assump:restrictive} holds in the important case where the component tasks share the same state space and transition probability, but differ in their reward functions.

Under Assumption \ref{assump:restrictive}, we show that Algorithm \ref{Alg:Distributed_PG} finds the global optimality of \eqref{sec:prob:obj}. For simplicity, we assume without loss of generality that $\theta_{i}^{0} = \theta_{j}^{0}$, $\forall\,i,j$. Let $\alpha^{k} = \alpha$ satisfying 
\begin{align}
    \alpha&< \frac{1}{\sum_{i=1}^N \left(\frac{8}{(1-\gamma_i)^3}+\frac{2\lambda}{|\Scal|}\right)}\label{thm:MTPG_tabular_softmax_constantalpha:stepsizes}\\
    &\hspace{0pt}\times \min \Big\{1+\sigma_N\;;\;\frac{\lambda N(1-\sigma_2)}{4|\Scal||\Acal|\left(2N\lambda + \sum_{i=1}^N\frac{1}{(1-\gamma_i)^2}\right)}\Big\}.\notag
\end{align}
\begin{thm}\label{THM:MTPG_TABULAR_SOFTMAX_CONSTANTALPHA}
Suppose that Assumptions \ref{assump:W_doublystochastic} and \ref{assump:restrictive} hold. Given an $\epsilon >0$, let $\lambda = \epsilon\,/\, 2N \|d_{\boldsymbol{\rho}}^{\pi_{\theta^*}}/\boldsymbol{\mu}\|_{\infty}$ and $\alpha^{k}$ satisfy \eqref{thm:MTPG_tabular_softmax_constantalpha:stepsizes}. Let $\theta^*$ be a solution of \eqref{sec:prob:obj}.  Then $\forall i$, $\theta_{i}^{k}$ returned by Algorithm  \ref{Alg:Distributed_PG} satisfies
\begin{align}
\begin{aligned}
&\min_{k<K}\{V(\theta^*;\vrho)-V(\theta_i^k;\vrho)\}\leq\epsilon\\
&\text{if } K\geq \Ocal\Bigg(\frac{|S|^2|A|^2\sum_{j=1}^{N}\frac{1}{(1-\gamma_{j})^{6}}}{(1-\sigma_{2})^2\epsilon^2}\left\|\frac{d_{\boldsymbol{\rho}}^{\pi_{\theta^*}}}{\boldsymbol{\mu}}\right\|_{\infty}^{2}\Bigg),
\end{aligned}
\label{thm:MTPG_tabular_softmax_constantalpha:rates}
\end{align}
where we denote $\left\|\frac{d_{\boldsymbol{\rho}}^{\pi_{\theta^*}}}{\boldsymbol{\mu}}\right\|_{\infty} = \underset{\substack{s\in\Scal\\j:s\in\Scal_j}}{\max}\frac{d_{j,\rho_j}^{\pi_{\theta^{\star}}}(s)}{(1-\gamma_j)\mu_j(s)}\cdot$
\end{thm}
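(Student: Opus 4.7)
}
The plan is to convert a stationary-point guarantee (essentially a rerun of Theorem~\ref{THM:MTPG_TABULAR_SOFTMAX_SADDLE} applied to the regularized objective $L^{\lambda}$ rather than to $V$) into a value-suboptimality bound by exploiting Assumption~\ref{assump:restrictive} to recover a gradient-domination inequality for the multi-task cost. The argument will have four steps: (i) a descent/consensus analysis of Algorithm~\ref{Alg:Distributed_PG} on $L^{\lambda}$, (ii) a translation of small regularized gradient into small per-task advantages, (iii) an invocation of the multi-task performance-difference identity where Assumption~\ref{assump:restrictive} cleanly factors out the distribution-mismatch ratios, and (iv) a tuning of $\lambda$ and $K$ to split the target accuracy $\epsilon$.

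For step (i), I would repeat the descent lemma and consensus-contraction argument underlying Theorem~\ref{THM:MTPG_TABULAR_SOFTMAX_SADDLE}, but tracking $L^{\lambda}$ directly. Smoothness of $L^{\lambda}$ follows from the $L_{V}$-smoothness of each $V_{i}^{\pi_{\theta}}$ (contributing $\sum_{i}8/(1-\gamma_{i})^{3}$) plus an additional $2N\lambda/|\Scal|$ from the relative-entropy term, matching the denominator of \eqref{thm:MTPG_tabular_softmax_constantalpha:stepsizes}. The doubly-stochastic $W$ with $W_{ii}>0$ gives the standard contraction $\|\theta_{i}^{k}-\bar\theta^{k}\|^{2}=\Ocal\big(\alpha^{2}/(1-\sigma_{2})^{2}\big)$ where $\bar\theta^{k}=\tfrac{1}{N}\sum_{i}\theta_{i}^{k}$, and yields
\[
\min_{k<K}\big\|\nabla L^{\lambda}(\bar\theta^{k};\vrho)\big\|^{2}\leq\Ocal\!\Big(\tfrac{1}{K\alpha}+\tfrac{\alpha^{2}\sum_{j}(1-\gamma_{j})^{-6}}{N(1-\sigma_{2})^{2}}\Big).
\]

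For step (ii), I would use the closed form \eqref{SEC:PROB:OBJ_THETA_REG:GRAD} coordinate-wise: summing $\partial L_{i}^{\lambda}/\partial \theta_{s,a}$ over $i:s\in\Scal_{i}$ and rearranging yields
\[
\Big|\!\!\sum_{i:s\in\Scal_{i}}\!\!\tfrac{d_{i,\mu_{i}}^{\pi_{\theta}}(s)}{1-\gamma_{i}}\pi_{\theta}(a|s)A_{i}^{\pi_{\theta}}(s,a)\Big|\leq\big\|\nabla L^{\lambda}\big\|+\Ocal(\lambda),
\]
so small regularized gradients at $\bar\theta^{k}$ force the weighted advantage sums to be $\Ocal(\lambda)+\Ocal(\|\nabla L^{\lambda}\|)$. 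For step (iii), I would invoke the multi-task performance-difference identity already derived in Section~\ref{sec:task_conflicting},
\[
V(\theta^{*};\vrho)-V(\theta;\vrho)=\sum_{s,a}\pi_{\theta^{*}}(a|s)\!\!\sum_{i:s\in\Scal_{i}}\!\!\tfrac{d_{i,\rho_{i}}^{\pi_{\theta^{*}}}(s)}{d_{i,\mu_{i}}^{\pi_{\theta}}(s)}\,\tfrac{d_{i,\mu_{i}}^{\pi_{\theta}}(s)}{1-\gamma_{i}}A_{i}^{\pi_{\theta}}(s,a),
\]
and use Assumption~\ref{assump:restrictive} to pull the common ratio outside the inner sum, bounding it by $\|d_{\vrho}^{\pi_{\theta^{*}}}/\vmu\|_{\infty}$. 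The remaining inner sum is exactly what step (ii) controls, yielding
\[
V(\theta^{*};\vrho)-V(\bar\theta^{k};\vrho)\leq\big\|d_{\vrho}^{\pi_{\theta^{*}}}/\vmu\big\|_{\infty}\big(\|\nabla L^{\lambda}(\bar\theta^{k};\vrho)\|+\Ocal(N\lambda)\big).
\]
Choosing $\lambda=\epsilon/(2N\|d_{\vrho}^{\pi_{\theta^{*}}}/\vmu\|_{\infty})$ makes the $\lambda$-bias at most $\epsilon/2$; requiring the gradient-norm contribution to be $\le\epsilon/2$ and squaring gives the stated iteration count, after absorbing the Lipschitz-in-$\theta$ gap $|V(\bar\theta^{k};\vrho)-V(\theta_{i}^{k};\vrho)|=\Ocal(\alpha/(1-\sigma_{2}))$ into the $\epsilon$-budget via the stepsize constraint.

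The main obstacle I expect is step (ii): controlling the advantage requires that $\pi_{\theta}(a|s)$ not collapse to $0$ along the trajectory, so that dividing out $\pi_{\theta}(a|s)$ in \eqref{SEC:PROB:OBJ_THETA_REG:GRAD} is legitimate. The relative-entropy regularizer provides such a lower bound because $L^{\lambda}$ is bounded above and the KL term blows up as $\pi_{\theta}(a|s)\to 0$, but translating this into a uniform quantitative floor over all iterates $\theta_{i}^{k}$ (which temporarily disagree with the consensus $\bar\theta^{k}$ by $\Ocal(\alpha/(1-\sigma_{2}))$) is what pins down the second piece of the stepsize rule \eqref{thm:MTPG_tabular_softmax_constantalpha:stepsizes}, in particular the $\lambda N(1-\sigma_{2})/(4|\Scal||\Acal|(\cdots))$ factor; getting this consensus-vs-regularization tradeoff right is the delicate part.
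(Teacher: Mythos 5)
Your four-step architecture is the same as the paper's: a descent-plus-consensus analysis of the regularized objective $L^{\lambda}$, a conversion of small summed gradient into small weighted advantage sums, the multi-task performance-difference identity with Assumption~\ref{assump:restrictive} factoring out the common ratio $d_{i,\rho_i}^{\pi_{\theta^*}}(s)/d_{i,\mu_i}^{\pi_{\theta}}(s)$, and a choice of $\lambda=\epsilon/(2N\|d_{\vrho}^{\pi_{\theta^*}}/\vmu\|_\infty)$ to split the accuracy budget. Steps (i), (iii) and (iv) match the paper essentially line for line (the paper works at $\theta_i^k$ directly, converting $\sum_j\nabla L_j^{\lambda}(\theta_j^k;\mu_j)$ to $\sum_j\nabla L_j^{\lambda}(\theta_i^k;\mu_j)$ via smoothness and the consensus bound $\|\theta_i^k-\theta_j^k\|\le 2\alpha D/(1-\sigma_2)$, rather than passing through $\bar\theta^k$ and a Lipschitz bound on $V$, but that is cosmetic).

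The genuine gap is exactly where you flagged it, in step (ii), and the mechanism you propose to close it does not work. You suggest obtaining the policy floor needed to divide out $\pi_{\theta}(a\,|\,s)$ in \eqref{SEC:PROB:OBJ_THETA_REG:GRAD} from the fact that the KL term blows up as $\pi_{\theta}(a\,|\,s)\to 0$ while $L^{\lambda}$ stays bounded along the trajectory. Even granting that the Lyapunov monotonicity yields a lower bound on $\vct{1}^{T}\mL^{\lambda}(\vtheta^k;\vmu)$, the resulting floor is only $\pi_{\theta}(a\,|\,s)\gtrsim\exp\left(-c|\Scal||\Acal|/(N\lambda)\right)$, which is exponentially small in $1/\lambda$ and would destroy the polynomial dependence in \eqref{thm:MTPG_tabular_softmax_constantalpha:rates}. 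The paper instead proves the floor $\pi_{\theta}(a\,|\,s)\ge 1/(2|\Acal|)$ directly from near-stationarity (Proposition~\ref{THM:SOFTMAX_OPTIMALITY_DETERMINISTIC}): once $\|\sum_i\nabla L_i^{\lambda}(\theta;\mu_i)\|\le \lambda N/(2|\Scal||\Acal|)$, then for every pair $(s,a)$ at which $\sum_{i:s\in\Scal_i}\frac{d_{i,\mu_i}^{\pi_{\theta}}(s)}{1-\gamma_i}A_i^{\pi_{\theta}}(s,a)\ge 0$ (the only pairs that matter, since the performance-difference bound takes a max over $a$ of a nonnegative quantity) one can drop that nonnegative term from the coordinate gradient and rearrange $\frac{\lambda N}{2|\Scal||\Acal|}\ge\frac{\lambda N}{|\Scal|}\bigl(\frac{1}{|\Acal|}-\pi_{\theta}(a\,|\,s)\bigr)$ to get the floor; dividing the gradient identity by $\pi_{\theta}(a\,|\,s)$ then gives the weighted advantage sum at most $2\lambda N/|\Scal|$, and summing over $|\Scal|$ states yields the $2\lambda N\|\cdot\|_\infty$ value gap. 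This threshold structure (gradient below $\lambda N/(2|\Scal||\Acal|)$ rather than your additive bound $\|\nabla L^{\lambda}\|+\Ocal(\lambda)$) is also what produces the $|\Scal|^2|\Acal|^2$ factor in the iteration count and the second branch of the stepsize condition \eqref{thm:MTPG_tabular_softmax_constantalpha:stepsizes}, which ensures the consensus-induced gradient discrepancy $2\alpha\beta^{\lambda}D/(1-\sigma_2)$ stays strictly below that threshold so that $K$ can be solved for. Your write-up correctly senses that something delicate happens here, but without this stationarity-based floor the chain from small gradient to small value gap is not established.
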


Under Assumption \ref{assump:restrictive}, Algorithm \ref{Alg:Distributed_PG} achieves a global optimality with the same rates as the ones in \citet{AgarwalKLM2019}, except for a factor $1/(1-\sigma_{2})^2$ which captures the impact of communication graph $\Gcal$.
Eq. \eqref{thm:MTPG_tabular_softmax_constantalpha:rates} also shows the impact of the initial distribution $\boldsymbol{\mu}$ on the convergence of the algorithm through the distribution mismatch coefficient. A bad choice of $\boldsymbol{\mu}$ may result in a local optimum (or stationary point) convergence by breaking Assumption \ref{assump:restrictive}, as we will illustrate by simulation in Section \ref{sec:gridworld_experiments}.

\label{sec:gridworld_experiments}
\begin{figure*}
  \includegraphics[width=\linewidth]{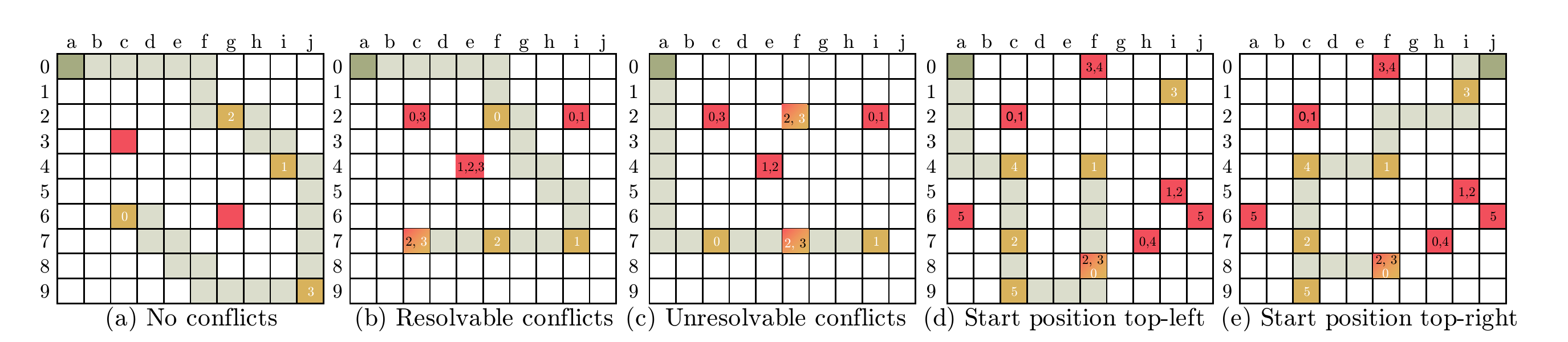}
  \vspace{-.7cm}
  \caption{Evaluate Learned Policy in Multi-task GridWorld}
  \label{fig:maze_trajectory}
  \vspace{-.3cm}
\end{figure*}

\section{EXPERIMENTAL RESULTS}
\label{sec:simulation}
\subsection{GridWorld Problems}

In this section, we evaluate the performance of our proposed algorithm on two platforms: GridWorld and drone navigation. We first verify the correctness of our theoretical results by applying the decentralized policy gradient (DCPG) algorithm for solving small-scale GridWorld problems, where each agent uses a tabular policy. We next apply the proposed method to solve the more challenging problem of large-scale drone navigation in simulated 3D environments, where the policy is approximated by neural networks. 

\textbf{General setup}. In each simulation, the agents runs a number of episodes of DCPG. In each episode, each agent computes its local gradient by using the Monte-Carlo method. Each agent then communicates with its neighbors over a fixed ring graph (i.e. agent $i$ communicates with agent $i-1$ and $i+1$ for $i=2,3,...,N-1$; agent $1$ communicates with agent $2$ and $N$; agent $N$ communicates with agent $N-1$ and $1$) and updates its iterates using \eqref{Alg:Distributed_PG:Update}. Given the communication graph $\Gcal$, we generate the weight matrix $W$ using the lazy Metropolis method \citep{olshevsky2014linear}.



To illustrate the results in Theorem \ref{THM:MTPG_TABULAR_SOFTMAX_CONSTANTALPHA}, we apply Algorithm \ref{Alg:Distributed_PG} for solving the popular GridWorld problem under tabular settings, i.e., $\theta\in\Rset^{|\Scal||\Acal|}$. This is a notable small-scale RL problem, which can be solved efficiently by using tabular methods; see for example \citet[Example $4.1$]{Sutton2018}. In this problem, the agent is placed in a grid of cells, where  each cell can be labeled either by the desired goal, an obstacle, or empty. The agent selects an action from the set of $4$ actions \{up, down, left, right\} to move to the next cell. It then receives a reward of $+1$ if it reaches the desired goal, $-1$ if it gets into an obstacle, and $0$ otherwise. The goal of the agent is to reach a desired position from an arbitrary initial location in a minimum number of steps (or maximize its cumulative rewards). 

For multi-task RL settings, we consider a number of different single GridWorld environments of size $10\times 10$, where they are different in the obstacle and goal positions. We assign each agent to one environment. In this setting, the environments have the same transition probabilities. Therefore, Assumption \ref{assump:restrictive} is satisfied when agents across environments use an identical initial state distribution. 


For solving this multi-task GridWorld, the agents implement Algorithm \ref{Alg:Distributed_PG} where the local gradients are estimated using a Monte-Carlo approach, and the states are their locations in the grid. After $1000$ training episodes, the agents agree on a unified policy, whose performance is tested in parallel in all environments. The results are presented in Fig.\ref{fig:maze_trajectory}, where we combine all the environments into one grid. In addition, yellow and red cells represent the goal and obstacle, respectively. For each environment, we terminate the test when the agent reaches the goal or hits an obstacle. The light green path is the route which the agent visits in these environments. Since we have a randomized policy, we put the path mostly followed by the agents. Fig.\ref{fig:maze_trajectory} (a)--(c) consider experiments on four environments, while (d) and (e) are on six environments. 


In Fig.\ref{fig:maze_trajectory}(a), we illustrate the performance of the policy when there is no conflict between the environments, i.e., the block of one environment is not the goal of the others and vice versa. In this case, we can see that the algorithm returns an optimal policy which finds all the goals at the environments. Next, we consider the conflict setting in Fig.\ref{fig:maze_trajectory}(b), where one obstacle of environment $2$ is the goal of environment $3$. Here, the $i$ number in white and black represents the goal and the obstacles of the $i$-th environment, respectively. Although in this case there is a conflict between the tasks, it is solvable, that is, the agents still can find the optimal policy to solve all the task. These simulations agree with our results in Theorem \ref{THM:MTPG_TABULAR_SOFTMAX_CONSTANTALPHA}, which finds the global optimality. 

We next consider an unsolvable conflict in Fig.\ref{fig:maze_trajectory}(c), where the goal of agent $2$ is the obstacle of agent $3$ and vice versa. In this case, there does not exists a policy that can always visit all goal positions without running into an obstacle. Instead, the agents need to make a compromise, where they finish three out of the four tasks.

To summarize, the experiments with no conflict and resolvable conflict have dynamics that allow the optimal value of \eqref{sec:prob:obj} to be the sum of the optimal values of the individual tasks, while the experiment with unresolvable conflict does not. Nevertheless, in all three cases, DCPG successfully finds the global optimality of the global objective function \eqref{sec:prob:obj}.\looseness=-1

Finally, we illustrate the impact of the initial conditions with the simulations in Fig.\ref{fig:maze_trajectory}(d) and (e). In (d), if the agents start from the top left corner they cannot find the optimal solution. However, when the agents start from the top right corner the algorithms return the gobal optimality as shown in (e). This empirical evidence hints that to achieve the global optimality with the DCPG algorithm, conditions on the initial state distribution like Assumption \ref{assump:restrictive} may be necessary.

\setlength{\tabcolsep}{10pt} 
\renewcommand{\arraystretch}{1} 
\begin{figure}[h]
  \hspace{-0.5cm}\includegraphics[width=1.1\columnwidth]{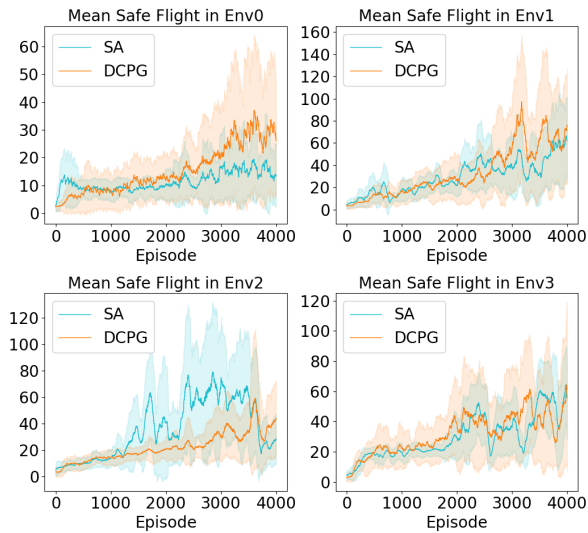}
  \caption{MSF During Training (REINFORCE)}
  \label{figure:multienv_drone_train_returns}
  \vspace{-.4cm}
\end{figure}

\begin{table*}[h!]
\centering
\caption{MSF of Learned Policy}
\vspace{-.1cm}
\begin{tabular}{cccccc}
        \toprule
        Policy (REINFORCE)  & Env0     & Env1  & Env2 & Env3 & Sum\\
        \midrule
        SA-$0$ & $15.9\pm 5.3$  & $4.5\pm 1.2$  & $4.1\pm 1.3$ & $3.6\pm 3.0$ & $28.1$  \\
        SA-$1$      & $3.0\pm 0.2$   & $55.4\pm 29.3$  & $9.7\pm 2.8$ & $8.1\pm 3.8$ & 76.2  \\
        SA-$2$     & $1.5\pm 0.5$  & $0.8\pm 0.2$  & $21.1\pm 18.3$ & $2.0\pm 0.6$ & 25.4  \\
        SA-$3$     & $2.3\pm 0.5$  & $0.8\pm 0.2$  & $8.6\pm 2.0$ & $40.1\pm 17.4$ & 51.8  \\
        DCPG (proposed) & \textbf{25.2 $\pm$ 20.1}  & \textbf{67.9 $\pm$ 35.5}  & \textbf{40.5 $\pm$ 18.0} & \textbf{61.8 $\pm$ 39.2} & \textbf{195.4}  \\

        \bottomrule
        Policy (A2C)  & Env0     & Env1  & Env2 & Env3 & Sum\\
        \midrule
        SA-$0$ & $21.8\pm 6.5$  & $7.0\pm 0.8$  & $15.1\pm 5.4$ & $14.9\pm 8.2$ & $58.8$  \\
        SA-$1$      & $1.3\pm 0.4$   & \textbf{54.1 $\pm$ 20.1}  & $2.8\pm 0.9$ & $6.4\pm 1.2$ & 59.4  \\
        SA-$2$     & $1.8\pm 0.7$  & $3.9\pm 0.3$  & $105.2\pm 38.5$ & $9.9\pm 1.3$ & 120.8  \\
        SA-$3$     & $1.1\pm 0.2$  & $1.4\pm 0.2$  & $15.8\pm 5.0$ & $78.6\pm 25.9$ & 96.9  \\
        DCPG (proposed) & \textbf{25.2 $\pm$ 7.5}  & 50.1 $\pm$ 24.6  & \textbf{165.8 $\pm$ 64.6} & \textbf{159.6 $\pm$ 61.0} & \textbf{380.7}  \\
        \bottomrule
        
        Policy (PPO)  & Env0     & Env1  & Env2 & Env3 & Sum\\
        \midrule
        SA-$0$ & \textbf{28.3 $\pm$ 15.5} & $11.2\pm6.3$ & $8.7\pm5.9$ & $13.5\pm5.7$ & 61.7 \\
        SA-$1$ & $1.1\pm0.6$ & \textbf{75.3 $\pm$ 43.2} & $1.6\pm0.4$ & $1.6\pm 0.8$ & 79.6 \\
        SA-$2$ & $2.5\pm1.8$ & $3.0\pm1.1$ & $63.2\pm 36.4$ & $15.6\pm10.6$ & 84.3\\
        SA-$3$ & $1.9\pm1.6$ & $1.2\pm0.5$ & $14.3\pm8.7$ & $139.0\pm 72.5$ & 156.4 \\
        DCPG (proposed) & $26.3\pm10.9$ & $66.7\pm30.8$ & \textbf{144.0 $\pm$ 82.4} & \textbf{195.2 $\pm$ 92.4} & \textbf{432.2}\\
        \bottomrule
        \bottomrule
        \end{tabular}
\label{table:multienv_drone}
\end{table*}

\subsection{Drone Navigation}

For the drone experiment we use PEDRA, a 3D stimulated drone navigation platform \citep{PEDRA}. In this  platform, a drone agent is equipped with a front-facing camera, and takes actions to control its flight. The reward received by the drone agent is designed to encourage the drone to stay away from obstacles. We select 4 indoor environments on the PEDRA platform (denoted as Env 0-3), which contain widely different lighting conditions, wall colors, furniture objects, and hallway structures. The performance of a policy is quantified by the mean safe flight (MSF), the average distance travelled by the agent before it collides with any obstacle. This is a standard criterion in evaluating the performance of flying autonomous vehicles \citep{sadeghi2016cad2rl}.


Instead, to evaluate the policy learned using Algorithm \ref{Alg:Distributed_PG} (DCPG), we compare it with the single agent trained independently in each environment. For brevity, we denote by SA-$i$ the single agent trained in environment $i$. We note that the SAs can be considered as the solutions to the local objective functions, while DCPG optimizes the sum of the local objective functions. Therefore, if trained to the global optimum, each SA provides an upper bound on the performance of the DCPG policy in the respective environment. The aim of the experiments is to show in practical problems, the DCPG policy often performs close to this bound.

To demonstrate the compatibility of our algorithm with a wide range of policy gradient methods, we conduct three sets of experiments, where we run Algorithm \ref{Alg:Distributed_PG} with the gradient $g_i^k$ estimated by three popular variants of policy gradient algorithms: REINFORCE, advantage actor-critic (A2C), and proximal policy optimization (PPO). In each case, a 5-layer neural network is used to approximate the policy\footnote{Implementation details
are presented in Section \ref{sec:drone_experiment_details} of the supplementary material.}. We stress that in each set of the experiments, the SAs and DCPG are trained identically, with the only difference being whether the agents communicate their policies.

In Fig.\ref{figure:multienv_drone_train_returns}, we show MSF of the DCPG and SA policies in the training phase with the REINFORCE algorithm. 

Finally, we test the policies learned by DCPG and SAs in the four environments. The results are presented in Table \ref{table:multienv_drone}. Across the three sets of experiments, we consistently see the performance difference between DCPG and the SAs. As expected, SA-$i$ only performs well in $i$-th environment but does not generalize to environment it has not seen. On the other hand, the policy returned by DCPG performs very well in all environments. Surprisingly, DCPG often performs even better than each SA-$i$ in the $i$-th environment, which we observe is due to the benefits of learning common features and representation among the agents.

\section{CONCLUSION}
By combining consensus optimization with the policy gradient algorithm,  we propose a decentralized method that aims to learn a unified policy in MTRL problems. We theoretically show that the convergence of multi-task algorithm achieves the same convergence rate as the single task algorithm within constants depending on the connectivity of the network, and support our analysis with a series of experiments. Future directions from this work include investigating the possibility of relaxing Assumption \ref{assump:restrictive} in achieving the global optimality and improving the convergence rate.

\clearpage
\bibliography{references}

\appendix

\newpage
\appendix
\onecolumn
\vbox{%
\hsize\textwidth
\linewidth\hsize
\hrule height 4pt
\vskip 0.25in
\centering
{\Large\bf{A Decentralized Policy Gradient Approach to Multi-Task Reinforcement Learning \\ 
Supplementary Materials} \par}
  \vskip 0.25in
\hrule height 1pt
\vskip 0.09in
}
\section{Computation Details of Examples in Section \ref{SEC:CHALLENGES_MTRL}}
\label{sec:appendix_examples}

First, we look at the first example in Section \ref{SEC:CHALLENGES_MTRL} used to illustrate that deterministic optimal policy may not exist in MTRL in general. As we discussed, it is easy to see that the optimal policy in state $S_2$ and $S_4$ is to always take action $L$ in order to reach the positive reward or to stay away from the negative reward, and all that is left to be figured out is the policy at state $S_3$.

There are 2 possible deterministic policies in state $S_3$, to always take action $L$ or to always take action $R$. First, consider one policy $\pi_{d,l}$, which is to always take $L$.

We have $V_1^{\pi_{d,l}}(S_3)=\gamma$ as the agent reaches $S_1$ in 2 steps under $\pi_{d,l}$ and claims the $+1$ reward. However, this policy produces a zero value in environment 2, $V_2^{\pi_{d}}(S_3)=0$, since an agent will move back and forth between $S_3$ and $S_4$ forever. Therefore, this deterministic policy achieves
\begin{align*}
    V_1^{\pi_{d,l}}(S_3)+V_2^{\pi_{d,l}}(S_3)= \gamma+0=\gamma. 
\end{align*}

By symmetry, the value of the policy $\pi_{d,r}$, which is to always take action $R$ in state $S_3$, is
\begin{align*}
    V_1^{\pi_{d,r}}(S_3)+V_2^{\pi_{d,r}}(S_3)= 0+\gamma=\gamma. 
\end{align*}

Now, let's consider a stochastic policy $\pi_s$, which we will show performs better than the two deterministic policies. This policy $\pi_s$ takes the same deterministic actions as $\pi_{d,l}$ and $\pi_{d,r}$ in state $S_2, S_4$, and is defined as follows for state $S_3$.
\begin{align*}
    \pi_s(a|S_3)=\left\{\begin{array}{ll} p, & a=\text{left} \\ 1-p, & a=\text{right}\end{array}\right.
\end{align*}

We compute cumulative rewards under $\pi_s$. 
\begin{align*}
    V_1^{\pi_s}(S_3) &= p\gamma+p(1-p)\gamma^3+p(1-p)^2\gamma^5+...\\
    &=p\gamma\sum_{k=0}^{\infty}\left((1-p)\gamma^2\right)^{k}\\
    &=\frac{p\gamma}{1-(1-p)\gamma^2}.
\end{align*}
Similarly, 
\begin{align*}
    V_2^{\pi_s}(S_3) &= (1-p)\gamma+(1-p)p\gamma^3+(1-p)p^2\gamma^5+...\\
    &=(1-p)\gamma\sum_{k=0}^{\infty}\left(p\gamma^2\right)^{k}\\
    &=\frac{(1-p)\gamma}{1-p\gamma^2}.
\end{align*}

Then,
\begin{align*}
    V_1^{\pi_s}(S_3)+V_2^{\pi_s}(S_3)= \frac{p\gamma}{1-(1-p)\gamma^2}+\frac{(1-p)\gamma}{1-p\gamma^2}.
\end{align*}

Taking the derivative with respect to $p$ and setting it to 0, we get
\begin{align}
    \frac{1}{(1-(1-p)\gamma^2)^2}=\frac{1}{(1-p\gamma^2)^2},
\end{align}
which leads to $p=0.5$.

The value of policy $\pi_s$ at state $S_3$ is
\begin{align*}
    V_1^{\pi_s}(S_3)+V_2^{\pi_s}(S_3)&=\frac{p\gamma}{1-(1-p)\gamma^2}+\frac{(1-p)\gamma}{1-p\gamma^2}\\
    &=\frac{2\gamma}{2-\gamma^2}.
\end{align*}

Then, we explain how the three stationary points are computed in the second example in Section \ref{SEC:CHALLENGES_MTRL}. Note that from Section \ref{sec:policy_gradient_MA_derivation}, we have that
\begin{align}
\frac{\partial}{\partial \theta_{s,a}} V_i^{\pi_{\theta}}(\rho_i) 
&=\frac{1}{1-\gamma_i}d^{\pi_{\theta}}_{i,\rho_i}(s)\pi_{\theta}(a|s)A_i^{\pi_{\theta}}(s,a)
\label{sec:appendix_examples_grad}
\end{align}

We define $D_i^{\pi_{\theta}}$ to be the $|\Scal_i|\times|\Scal_i|$ matrix where the entry $(i,j)$ is $d_i^{\pi_{\theta}}(s_i|s_j)$. It can be easily seen that 
\begin{align}
    d^{\pi_{\theta}}_{i,\rho_i}(s) = D_i^{\pi_{\theta}}\rho_i.
\end{align}

Given $P_i^{\pi_{\theta}}$ the transition probability matrix of task $i$ under policy $\pi_{\theta}$ (whose entry $(j,k)$ denotes $P_i(j\mid k)$), the matrix $D_i^{\pi_{\theta}}$ can be computed as
\begin{align}
    D_i^{\pi}=(1-\gamma P_i^{\pi})^{-1}.
    \label{eq:counterexample_DfromP}
\end{align}

Given the small scale and the known dynamics of the problem, we can also compute the value function and the Q function of the policy $\pi_{\theta}$ in the two tasks by solving the Bellman equation, from which we get $A_i^{\pi_{\theta}}(s,a)$. 
Specifically, under a policy $\pi$, the value functions associated with the first and second tasks are
\begin{align}
    V_1^{\pi} = (I-\gamma (P_1^{\pi})^{\top})^{-1}\left[\begin{array}{c}
    0 \\
    1-p \\
    0 \\
    -p \\
    0
    \end{array}\right],\quad\text{and}\quad V_2^{\pi} = (I-\gamma (P_2^{\pi})^{\top})^{-1}\left[\begin{array}{c}
    0 \\
    -p \\
    0 \\
    1-p \\
    0
    \end{array}\right].
    \label{eq:counterexample_V}
\end{align}
In addition, we can compute the Q functions
\begin{align}
    & Q_1^{\pi}(\cdot,L)=\left[0,
    \quad(1-p)+\gamma p V_1^{\pi}(S_3),
    \quad\gamma V_1^{\pi}(S_2),
    \quad\gamma (1-p) V_1^{\pi}(S_3) - p,
    \quad0\right]^{\top},\notag\\
    & Q_1^{\pi}(\cdot,R)=\left[0,
    \quad(1-p)+\gamma p V_1^{\pi}(S_3),
    \quad\gamma V_1^{\pi}(S_4),
    \quad\gamma (1-p) V_1^{\pi}(S_3) - p,
    \quad0\right]^{\top},\notag\\
    & Q_2^{\pi}(\cdot,L)=\left[0,
    \quad\gamma(1-p)V_2^{\pi}(S_3)-p,
    \quad\gamma V_2^{\pi}(S_2),
    \quad\gamma p V_2^{\pi}(S_3) +(1 - p),
    \quad0\right]^{\top},\notag\\
    & Q_2^{\pi}(\cdot,R)=\left[0,
    \quad\gamma(1-p)V_2^{\pi}(S_3)-p,
    \quad\gamma V_2^{\pi}(S_4),
    \quad\gamma p V_2^{\pi}(S_3) +(1 - p),
    \quad0\right]^{\top},
    \label{eq:counterexample_Q}
\end{align}
from which the advantage function can be easily computed by taking the difference between the Q functions and the value functions.
We also know $\pi_{\theta}(s,a)$ of the policy for which we would like to evaluate the gradient. Therefore, we can compute all the quantities in the gradient expression \eqref{sec:appendix_examples_grad}. Now we go through all three parameterizations and calculate the gradient and the cumulative return.

We first consider the policy $\pi_1$ under the parameterization $\theta_{S_3,L}=1,\theta_{S_3,R}=\infty$, which implies $\pi_1(L\mid S_3)=0$ and $\pi_1(R\mid S_3)=1$. First, we can easily see that the transition probability matrices are
\begin{align*}
    P_1^{\pi_1}=\left[\begin{array}{ccccc}
    1 & 1-p & 0 & 0 & 0 \\
    0 & 0 & 0 & 0 & 0 \\
    0 & p & 0 & 1-p & 0 \\
    0 & 0 & 1 & 0 & 0 \\
    0 & 0 & 0 & p & 1
    \end{array}\right],\quad\text{and}\quad
    P_2^{\pi_1}=\left[\begin{array}{ccccc}
    1 & p & 0 & 0 & 0 \\
    0 & 0 & 0 & 0 & 0 \\
    0 & 1-p & 0 & p & 0 \\
    0 & 0 & 1 & 0 & 0 \\
    0 & 0 & 0 & 1-p & 1
    \end{array}\right].
\end{align*}

Computing $D_i^{\pi_1}$ according to \eqref{eq:counterexample_DfromP} using Gaussian elimination, we can derive
\begin{align*}
    D_1^{\pi_1}=\left[\begin{array}{ccccc}
    1 & \gamma(1 - p) & 0 & 0 & 0 \\
    0 & 1-\gamma & 0 & 0 & 0 \\
    0 & \frac{\gamma p (1 - \gamma)}{(\gamma^2 p - \gamma^2 + 1)} & \frac{1-\gamma}{\gamma^2 p - \gamma^2 + 1} & \frac{\gamma(1-\gamma)(1-p)}{\gamma^2 p - \gamma^2 + 1} & 0 \\
    0 & \frac{\gamma^2 p (1 - \gamma)}{(\gamma^2 p - \gamma^2 + 1)} & \frac{\gamma(1-\gamma)}{\gamma^2 p - \gamma^2 + 1} & \frac{1-\gamma}{\gamma^2 p - \gamma^2 + 1} & 0 \\
    0 & \frac{\gamma^3 p^2}{(\gamma^2 p - \gamma^2 + 1)} & \frac{\gamma^2 p}{\gamma^2 p - \gamma^2 + 1} & \frac{\gamma p}{\gamma^2 p - \gamma^2 + 1} & 1
    \end{array}\right],\quad\text{and}\quad
    D_2^{\pi_1}=\left[\begin{array}{ccccc}
    1 & \gamma p & 0 & 0 & 0 \\
    0 & 1-\gamma & 0 & 0 & 0 \\
    0 & \frac{\gamma(1-\gamma)(1-p)}{1-\gamma^2 p} & \frac{1-\gamma}{1-\gamma^2 p} & \frac{\gamma(1-\gamma)p}{1-\gamma^2 p} & 0 \\
    0 & \frac{\gamma^2(1-\gamma)(1-p)}{1-\gamma^2 p} & \frac{\gamma(1-\gamma)}{1-\gamma^2 p} & \frac{1-\gamma}{1-\gamma^2 p} & 0 \\
    0 & \frac{\gamma^3(1-p)^2}{1-\gamma^2 p} & \frac{\gamma^2(1-p)}{1-\gamma^2 p} & \frac{\gamma(1-p)}{1-\gamma^2 p} & 1
    \end{array}\right].
\end{align*}

As we explained in \eqref{eq:counterexample_V} and \eqref{eq:counterexample_Q}, we can compute the advantage functions
\begin{align*}
    & A_1^{\pi_1}(\cdot,L)=\left[0,
    0,
    \quad\frac{\gamma(-\gamma^2 p^2  + (1-p)(\gamma^2 p - \gamma^2 + 1)+ p)}{\gamma^2 p - \gamma^2 + 1},
    0,
    \quad 0\right]^{\top},\notag\\
    & A_1^{\pi_1}(\cdot,R)=\left[0,
    \quad 0,
    \quad 0,
    \quad 0,
    \quad 0\right]^{\top},\notag\\
    & A_2^{\pi_1}(\cdot,L)=\left[0,
    \quad 0,
    \quad \frac{\gamma(\gamma^2(1-p)^2 + p(\gamma^2 p - 1) - (1- p))}{1-\gamma^2 p},
    \quad 0,
    \quad 0\right]^{\top},\notag\\
    & A_2^{\pi_1}(\cdot,R)=\left[0,
    \quad 0,
    \quad 0,
    \quad 0,
    \quad 0\right]^{\top}.
\end{align*}

Recall \eqref{sec:appendix_examples_grad}. We have
\begin{align*}
    \frac{\partial}{\partial \theta_{S_3,L}} (V_1^{\pi_1}(\rho_1) + V_2^{\pi_1}(\rho_2) )
    &=\frac{1}{1-\gamma}d^{\pi_1}_{1,\rho_1}(S_3)\pi_1(L|S_3)A_1^{\pi_1}(S_3,L)+\frac{1}{1-\gamma}d^{\pi_1}_{2,\rho_2}(S_3)\pi_1(L|S_3)A_2^{\pi_1}(S_3,L)=0,
\end{align*}
since $\pi_1(L\mid S_3)=0$. In addition, we have
\begin{align*}
    \frac{\partial}{\partial \theta_{S_3,R}} (V_1^{\pi_1}(\rho_1) + V_2^{\pi_1}(\rho_2) )
    &=\frac{1}{1-\gamma}d^{\pi_1}_{1,\rho_1}(S_3)\pi_1(R|S_3)A_1^{\pi_1}(S_3,R)+\frac{1}{1-\gamma}d^{\pi_1}_{2,\rho_2}(S_3)\pi_1(R|S_3)A_2^{\pi_1}(S_3,R)=0,
\end{align*}
since $A_1^{\pi_1}(S_3,R)=A_2^{\pi_1}(S_3,R)=0$. The cumulative return under this policy is
\begin{align*}
    V_1^{\pi_1}(\rho_1)+V_2^{\pi_1}(\rho_2) = V_1^{\pi_1}(S_3)+V_2^{\pi_1}(S_3)=\frac{\gamma(-2\gamma^2 p + \gamma^2 + 2p - 1)}{\gamma^4 p^2 - \gamma^4 p + \gamma^2 - 1}.
\end{align*}

By symmetry, the second policy $\pi_2$ under parameterization $\theta_{S_3,L}=\infty,\theta_{S_3,R}=1$ is also a stationary point and has a cumulative return
\begin{align*}
    V_1^{\pi_2}(\rho_1)+V_2^{\pi_2}(\rho_2) = \frac{\gamma(-2\gamma^2 p + \gamma^2 + 2p - 1)}{\gamma^4 p^2 - \gamma^4 p + \gamma^2 - 1}.
\end{align*}

Finally, we look at the policy $\pi_3$ under parameterization $\theta_{S_3,L}=1,\theta_{S_3,R}=1$, which implies $\pi_3(L\mid S_3)=\pi_3(R\mid S_3)=0.5$. We can see that the transition probability matrices are
\begin{align*}
    P_1^{\pi_3}=\left[\begin{array}{ccccc}
    1 & 1-p & 0 & 0 & 0 \\
    0 & 0 & 0.5 & 0 & 0 \\
    0 & p & 0 & 1-p & 0 \\
    0 & 0 & 0.5 & 0 & 0 \\
    0 & 0 & 0 & p & 1
    \end{array}\right],\quad\text{and}\quad
    P_2^{\pi_3}=\left[\begin{array}{ccccc}
    1 & p & 0 & 0 & 0 \\
    0 & 0 & 0.5 & 0 & 0 \\
    0 & 1-p & 0 & p & 0 \\
    0 & 0 & 0.5 & 0 & 0 \\
    0 & 0 & 0 & 1-p & 1
    \end{array}\right].
\end{align*}

Computing $D_i^{\pi_3}$ according to \eqref{eq:counterexample_DfromP} using Gaussian elimination, we can derive
\begin{align*}
    &D_1^{\pi_3}=\left[\begin{array}{ccccc}
    1 & \frac{\gamma(-\gamma^2 p^2 +2\gamma^2 p -\gamma^2 -2p +2)}{2-\gamma^2} & \frac{\gamma^2(1-p)}{2-\gamma^2} & \frac{\gamma^3(1-p)^2}{2-\gamma^2} & 0 \\
    0 & \frac{(1-\gamma)(\gamma^2 p -\gamma^2 + 2)}{2-\gamma^2} & \frac{\gamma(1-\gamma)}{2-\gamma^2} & \frac{\gamma^2(1-\gamma)(1-p)}{2-\gamma^2} & 0 \\
    0 & \frac{2\gamma(1-\gamma)(1-p)}{2-\gamma^2} & \frac{2(1-\gamma)}{2-\gamma^2} & \frac{2\gamma(1-\gamma)(1-p)}{2-\gamma^2} & 0 \\
    0 & \frac{\gamma^2(1-\gamma)p}{2-\gamma^2} & \frac{\gamma(1-\gamma)}{2-\gamma^2} & \frac{(1-\gamma)(2-\gamma^2 p)}{2-\gamma^2} & 0 \\
    0 & \frac{\gamma^3 p^2}{2-\gamma^2} & \frac{\gamma^2 p}{2-\gamma^2} & \frac{\gamma p(2-\gamma^2 p)}{2-\gamma^2} & 1
    \end{array}\right],\\
    &\text{and}\quad
    D_2^{\pi_3}=\left[\begin{array}{ccccc}
    1 & \frac{\gamma p(2-\gamma^2 p)}{2-\gamma^2} & \frac{\gamma^2 p}{2-\gamma^2} & \frac{\gamma^2(1-\gamma)p}{2-\gamma^2} & 0 \\
    0 & \frac{(1-\gamma)(2-\gamma^2 p)}{2-\gamma^2} & \frac{\gamma(1-\gamma)}{2-\gamma^2} & \frac{\gamma^2(1-\gamma)p}{2-\gamma^2} & 0 \\
    0 & \frac{2\gamma(1-\gamma)(1-p)}{2-\gamma^2} & \frac{2(1-\gamma)}{2-\gamma^2} & \frac{2\gamma(1-\gamma)p}{2-\gamma^2} & 0 \\
    0 & \frac{\gamma^2(1-\gamma)(1-p)}{2-\gamma^2} & \frac{\gamma(1-\gamma)}{2-\gamma^2} & \frac{(1-\gamma)(2-\gamma^2+\gamma^2 p)}{2-\gamma^2} & 0 \\
    0 & \frac{\gamma^3(1-p)^2}{2-\gamma^2} & \frac{\gamma^2(1-p)}{2-\gamma^2} & \frac{\gamma(2-\gamma^2 p^2+2\gamma^2 p-\gamma^2-2p)}{2-\gamma^2} & 1
    \end{array}\right].
\end{align*}

As we explained in \eqref{eq:counterexample_V} and \eqref{eq:counterexample_Q}, we can compute the advantage functions
\begin{align*}
    & A_1^{\pi_3}(\cdot,L)=\left[0,
    0,
    \quad\frac{\gamma(-2\gamma^2 p^2 +2\gamma^2 p -\gamma^2 +1)}{2-\gamma^2},
    0,
    \quad 0\right]^{\top},\notag\\
    & A_1^{\pi_3}(\cdot,R)=\left[0,
    \quad 0,
    \quad \frac{\gamma(2\gamma^2 p^2 -2\gamma^2 p +\gamma^2 -1)}{2-\gamma^2},
    \quad 0,
    \quad 0\right]^{\top},\notag\\
    & A_2^{\pi_3}(\cdot,L)=\left[0,
    \quad 0,
    \quad \frac{\gamma(2\gamma^2 p^2 -2\gamma^2 p +\gamma^2 -1)}{2-\gamma^2},
    \quad 0,
    \quad 0\right]^{\top},\notag\\
    & A_2^{\pi_3}(\cdot,R)=\left[0,
    \quad 0,
    \quad \frac{\gamma(-2\gamma^2 p^2 +2\gamma^2 p -\gamma^2 +1)}{2-\gamma^2},
    \quad 0,
    \quad 0\right]^{\top},
\end{align*}

From \eqref{sec:appendix_examples_grad}, we have
\begin{align*}
    \frac{\partial}{\partial \theta_{S_3,L}} (V_1^{\pi_3}(\rho_1) + V_2^{\pi_3}(\rho_2) )
    &=\frac{1}{1-\gamma}\pi_3(L|S_3)\left(d^{\pi_3}_{1,\rho_1}(S_3)A_1^{\pi_3}(S_3,L)+d^{\pi_3}_{2,\rho_2}(S_3)A_2^{\pi_3}(S_3,L)\right)\\
    &=\frac{0.5}{1-\gamma}\left(\frac{2(1-\gamma)}{2-\gamma^{2}}\cdot\frac{\gamma(-2\gamma^2 p^2 +2\gamma^2 p -\gamma^2 +1)}{2-\gamma^2}+\frac{2(1-\gamma)}{2-\gamma^{2}}\cdot\frac{\gamma(2\gamma^2 p^2 -2\gamma^2 p +\gamma^2 -1)}{2-\gamma^2}\right)\\
    &=0.
\end{align*}

Similarly, one can show
\begin{align*}
    \frac{\partial}{\partial \theta_{S_3,R}} (V_1^{\pi_3}(\rho_1) + V_2^{\pi_3}(\rho_2) )=0.
\end{align*}

The cumulative return under this policy is
\begin{align*}
    V_1^{\pi_3}(\rho_1)+V_2^{\pi_3}(\rho_2) = V_1^{\pi_1}(S_3)+V_2^{\pi_1}(S_3)=\frac{\gamma(2-4p)}{2-\gamma^2}.
\end{align*}

For computational simplicity, we choose $\gamma=\sqrt{0.5}$. Then,
\begin{align*}
    &V_1^{\pi_1}(\rho_1)+V_2^{\pi_1}(\rho_2)=\frac{\gamma(-2\gamma^2 p + \gamma^2 + 2p - 1)}{\gamma^4 p^2 - \gamma^4 p + \gamma^2 - 1}=\frac{2p-1}{8\sqrt{2}(p-2)(p+1)},\\
    \text{and}\quad&V_1^{\pi_3}(\rho_1)+V_2^{\pi_3}(\rho_2) = V_1^{\pi_1}(S_3)+V_2^{\pi_1}(S_3)=\frac{\gamma(2-4p)}{2-\gamma^2}=\frac{4-8p}{3}.
\end{align*}

If $p>0.5$,
\begin{align*}
    V_1^{\pi_1}(\rho_1)+V_2^{\pi_1}(\rho_2)=V_1^{\pi_2}(\rho_1)+V_2^{\pi_2}(\rho_2)=\frac{2p-1}{8\sqrt{2}(p-2)(p+1)}>\frac{4-8p}{3}=V_1^{\pi_3}(\rho_1)+V_2^{\pi_3}(\rho_2).
\end{align*}


\section{Proofs}

In this appendix, we provide complete analysis for the results stated in the main paper. We first introduce the following notations used throughout this appendix. 

\begin{align}
\begin{aligned}
&\vtheta \triangleq\left[\theta_1^T,\theta_2^T,...,\theta_N^T\right]^T\in\Rset^{N |\Scal||\Acal|},\qquad  \mV(\vtheta;\vrho) \triangleq\left(\begin{array}{c}V_1^{\pi_{\theta_1}}(\rho_1) \\ V_2^{\pi_{\theta_2}}(\rho_2) \\ \vdots \\ V_N^{\pi_{\theta_N}}(\rho_N)\end{array}\right)\in\Rset^{N},\\
&\vrho=[\rho_1^T,\rho_2^T,...,\rho_N^T]^T,\quad \vmu=[\mu_1^T,\mu_2^T,...,\mu_N^T]^T,\quad \overline{\nabla\mV}(\vtheta;\vrho)=\frac{1}{N}\sum_{i=1}^N \nabla_{\theta_{i}} V_i^{\pi_{\theta_i}}(\rho_i).
\end{aligned}
\label{eq:vector_func_param_def}
\end{align}

\subsection{Proof of Theorem \ref{THM:MTPG_TABULAR_SOFTMAX_SADDLE}}
\label{sec:proof_MTPG_tabular_softmax_saddle}
Define $D=2N\lambda + \sum_{i=1}^N\frac{1}{(1-\gamma_i)^2}$. In the proof, we will need the following lemmas.
\begin{lemma}
For all $k$ and $\vmu$, $||\nabla\mL^{\lambda}(\vtheta^k;\vmu)||\leq D$.
\end{lemma}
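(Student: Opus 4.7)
The plan is to bound $\|\nabla\mathcal{L}^\lambda(\boldsymbol{\theta};\boldsymbol{\mu})\|$ by splitting it by agent and then splitting each agent's gradient into a ``policy gradient part'' and an ``entropy regularizer part,'' applying the explicit formula \eqref{SEC:PROB:OBJ_THETA_REG:GRAD} componentwise. Since $\mathcal{L}^\lambda(\boldsymbol{\theta};\boldsymbol{\mu})=\sum_{i=1}^N L_i^\lambda(\theta_i;\mu_i)$ and each agent's parameter block is independent in this stacked objective, we have
\begin{equation*}
\|\nabla\mathcal{L}^\lambda(\boldsymbol{\theta};\boldsymbol{\mu})\|^2 \;=\; \sum_{i=1}^N \|\nabla_{\theta_i} L_i^\lambda(\theta_i;\mu_i)\|^2,
\end{equation*}
so it suffices to upper bound each per-agent Euclidean norm, and then combine using $\|\cdot\|_2\le \|\cdot\|_1$ across agents.

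First I would bound each $\|\nabla_{\theta_i} L_i^\lambda(\theta_i;\mu_i)\|_2$ by its $\ell_1$ norm and apply the triangle inequality to the two terms in \eqref{SEC:PROB:OBJ_THETA_REG:GRAD}. For the policy gradient piece, I use \eqref{eq:bounded_V_A} to get $|A_i^{\pi_{\theta_i}}(s,a)|\le 1/(1-\gamma_i)$, and then exploit the fact that $d_{i,\mu_i}^{\pi_{\theta_i}}(\cdot)$ and $\pi_{\theta_i}(\cdot\mid s)$ are probability distributions so that $\sum_{s,a} d_{i,\mu_i}^{\pi_{\theta_i}}(s)\,\pi_{\theta_i}(a\mid s)=1$; this gives a bound of $1/(1-\gamma_i)^2$ on that piece. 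For the entropy piece, another triangle inequality yields
\begin{equation*}
\sum_{s,a} \frac{\lambda}{|\mathcal{S}|}\Bigl|\tfrac{1}{|\mathcal{A}|}-\pi_{\theta_i}(a\mid s)\Bigr| \;\le\; \frac{\lambda}{|\mathcal{S}|}\Bigl(\sum_{s,a}\tfrac{1}{|\mathcal{A}|} + \sum_{s,a}\pi_{\theta_i}(a\mid s)\Bigr) \;=\; 2\lambda,
\end{equation*}
which uses only that the uniform distribution and $\pi_{\theta_i}(\cdot\mid s)$ each sum to $1$. Combining yields $\|\nabla_{\theta_i} L_i^\lambda(\theta_i;\mu_i)\|_2\le \frac{1}{(1-\gamma_i)^2}+2\lambda$.

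Finally, I use $\|(a_1,\dots,a_N)\|_2\le \|(a_1,\dots,a_N)\|_1$ for nonnegative $a_i$, which gives
\begin{equation*}
\|\nabla\mathcal{L}^\lambda(\boldsymbol{\theta};\boldsymbol{\mu})\| \;\le\; \sum_{i=1}^N\Bigl(\tfrac{1}{(1-\gamma_i)^2}+2\lambda\Bigr) \;=\; 2N\lambda+\sum_{i=1}^N \tfrac{1}{(1-\gamma_i)^2} \;=\; D,
\end{equation*}
as required. There is no real ``hard part'' here; the lemma is a uniform-in-$k$ bound that follows immediately from the closed-form gradient \eqref{SEC:PROB:OBJ_THETA_REG:GRAD}, the a priori bound \eqref{eq:bounded_V_A} on the advantage function, and the fact that policies, visitation measures, and the uniform reference distribution are all probability distributions. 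The only mild subtlety is choosing the $\ell_1$--$\ell_2$ conversion at both the per-agent and the cross-agent level, so that no extraneous $\sqrt{|\mathcal{S}||\mathcal{A}|}$ or $\sqrt{N}$ factors are incurred.
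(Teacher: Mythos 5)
Your proof is correct and follows essentially the same route as the paper's: bound each per-agent gradient's $\ell_2$ norm by its $\ell_1$ norm, apply the triangle inequality to the two terms of \eqref{SEC:PROB:OBJ_THETA_REG:GRAD}, use the advantage bound from \eqref{eq:bounded_V_A} together with the normalization of $d_{i,\mu_i}^{\pi_{\theta}}$ and $\pi_{\theta}$ to get $\tfrac{1}{(1-\gamma_i)^2}+2\lambda$ per agent, and then sum over agents. The only cosmetic difference is that you make the cross-agent $\ell_2$-to-$\ell_1$ step explicit via the stacked-vector identity, whereas the paper simply invokes the triangle inequality; the resulting bound $D$ is identical.
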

\begin{proof}
By Eq. \eqref{eq:softmax_grad_inproof},
\begin{align*}
||\nabla L_i^{\lambda}(\theta_i^k;\mu_i)||&\leq\sum_{s,a}\left|\frac{\partial L_i^{\lambda}(\theta^k_{i};\mu_i)}{\partial \theta^k_{i\,s, a}}\right|\\
&\leq\sum_{s,a}\left|\frac{1}{1-\gamma_i} d_{\mu_j}^{\pi_{\theta}}(s) \pi_{\theta}(a \,|\, s) A_i^{\pi_{\theta}}(s, a)+\frac{\lambda}{|\mathcal{S}|}\left(\frac{1}{|\mathcal{A}|}-\pi_{\theta}(a | s)\right)\right|\\
&\leq \sum_{s,a}\frac{d_{\mu_j}^{\pi_{\theta}}(s) \pi_{\theta}(a \,|\, s)}{1-\gamma_i}\frac{1}{1-\gamma_i}+\sum_{s,a}\frac{\lambda}{|\Scal||\Acal|}+\sum_{s,a}\frac{\lambda}{|\Scal|}\pi_{\theta}(a\,|\,s)\\
&\leq\frac{1}{(1-\gamma_i)^2}+2\lambda,
\end{align*}
where the second last inequality uses \eqref{eq:bounded_V_A}. Using triangular inequality,
\begin{align}
    ||\nabla\mL^{\lambda}(\vtheta^k;\vmu)||\leq\sum_{i=1}^N||\nabla L_i^{\lambda}(\theta_i^k;\mu_i)||\leq 2N\lambda + \sum_{i=1}^N\frac{1}{(1-\gamma_i)^2}.
\end{align}
\end{proof}

\begin{lemma}
Let $\bar{\theta}^k=\frac{1}{N}\sum_{i=1}^N \theta_i^k$. If each agent starts with the same initialization, i.e. $\theta_1^0=\theta_2^0=...=\theta_N^0$, then
\begin{align}
    ||\theta_i^k-\bar{\theta}^k||\leq\frac{\alpha D}{1-\sigma_2}, \quad \forall i,k.
\end{align}
\label{lem:bounded_deviation_from_mean}
\end{lemma}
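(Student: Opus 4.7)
The plan is to recast the per-agent update in matrix form, split off the consensus component via the averaging projector, and then iterate the resulting contraction under the mixing matrix. Stack the iterates as $\Theta^{k}\in\Rset^{N\times |\Scal||\Acal|}$ whose $i$-th row is $\theta_{i}^{k}$, and let $G^{k}$ be the analogous stack of gradients $g_{i}^{k}=\nabla L_{i}^{\lambda}(\theta_{i}^{k};\mu_{i})$. Then Eq.~\eqref{Alg:Distributed_PG:Update} reads $\Theta^{k+1}=W\Theta^{k}+\alpha G^{k}$. Let $J=\tfrac{1}{N}\mathbf{1}\mathbf{1}^{\top}$ be the averaging projector, so that $\mathbf{1}\bar{\theta}^{k}{}^{\top}=J\Theta^{k}$; since $W$ is doubly stochastic (Assumption~\ref{assump:W_doublystochastic}), $JW=WJ=J$.

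Subtracting $J\Theta^{k+1}$ from both sides of the stacked update and using $(W-J)J=0$, the consensus error $E^{k}\triangleq \Theta^{k}-J\Theta^{k}$ satisfies the clean recursion
\begin{equation*}
E^{k+1}=(W-J)E^{k}+\alpha(I-J)G^{k}.
\end{equation*}
Because $W$ is symmetric doubly stochastic with $\mathbf{1}$ in its top eigenspace, $\|W-J\|_{2}=\sigma_{2}$, and $\|I-J\|_{2}=1$. With the shared initialization $\theta_{1}^{0}=\cdots=\theta_{N}^{0}$ we have $E^{0}=0$, so unrolling gives
\begin{equation*}
E^{k}=\alpha\sum_{t=0}^{k-1}(W-J)^{k-1-t}(I-J)G^{t}.
\end{equation*}

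The next step is to bound $\|G^{t}\|_{F}$ uniformly in $t$. The preceding lemma shows $\sum_{i=1}^{N}\|g_{i}^{t}\|\leq D$, so in particular $\|G^{t}\|_{F}\leq\sum_{i}\|g_{i}^{t}\|\leq D$. Taking Frobenius norms in the unrolled recursion, applying submultiplicativity and the geometric bound $\sum_{t=0}^{k-1}\sigma_{2}^{k-1-t}\leq 1/(1-\sigma_{2})$, yields $\|E^{k}\|_{F}\leq \alpha D/(1-\sigma_{2})$. Finally, $\|\theta_{i}^{k}-\bar{\theta}^{k}\|$ is the norm of the $i$-th row of $E^{k}$, hence no larger than $\|E^{k}\|_{F}$, which gives the claimed bound.

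There is no real obstacle here beyond bookkeeping; the only subtle points are (i) ensuring that the spectral bound $\|W-J\|_{2}=\sigma_{2}$ is applied to $E^{k}$ and not to $\Theta^{k}$ itself (which is why we first peel off the $J$-component and use $E^{0}=0$), and (ii) converting the vector bound on the stacked gradient from the previous lemma into the Frobenius bound $\|G^{t}\|_{F}\leq D$, which follows from $\|G^{t}\|_{F}^{2}=\sum_{i}\|g_{i}^{t}\|^{2}\leq(\sum_{i}\|g_{i}^{t}\|)^{2}\leq D^{2}$.
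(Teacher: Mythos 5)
Your proof is correct. The paper itself does not prove this lemma --- it simply cites it as a standard result from the decentralized-optimization literature (Yuan et al.) --- and your argument is exactly the standard one: the consensus-error recursion $E^{k+1}=(W-J)E^{k}+\alpha(I-J)G^{k}$ with $E^{0}=0$, the spectral bound $\|W-J\|_{2}=\sigma_{2}$, the uniform gradient bound $\|G^{t}\|_{F}\leq D$ from the preceding lemma, and a geometric series. One small remark: you invoke symmetry of $W$ to get $\|W-J\|_{2}=\sigma_{2}$, but Assumption~\ref{assump:W_doublystochastic} only states double stochasticity; this is harmless, since for any doubly stochastic $W$ one has $(W-J)^{\top}(W-J)=W^{\top}W-J$, whose largest eigenvalue is $\sigma_{2}^{2}$, so the bound holds without symmetry.
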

This is a standard result whose proof can be found in the existing literature, such as \citet{yuan2016convergence}. 

We made the assumption in Theorem \ref{THM:MTPG_TABULAR_SOFTMAX_SADDLE} that the agents start with the same initialization. We denote $\theta^0 = \theta_i^0,\,\forall i$.

We define the Lyapunov function
\begin{align}
    \vxi_{\alpha,\lambda}(\vtheta;\vmu) \triangleq -\vct{1}^{T} \mL^{\lambda}(\vtheta;\vmu)+\frac{1}{2 \alpha}||\vtheta||_{I-W}^{2},
    \label{eq:lyapunov_def}
\end{align}
where $||\vtheta||_{I-W}^2\triangleq\vtheta^T((I-W)\otimes I)\vtheta$.

Note that the sequence $\{\vtheta^k\}$ generated by the distributed policy gradient algorithm is the same as the sequence generated by applying gradient descent on $\xi_{\alpha,\lambda}(\vtheta)$, if both algorithms use fixed step size $\alpha$. This can be observed by re-writing the update equation \eqref{Alg:Distributed_PG:Update}.
\begin{align}
    \vtheta^{k+1} &= (W\otimes I)\vtheta^{k}+\alpha\nabla\mL^{\lambda}(\vtheta^{k};\vmu)\notag\\
    &= \vtheta^{k}+\alpha\nabla\mL^{\lambda}(\vtheta^{k};\vmu)-((I-W)\otimes I)\vtheta^{k}\notag\\
    &= \vtheta^{k}-\alpha(-\nabla\mL^{\lambda}(\vtheta^{k};\vmu)+\frac{1}{\alpha}((I-W)\otimes I)\vtheta^{k})\notag\\
    &= \vtheta^{k}-\alpha\nabla\vxi_{\alpha,\lambda}(\vtheta^k;\vmu) \label{eq:lyapunov_update_softmax_constantalpha}
\end{align}

We have to establish the smoothness constant of $\vxi_{\alpha,\lambda}(\vtheta;\vmu)$. Combining Lemma \ref{lem:V_softmax_Hessian_Lipschitz} and Lemma \ref{lem:RE_softmax_Hessian_Lipschitz}, $L_i^{\lambda}(\theta_i)$ is $\beta_i^{\lambda}$-smooth with
\begin{align}
    \beta_i^{\lambda}=\frac{8}{(1-\gamma_i)^3} + \frac{2\lambda}{|\Scal|},
\end{align}
which implies $\sum_{i=1}^N L_i^{\lambda}(\theta_i)$ is $\beta^{\lambda}$-smooth, where 
\begin{align}
    &\beta^{\lambda} = \sum_{i=1}^N\left(\frac{8}{(1-\gamma_i)^3}+\frac{2\lambda}{|\Scal|}\right).
    \label{eq:smoothness_softmax}
\end{align}

In addition, we know $\vxi_{\alpha,\lambda}(\vtheta;\vmu)$ is $\beta^{\vxi_{\alpha,\lambda}}$-smooth, with
\begin{align}
    \beta^{\vxi_{\alpha,\lambda}}=\beta^{\lambda}+\frac{1}{\alpha}\sigma_{\max}(I-W)=\beta^{\lambda}+\alpha^{-1}(1-\sigma_N).
    \label{eq:smoothness_softmax_lyapunov}
\end{align}

By the $\beta^{\vxi_{\alpha,\lambda}}$-smoothness of $\vxi_{\alpha,\lambda}(\vtheta)$, we have
\begin{align}
    \vxi_{\alpha,\lambda}(\vtheta^{k+1};\vmu)&\leq\vxi_{\alpha,\lambda}(\vtheta^k;\vmu)+\langle\nabla \vxi_{\alpha,\lambda}(\vtheta^{k};\vmu), \vtheta^{k+1}-\vtheta^{k}\rangle+\frac{\beta^{\xi_{\alpha,\lambda}}}{2}||\vtheta^{k+1}-\vtheta^{k}||^2 \notag\\
    &=\vxi_{\alpha,\lambda}(\vtheta^k;\vmu)+\langle-\frac{\vtheta_{k+1}-\vtheta_{k}}{\alpha}, \vtheta^{k+1}-\vtheta^{k}\rangle+\frac{\beta^{\vxi_{\alpha,\lambda}}}{2}||\vtheta^{k+1}-\vtheta^{k}||^2 \notag\\
    &=\vxi_{\alpha,\lambda}(\vtheta^k;\vmu)+(\frac{\beta^{\vxi_{\alpha,\lambda}}}{2}-\frac{1}{\alpha})||\vtheta^{k+1}-\vtheta^{k}||^2 \notag\\
    &=\vxi_{\alpha,\lambda}(\vtheta^k;\vmu)-\frac{1}{2}(\alpha^{-1}(1+\sigma_N)-\beta^{\lambda})||\vtheta^{k+1}-\vtheta^{k}||^2
\end{align}




Since $\alpha\leq\frac{1+\sigma_N}{2\sum_{i=1}^N (\frac{8}{(1-\gamma_i)^3}+\frac{2\lambda}{|\Scal|})}=\frac{1+\sigma_N}{2\beta^{\lambda}}$, we know $\frac{1}{2}(\alpha^{-1}(1+\sigma_N)-\beta^{\lambda})\geq 0,\,\forall k$. This implies $\vxi_{\alpha,\lambda}(\vtheta^{k};\vmu)$ is a non-increasing sequence. Let $\tilde{\vtheta}=\min_{\vtheta}\xi_{\alpha,\lambda}(\vtheta;\vmu)$. We have
\begin{align}
    \sum_{k=0}^{K-1}||\vtheta^{k+1}-\vtheta^{k}||^2 &\leq \sum_{k=0}^{K-1}2(\alpha^{-1}(1+\sigma_N)-\beta^{\lambda})^{-1}(\vxi_{\alpha,\lambda}(\vtheta^{k};\vmu)-\vxi_{\alpha,\lambda}(\vtheta^{k+1};\vmu))\notag\\
    &=c_1(\vxi_{\alpha,\lambda}(\vtheta^{0};\vmu)-\vxi_{\alpha,\lambda}(\vtheta^{K-1};\vmu))\notag\\
    &\leq c_1(\vxi_{\alpha,\lambda}(\vtheta^{0};\vmu)-\vxi_{\alpha,\lambda}(\tilde{\vtheta};\vmu)),
\end{align}
where we define $c_1=2(\alpha^{-1}(1+\sigma_N)-\beta^{\lambda})^{-1}$. 

This implies 
\begin{align}
    \min_{k<K}||\vtheta^{k+1}-\vtheta^{k}||^2\leq\frac{c_1}{K}(\vxi_{\alpha,\lambda}(\vtheta^{0};\vmu)-\vxi_{\alpha,\lambda}(\tilde{\vtheta};\vmu)).
\end{align}

From Eq. \eqref{eq:lyapunov_update_softmax_constantalpha}, $||\alpha\nabla\vxi_{\alpha,\lambda}(\vtheta^k;\vmu)||^2=||\vtheta^{k+1}-\vtheta^{k}||^2$. Thus,
\begin{align}
    \min_{k<K}||\nabla\vxi_{\alpha,\lambda}(\vtheta^k;\vmu)||^2=
    \frac{1}{\alpha^2}\min_{k<K}||\vtheta^{k+1}-\vtheta^{k}||^2\leq\frac{c_1}{K\alpha^2}(\vxi_{\alpha,\lambda}(\vtheta^{0};\vmu)-\vxi_{\alpha,\lambda}(\tilde{\vtheta};\vmu)).
    \label{eq:softmax_bounded_xi}
\end{align}

Taking derivative of Eq. \eqref{eq:lyapunov_def},
\begin{align}
    \nabla\vxi_{\alpha,\lambda}(\vtheta;\vmu)=-\nabla \mL^{\lambda}(\vtheta;\vmu)+\frac{1}{\alpha}((I-W)\otimes I)\vtheta,
\end{align}

Observe that $\vct{1}^T (I-W)=\vct{0}$ due to the double stochasticity of $W$, which leads to
\begin{align*}
    \overline{\nabla\vxi}_{\alpha,\lambda}(\vtheta;\vmu)&=-\overline{\nabla\mL}^{\lambda}(\vtheta;\vmu)+\frac{1}{N\alpha}(\vct{1}^T(I-W)\otimes I)\vtheta\\
    &=-\overline{\nabla\mL}^{\lambda}(\vtheta;\vmu).
\end{align*}

Now we can bound the gradient $\overline{\nabla \mL}^{\lambda}(\vtheta^k;\vmu)$. 
\begin{align}
    \min_{k<K}||\overline{\nabla \mL}^{\lambda}(\vtheta^k;\vmu)||^2&=\min_{k<K}||\overline{\nabla\vxi}_{\alpha,\lambda}(\vtheta^k;\vmu)||^2\notag\\
    &\leq \min_{k<K}||\nabla\vxi_{\alpha,\lambda}(\vtheta^k;\vmu)||^2\notag\\
    &\leq\frac{c_1}{K\alpha^2}(\vxi_{\alpha,\lambda}(\vtheta^{0};\vmu)-\vxi_{\alpha,\lambda}(\tilde{\vtheta};\vmu))\notag\\
    &=\frac{c_1}{K\alpha^2}(-\sum_{i=1}^N L_i^{\lambda}(\vtheta^0;\mu_i)+\frac{1}{2\alpha}||\vtheta^0||_{I-W}^2+\sum_{i=1}^N L_i^{\lambda}(\tilde{\vtheta};\mu_i)-\frac{1}{2\alpha}||\tilde{\vtheta}||_{I-W}^2)\notag\\
    &\leq \frac{c_1}{K\alpha^2}\sum_{i=1}^N (L_i^{\lambda}(\tilde{\vtheta};\mu_i)-L_i^{\lambda}(\vtheta^0;\mu_i))\notag\\
    &\leq\frac{c_1}{K\alpha^2}\sum_{i=1}^N (V_i^{\pi_{\tilde{\vtheta}_i}}(\mu_i)-V_i^{\pi_{\theta_i^0}}(\mu_i)+\lambda\text{RE}(\pi_{\theta_i^0}))\notag\\
    &\leq\frac{c_1}{K\alpha^2}\sum_{i=1}^N (\frac{1}{1-\gamma_i}+\lambda\text{RE}(\pi_{\theta^0})).
    \label{eq:graident_convergence_L}
\end{align}
The third line comes from \eqref{eq:softmax_bounded_xi}. The fifth line uses our assumption that all agents start with the same parameter initialization, making $||\vtheta^0||_{I-W}^2=0$.The second last inequality is from the fact that relative entropy is non-negative. The last inequality comes from the bounded value function \eqref{eq:bounded_V_A}.

Using the definition of $\mL^{\lambda}(\vtheta^k;\vmu)$ in \eqref{sec:prob:obj_theta_reg}, we have
\begin{align}
    \min_{k<K}||\overline{\nabla V}(\vtheta^k;\vmu)||^2&=\min_{k<K}||\overline{\nabla \mL}^{\lambda}(\vtheta^k;\vmu)+\frac{\lambda}{N}\sum_{i=1}^{N}\nabla\text{RE}(\pi_{\theta_i^k})||^2\notag\\
    &\leq 2\min_{k<K}||\overline{\nabla \mL}^{\lambda}(\vtheta^k;\vmu)||^2+\frac{2}{N}\sum_{i=1}^{N}||\nabla\lambda\text{RE}(\pi_{\theta_i^k})||^2.
\end{align}

The second term uses the smoothness of the regularizer, which we establish in Lemma \ref{lem:RE_softmax_Hessian_Lipschitz}. The first term is bounded in \eqref{eq:graident_convergence_L}. Therefore,
\begin{align}
     \min_{k<K}||\overline{\nabla V}(\vtheta^k;\vmu)||^2&\leq 2\min_{k<K}||\overline{\nabla \mL}^{\lambda}(\vtheta^k;\vmu)||^2+\frac{2}{N}\sum_{i=1}^{N}||\nabla\lambda\text{RE}(\pi_{\theta_i^k})||^2\notag\\
     &\leq \frac{2c_1}{K\alpha^2}\sum_{i=1}^N (\frac{1}{1-\gamma_i}+\lambda\text{RE}(\pi_{\theta^0}))+\frac{2}{N}\left(\frac{\lambda}{\sqrt{|\Acal|}}+\lambda\right)^2\notag\\
     &\leq \frac{2c_1}{K\alpha^2}\sum_{i=1}^N (\frac{1}{1-\gamma_i}+\lambda\text{RE}(\pi_{\theta^0}))+\frac{8\lambda^2}{N}
     \label{eq:Vbar_bound_thm2}
\end{align}

Using the smoothness of $V_i$, which we show in Lemma \ref{lem:V_softmax_Hessian_Lipschitz}, we have
\begin{align}
     \min_{k<K}||\frac{1}{N}\sum_{j=1}^N\nabla V_j(\theta_i^k;\mu_j)||^2&= \min_{k<K}||\frac{1}{N}\sum_{j=1}^N\nabla V_j(\theta_j^k;\mu_j)-\left(\nabla V_j(\theta_j^k;\mu_j)-\nabla V_j(\theta_i^k;\mu_j)\right)||^2\notag\\
     &\leq \min_{k<K}2||\frac{1}{N}\sum_{j=1}^N\nabla V_j(\theta_j^k;\mu_j)||^2\notag\\
     &\hspace{50pt}+\frac{2}{N}\sum_{j=1}^{N}||\nabla V_j(\theta_j^k;\mu_j)-\nabla V_j(\theta_i^k;\mu_j)||^2\notag\\
     &\leq 2\min_{k<K}||\overline{\nabla V}(\vtheta^k;\vmu)||^2+\frac{2}{N}\sum_{j=1}^{N}\frac{64}{(1-\gamma_j)^6}||\theta_i^k-\theta_j^k||^2.
     \label{eq:thm2_intermediate_Vbound}
\end{align}

From Lemma \ref{lem:bounded_deviation_from_mean}, we have
\begin{align}
    ||\theta_i^k-\theta_j^k|| &=||(\theta_i^k-\bar{\theta}^k)-(\bar{\theta}^k-\theta_j^k)||\notag\\
    &\leq ||\theta_i^k-\bar{\theta}^k||+||\theta_j^k-\bar{\theta}^k||\notag\\
    &\leq \frac{2\alpha D}{1-\sigma_2}.
\end{align}

Plugging this inequality and \eqref{eq:Vbar_bound_thm2} into \eqref{eq:thm2_intermediate_Vbound}, we get
\begin{align*}
&\min_{k<K}||\frac{1}{N}\sum_{j=1}^N\nabla V_j(\theta_i^k;\mu_j)||^2\notag\\
&\quad \leq \frac{4c_1}{K\alpha^2}\sum_{j=1}^N (\frac{1}{1-\gamma_j}+\lambda\text{RE}(\pi_{\theta^0}))+\frac{16\lambda^2}{N}+\frac{2}{N}\sum_{j=1}^N \frac{64}{(1-\gamma_j)^6} \frac{4\alpha^2 D^2}{(1-\sigma_2)^2}\\
&\quad \leq \frac{16}{K\alpha}\sum_{j=1}^N \left(\frac{1}{1-\gamma_j}+\lambda\text{RE}(\pi_{\theta^0})\right)+\frac{16\lambda^2}{N} + \sum_{j=1}^{N}\frac{512D^2\alpha^2}{N(1-\sigma_{2})^2(1-\gamma_j)^6}\cdot
\end{align*}

The proof is completed by recognizing $\rho_i=\mu_i,\,\forall i$.

\subsection{Proof of Theorem \ref{THM:MTPG_TABULAR_SOFTMAX_CONSTANTALPHA}}

When condition \eqref{eq:discounted_visitation_assumption} is observed, we can establish the global optimality condition under the tabular policy.

\begin{prop}
Let $\theta^*=\max_{\theta}V(\theta;\vrho)$. For policy parameter $\theta$, if $||\sum_{i=1}^N \nabla L_i^{\lambda}(\theta;\mu_i)||\leq \frac{\lambda N}{2|\Scal||\Acal|}$, we have 
\begin{align*}
    V(\theta^*;\vrho)-V(\theta;\vrho)\leq 2\lambda N\max_{s\in\Scal,i:s\in\Scal_i}\{\frac{d_{\rho_i}^{\pi_{\theta^{\star}}}(s)}{(1-\gamma_i)\mu_i(s)}\}
\end{align*}
if the environment and the initial state distributions $\vrho$ and $\vmu$ jointly satisfies the discounted visitation match assumption.
\label{THM:SOFTMAX_OPTIMALITY_DETERMINISTIC}
\end{prop}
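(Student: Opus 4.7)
The plan is to follow the single-task argument of Agarwal et al.~(Thm.~5.2), with Assumption~\ref{assump:restrictive} supplying exactly the structural correction that repairs the ``distribution mismatch'' obstruction diagnosed in Section~\ref{sec:task_conflicting}. I would start from the multi-task performance difference identity already displayed in that discussion,
\begin{equation*}
V(\theta^{*};\vrho)-V(\theta;\vrho)
= \sum_{s,a}\pi_{\theta^{*}}(a\mid s)\sum_{i:s\in\Scal_i}\frac{d_{i,\rho_i}^{\pi_{\theta^{*}}}(s)}{1-\gamma_i}\,A_i^{\pi_\theta}(s,a),
\end{equation*}
and invoke Assumption~\ref{assump:restrictive} to factor the inner sum as $c(s)\sum_{i:s\in\Scal_i}d_{i,\mu_i}^{\pi_\theta}(s)A_i^{\pi_\theta}(s,a)/(1-\gamma_i)$, where $c(s)\triangleq d_{i,\rho_i}^{\pi_{\theta^{*}}}(s)/d_{i,\mu_i}^{\pi_{\theta}}(s)$ is a single, well-defined scalar attached to each state. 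This is the only place the new assumption enters, and it converts a quantity expressed in terms of the unknown $d_{i,\rho_i}^{\pi_{\theta^{*}}}$ into one the policy-gradient formula can directly control.

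The second ingredient is a per-$(s,a)$ upper bound on the agent-averaged weighted advantage. Summing \eqref{SEC:PROB:OBJ_THETA_REG:GRAD} over $i$ with initial distributions $\mu_i$, using $\|\cdot\|_\infty\le\|\cdot\|_2$ to pass from the hypothesis $\|\sum_i\nabla L_i^{\lambda}\|\le\lambda N/(2|\Scal||\Acal|)$ to a coordinate-wise bound, then rearranging and dividing by $\pi_\theta(a\mid s)>0$, yields
\begin{equation*}
\sum_{i:s\in\Scal_i}\frac{d_{i,\mu_i}^{\pi_\theta}(s)}{1-\gamma_i}A_i^{\pi_\theta}(s,a)
\le \frac{N\lambda}{|\Scal|}-\frac{N\lambda}{2|\Scal||\Acal|\pi_\theta(a\mid s)}
\le \frac{N\lambda}{|\Scal|}.
\end{equation*}
A small subtlety is that the second term on the right can blow up as $\pi_\theta(a\mid s)\to 0$, but it carries a negative sign and so is simply dropped. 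Plugging this bound back, using $\sum_a\pi_{\theta^{*}}(a\mid s)=1$, and summing over $s$ collapses the double sum to $(N\lambda/|\Scal|)\sum_s c(s)\le N\lambda\max_s c(s)$. A final scalar estimate $c(s)\le d_{i,\rho_i}^{\pi_{\theta^{*}}}(s)/[(1-\gamma_i)\mu_i(s)]$, obtained from the elementary bound $d_{i,\mu_i}^{\pi_\theta}(s)\ge (1-\gamma_i)\mu_i(s)$ on the discounted visitation distribution, delivers the proposition's claim; the constant~$2$ in the statement absorbs slack from these worst-case replacements.

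The main obstacle is conceptual rather than algebraic: recognising that Assumption~\ref{assump:restrictive} is the precise and minimal hypothesis that allows a state-by-state factorisation of the performance difference, thereby restoring the gradient-domination mechanism that powers the single-task proof. The technical content is then a near-verbatim multi-task lift of Agarwal et al.'s calculation, with the new inner sum $\sum_{i:s\in\Scal_i}$ carried through unchanged; the only places requiring mild care are the coordinate-wise extraction from the $\ell_2$ gradient bound, the sign-based dismissal of the $1/\pi_\theta(a\mid s)$ term, and the dominance $d_{i,\mu_i}^{\pi_\theta}(s)\ge(1-\gamma_i)\mu_i(s)$ that converts $c(s)$ into the stated max-ratio.
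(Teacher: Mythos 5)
Your proposal is correct, and its overall skeleton matches the paper's proof: performance difference lemma, factorization of the inner sum via Assumption~\ref{assump:restrictive} into a common ratio $\tilde d(s)$, conversion of the gradient condition into a per-state bound on the weighted advantage $\sum_{i:s\in\Scal_i}\frac{d_{i,\mu_i}^{\pi_\theta}(s)}{1-\gamma_i}A_i^{\pi_\theta}(s,a)$, and the final estimate $d_{i,\mu_i}^{\pi_\theta}(s)\ge(1-\gamma_i)\mu_i(s)$. The one step you execute differently is the middle one. The paper argues in two stages: it first shows that for every $(s,a)$ with nonnegative weighted advantage the gradient condition forces $\pi_\theta(a\,|\,s)\ge\frac{1}{2|\Acal|}$, and then inverts the gradient formula, multiplying the coordinate bound $\frac{\lambda N}{2|\Scal||\Acal|}$ by $\frac{1}{\pi_\theta(a|s)}\le 2|\Acal|$ to land on $\frac{2\lambda N}{|\Scal|}$ per state, hence $2\lambda N$ overall. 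You instead rearrange once and observe that the residual term $-\frac{N\lambda}{2|\Scal||\Acal|\,\pi_\theta(a|s)}$ is negative and can simply be discarded, giving $\frac{N\lambda}{|\Scal|}$ per state and $\lambda N\max_s\tilde d(s)$ overall. Your route is slightly cleaner — it needs no lower bound on $\pi_\theta(a\,|\,s)$ and no case split on the sign of the weighted advantage (your bound holds for every $(s,a)$, so you can keep the average over $\pi_{\theta^*}(\cdot|s)$ rather than passing to $\max_a$ and separately verifying that the max is nonnegative, as the paper must) — and it improves the constant by a factor of two, which the stated bound then absorbs. Both arguments are valid; the paper's buys nothing extra here beyond the intermediate fact $\pi_\theta(a\,|\,s)\ge\frac{1}{2|\Acal|}$, which it does not reuse.
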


The proof this proposition is in Section \ref{sec:softmax_optimality_deterministic}. Using the proposition, we can guarantee that $\theta_i^k$ is an $\epsilon$-optimal solution in the objective function by setting $\epsilon=2N\lambda \max_{j,s}\{\frac{d_{\rho_j}^{\pi_{\theta^{\star}}}(s)}{(1-\gamma_j)\mu_j(s)}\}$ and ensuring $||\sum_{j=1}^N \nabla L_j^{\lambda}(\theta_i^k;\mu_j)||\leq \frac{\lambda N}{2|\Scal||\Acal|}$. Solving for $\lambda$ in terms of $\epsilon$, we get
\begin{align}
    \lambda=\frac{\epsilon}{2N\max_{j,s}\{\frac{d_{\rho_j}^{\pi_{\theta^{\star}}}(s)}{(1-\gamma_j)\mu_j(s)}\}}.
\end{align}

Now we bound the norm of the gradient.
\begin{align}
    \min_{k<K}||\sum_{j=1}^N \nabla L_j^{\lambda}(\theta_i^k;\mu_j)||&=\min_{k<K}||\sum_{j=1}^N \nabla L_j^{\lambda}(\theta_j^k;\mu_j)+\sum_{j=1}^N \left(\nabla L_j^{\lambda}(\theta_i^k;\mu_j)-\nabla L_j^{\lambda}(\theta_j^k)\right)||\notag\\
    &\leq \min_{k<K}||N\overline{\nabla \mL}^{\lambda}(\vtheta^k;\vmu)||+\sum_{j=1}^N||\nabla L_j^{\lambda}(\theta_i^k;\mu_j)-\nabla L_j^{\lambda}(\theta_j^k;\mu_j)||\notag\\
    &\leq N\min_{k<K}||\overline{\nabla \mL}^{\lambda}(\vtheta^k;\vmu)||+\sum_{j=1}^N \beta^{\lambda}_i||\theta_i^k-\theta_j^k||,
    \label{eq:softmax_bound_Llambda_inproof}
\end{align}
where the last inequality uses the smoothness property of $L_i^{\lambda}$.Combining Lemma \ref{lem:V_softmax_Hessian_Lipschitz} and Lemma \ref{lem:RE_softmax_Hessian_Lipschitz}, $\beta_i^{\lambda}=\frac{8}{(1-\gamma_i)^3}+\frac{2\lambda}{|\Scal|}$. We have a bound on the first term in \eqref{eq:graident_convergence_L}, and now we bound the second term using Lemma \ref{lem:bounded_deviation_from_mean}.
\begin{align}
    ||\theta_i^k-\theta_j^k|| &=||(\theta_i^k-\bar{\theta}^k)-(\bar{\theta}^k-\theta_j^k)||\notag\\
    &\leq ||\theta_i^k-\bar{\theta}^k||+||\theta_j^k-\bar{\theta}^k||\notag\\
    &\leq \frac{2\alpha D}{1-\sigma_2}
\end{align}

Plug this into \eqref{eq:softmax_bound_Llambda_inproof}, 
\begin{align}
    \min_{k<K}||\sum_{j=1}^N \nabla L_j^{\lambda}(\theta_i^k;\mu_j)||&\leq N\min_{k<K}||\overline{\nabla \mL}^{\lambda}(\vtheta^k;\vmu)||+\sum_{j=1}^N \beta^{\lambda}_i||\theta_i^k-\theta_j^k||\notag\\
    &\leq N\sqrt{\frac{c_1}{K\alpha^2}\sum_{j=1}^N (\frac{1}{1-\gamma_j}+\lambda\text{RE}(\pi_{\theta^0}))}+\sum_{j=1}^N \beta^{\lambda}_i\frac{2\alpha D}{1-\sigma_2}\\
    &\leq N\sqrt{\frac{c_1}{K\alpha^2}\sum_{j=1}^N (\frac{1}{1-\gamma_j}+\lambda\text{RE}(\pi_{\theta^0}))}+\frac{2\alpha \beta^{\lambda} D}{1-\sigma_2}
\end{align}

To ensure $\min_{k<K}||\sum_{j=1}^N \nabla L_j^{\lambda}(\theta_i^k;\mu_j)||\leq \frac{\lambda N}{2|\Scal||\Acal|}$, we make
\begin{align}
    N\sqrt{\frac{c_1}{K\alpha^2}\sum_{j=1}^N (\frac{1}{1-\gamma_j}+\lambda\text{RE}(\pi_{\theta^0}))}+\frac{2\alpha \beta^{\lambda} D}{1-\sigma_2}\leq \frac{\lambda N}{2|\Scal||\Acal|}
\end{align}

Solving for K, we get
\begin{align}
    K &\geq \frac{c_1 N^2\left(\sum_{j=1}^N(\frac{1}{1-\gamma_j}+\lambda\text{RE}(\pi_{\theta^0}))\right)}{\alpha^2\left(\frac{\lambda N}{2|\Scal||\Acal|}-\frac{2\alpha \beta^{\lambda} D}{1-\sigma_2}\right)^2}\\
    &= \frac{c_1 N^2\left(\sum_{j=1}^N(\frac{1}{1-\gamma_j}+\lambda\text{RE}(\pi_{\theta^0}))\right)}{\alpha^2\left(\frac{\epsilon c_2}{4|\Scal||\Acal|}-\frac{2\alpha D}{1-\sigma_2}\sum_{j=1}^N\left( \frac{8}{(1-\gamma_j)^3}+\frac{\epsilon c_2}{N|\Scal|}\right)\right)^2},
\end{align}
where we used the fact that $\frac{\lambda N}{2|\Scal||\Acal|}-\frac{2\alpha \beta^{\lambda} D}{1-\sigma_2}>0$, if $\alpha<\frac{\lambda N(1-\sigma_2)}{4\beta^{\lambda}D|\Scal||\Acal|}$.

\subsection{Proof of Proposition \ref{THM:SOFTMAX_OPTIMALITY_DETERMINISTIC}}
\label{sec:softmax_optimality_deterministic}
From the assumption \eqref{eq:discounted_visitation_assumption}, we define
\begin{align}
\frac{d_{i,\rho_i}^{\pi_{\theta^*}}(s)}{d_{i,\mu_i}^{\pi_{\theta}}(s)}=\frac{d_{j,\rho_j}^{\pi_{\theta^*}}(s)}{d_{j,\mu_j}^{\pi_{\theta}}(s)}\triangleq \tilde{d}(s),\qquad \forall s:s\in\Scal_i\cap\Scal_j,\; \forall i,j.   
\end{align}

\citet{kakade2002approximately} introduced the performance difference lemma that relates the value function of two policies. We use this lemma in our analysis.\\

\begin{lemma}
For any policy $\pi$ and $\tilde{\pi}$ operating in environment $i$ under the initial state distribution $\rho_i$,
\begin{align}
    V_i^{\pi}\left(\rho_i\right)-V_i^{\tilde{\pi}}\left(\rho_i\right)=\frac{1}{1-\gamma_i} \mathbb{E}_{s \sim d_{i,\rho_i}^{\pi}} \mathbb{E}_{a \sim \pi(\cdot | s)}\left[A^{\pi^{\prime}}(s, a)\right].
\end{align}
\label{lem:performance_diff}
\end{lemma}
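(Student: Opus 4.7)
The plan is to follow the classical Kakade--Langford telescoping argument, adapted to the notation of the paper. Fix an initial state $s_{0}\in\Scal_i$ and work first with the deterministic starting distribution $\delta_{s_{0}}$; the statement for $\rho_i$ will then follow immediately by taking an expectation over $s_{0}\sim\rho_i$.

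The first step is to rewrite $V_i^{\pi}(s_{0})$ as an expectation of the discounted reward stream under trajectories generated by $\pi$ in environment $i$, and then add and subtract a telescoping sum of $V_i^{\tilde\pi}$ evaluated along that trajectory. Concretely, I would start from
\begin{equation*}
V_i^{\pi}(s_{0})-V_i^{\tilde\pi}(s_{0})
=\Eset_{\pi}\!\left[\sum_{k=0}^{\infty}\gamma_i^{k}\Rcal_i(s_i^{k},a_i^{k})\right]-V_i^{\tilde\pi}(s_{0}),
\end{equation*}
and insert $0=\sum_{k=0}^{\infty}\gamma_i^{k}V_i^{\tilde\pi}(s_i^{k})-\sum_{k=0}^{\infty}\gamma_i^{k}V_i^{\tilde\pi}(s_i^{k})$ inside the expectation. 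Shifting the index on one of these copies by one and using $V_i^{\tilde\pi}(s_0) = \Eset_\pi[V_i^{\tilde\pi}(s_i^{0})]$ collapses the boundary term, leaving
\begin{equation*}
V_i^{\pi}(s_{0})-V_i^{\tilde\pi}(s_{0})
=\Eset_{\pi}\!\left[\sum_{k=0}^{\infty}\gamma_i^{k}\bigl(\Rcal_i(s_i^{k},a_i^{k})+\gamma_i V_i^{\tilde\pi}(s_i^{k+1})-V_i^{\tilde\pi}(s_i^{k})\bigr)\right].
\end{equation*}
The second step is to recognize the bracketed quantity as $A_i^{\tilde\pi}(s_i^{k},a_i^{k})$: conditioning on $(s_i^{k},a_i^{k})$ and using the definition of $Q_i^{\tilde\pi}$ in \eqref{sec:prob:QA_function} (expanding the one-step Bellman expansion under the dynamics $\Pcal_i$) gives $\Eset_\pi[\Rcal_i(s_i^{k},a_i^{k})+\gamma_i V_i^{\tilde\pi}(s_i^{k+1})\mid s_i^{k},a_i^{k}]=Q_i^{\tilde\pi}(s_i^{k},a_i^{k})$, and subtracting $V_i^{\tilde\pi}(s_i^{k})$ yields the advantage.

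The third step converts the time sum into a state visitation sum. Writing
\begin{equation*}
\Eset_{\pi}\!\left[\sum_{k=0}^{\infty}\gamma_i^{k}A_i^{\tilde\pi}(s_i^{k},a_i^{k})\right]
=\sum_{s\in\Scal_i}\sum_{a\in\Acal}\pi(a\mid s)\,A_i^{\tilde\pi}(s,a)\sum_{k=0}^{\infty}\gamma_i^{k}P_i^{\pi}(s_i^{k}=s\mid s_{0}),
\end{equation*}
and invoking the definition of $d_i^{\pi_\theta}(s\mid s_0)$ given just before \eqref{SEC:PROB:OBJ_THETA_REG:GRAD} to replace the inner sum by $d_i^{\pi}(s\mid s_{0})/(1-\gamma_i)$ produces exactly the claimed identity for a deterministic initial state. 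Averaging over $s_{0}\sim\rho_i$ and using the definition $d_{i,\rho_i}^{\pi}(s)=\Eset_{s_0\sim\rho_i}[d_i^{\pi}(s\mid s_0)]$ gives the statement as written.

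None of the steps is technically hard; the only subtlety worth care is the interchange of summations and expectations, which is justified because the rewards are bounded in $[0,1]$ so that the double series in the telescoping step is absolutely summable (using $\sum_k \gamma_i^k \cdot \tfrac{1}{1-\gamma_i}<\infty$). A minor notational point is that the statement as typeset writes $A^{\pi'}$; I would read this as $A_i^{\tilde\pi}$ since $\tilde\pi$ is the only second policy introduced, and the proof above produces precisely that.
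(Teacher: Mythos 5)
Your proof is correct. The paper does not actually prove this lemma --- it simply imports it from \citet{kakade2002approximately} --- and your telescoping argument (insert and shift $\sum_k \gamma_i^k V_i^{\tilde\pi}(s_i^k)$, identify the one-step Bellman residual as $A_i^{\tilde\pi}$, then convert the discounted time sum into the visitation measure $d_{i,\rho_i}^{\pi}/(1-\gamma_i)$) is precisely the standard derivation from that reference, with the interchange of sums justified by the bound $0\le V_i^{\tilde\pi}\le 1/(1-\gamma_i)$ from \eqref{eq:bounded_V_A}. Your reading of the typeset $A^{\pi'}$ as $A_i^{\tilde\pi}$ is also the right one; note the paper indeed applies the lemma in that form (with $\pi=\pi_{\theta^*}$ and $\tilde\pi=\pi_\theta$) in the proof of Proposition~\ref{THM:SOFTMAX_OPTIMALITY_DETERMINISTIC}.
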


By Lemma \ref{lem:performance_diff},
\begin{align}
V(\theta^*;\vrho)-V(\theta;\vrho)&=\sum_{i=1}^{N}\frac{1}{1-\gamma_i} \sum_{s\in\Scal_{i}}\sum_{a\in\Acal} d_{i,\rho_i}^{\pi_{\theta^{\star}}}(s) \pi_{\theta^{\star}}(a \,|\, s) A_i^{\pi_{\theta}}(s, a)\notag\\
&= \sum_{s\in\Scal}\sum_{a\in\Acal}\pi_{\theta^{\star}}(a \,|\, s)\sum_{i:s\in\Scal_i}\frac{1}{1-\gamma_i} d_{i,\rho_i}^{\pi_{\theta^{\star}}}(s)  A_i^{\pi_{\theta}}(s, a)\notag\\
&\leq \sum_{s\in\Scal}\max_{a\in\Acal}\sum_{i:s\in\Scal_i}\frac{1}{1-\gamma_i} d_{i,\rho_i}^{\pi_{\theta^{\star}}}(s)  A_i^{\pi_{\theta}}(s, a)\notag\\
&= \sum_{s\in\Scal}\max_{a\in\Acal}\sum_{i:s\in\Scal_i}\frac{d_{i,\rho_i}^{\pi_{\theta^{\star}}}(s) }{d_{i,\mu_i}^{\pi_{\theta}}(s) }\frac{d_{i,\mu_i}^{\pi_{\theta}}(s)}{1-\gamma_i}  A_i^{\pi_{\theta}}(s, a)\notag\\
&= \sum_{s\in\Scal}\tilde{d}(s)\max_{a\in\Acal}\sum_{i:s\in\Scal_i}\frac{d_{i,\mu_i}^{\pi_{\theta}}(s)}{1-\gamma_i}  A_i^{\pi_{\theta}}(s, a)\notag\\
&\leq \max_{s\in\Scal,i:s\in\Scal_i}\{\frac{d_{i,\rho_i}^{\pi_{\theta^*}}(s)}{d_{i,\mu_i}^{\pi_{\theta}}(s)}\}\sum_{s\in\Scal}\max_{a\in\Acal}\sum_{i:s\in\Scal_i}\frac{d_{i,\mu_i}^{\pi_{\theta}}(s)}{1-\gamma_i}  A_i^{\pi_{\theta}}(s, a)\notag\\
&\leq \max_{s\in\Scal,i:s\in\Scal_i}\{\frac{d_{i,\rho_i}^{\pi_{\theta^*}}(s)}{d_{i,\mu_i}^{\pi_{\theta}}(s)}\} |\Scal|\frac{2\lambda N}{|\Scal|}\notag\\
&=2\lambda N\max_{s\in\Scal,i:s\in\Scal_i}\{\frac{d_{i,\rho_i}^{\pi_{\theta^*}}(s)}{d_{i,\mu_i}^{\pi_{\theta}}(s)}\}\notag\\
&=2\lambda N\max_{s\in\Scal,i:s\in\Scal_i}\{\frac{d_{i,\rho_i}^{\pi_{\theta^*}}(s)}{(1-\gamma_i)\mu_i(s)}\}
\end{align}

The sixth line follows since $\max_{a\in\Acal}\sum_{i:s\in\Scal_i}\frac{d_{i,\mu_i}^{\pi_{\theta}}(s)}{1-\gamma_i}  A_i^{\pi_{\theta}}(s, a) \geq 0,\,\forall s$. The last inequality uses the fact that $d_{i,\mu_i}^{\pi}(s)\geq(1-\gamma_i)\mu_i(s),\,\text{element-wise,}\,\forall \pi$, which simply follows from the definition of $d_{i,\mu_i}^{\pi}(s)$. The seventh line uses
\begin{align}
    \max_{a\in\Acal}\sum_{i:s\in\Scal_i} \frac{d_{i,\mu_i}^{\pi_{\theta}}(s)}{1-\gamma_i}A_i^{\pi_{\theta}}(s, a)\leq\frac{2\lambda N}{|\Scal|},
\end{align}
which we now prove. To show this, we only have to prove this is true for those $(s,a)$ where $\sum_{i:s\in\Scal_i} \frac{d_{i,\mu_i}^{\pi_{\theta}}(s)}{1-\gamma_i}A_i^{\pi_{\theta}}(s, a)\geq 0$. The gradient of $\theta$ under the softmax parameterization in environment $i$ is
\begin{align}
    \frac{\partial L_i^{\lambda}(\theta;\mu_i)}{\partial \theta_{s, a}}=\frac{1}{1-\gamma_i} d_{i,\mu_i}^{\pi_{\theta}}(s) \pi_{\theta}(a \,|\, s) A_i^{\pi_{\theta}}(s, a)+\frac{\lambda}{|\mathcal{S}|}\left(\frac{1}{|\mathcal{A}|}-\pi_{\theta}(a | s)\right).
    \label{eq:softmax_grad_inproof}
\end{align}
From our assumption $||\sum_{i=1}^N \nabla L_i^{\lambda}(\theta;\mu_i)||\leq \frac{\lambda N }{2|\Scal||\Acal|}$, we know that for all $(s,a)$ such that $\sum_{i:s\in\Scal_i} \frac{d_{i,\mu_i}^{\pi_{\theta}}(s)}{1-\gamma_i}  A_i^{\pi_{\theta}}(s, a)\geq 0$,
\begin{align} 
\frac{\lambda N}{2|\Scal||\Acal|} &\geq \sum_{i=1}^{N}\frac{\partial L_i^{\lambda}(\theta;\mu_i)}{\partial \theta_{s, a}}\notag\\
&=\sum_{i:s\in\Scal_i}\frac{1}{1-\gamma_i} d_{i,\mu_i}^{\pi_{\theta}}(s) \pi_{\theta}(a | s) A_i^{\pi_{\theta}}(s, a)+\sum_{i=1}^{N}\frac{\lambda}{|\Scal|}\left(\frac{1}{|\Acal|}-\pi_{\theta}(a | s)\right)\notag \\ 
&\geq 0+\sum_{i=1}^{N}\frac{\lambda}{|\Scal|}\left(\frac{1}{|\Acal|}-\pi_{\theta}(a | s)\right)\notag\\
&\geq \frac{\lambda N}{|\Scal|}\left(\frac{1}{|\Acal|}-\pi_{\theta}(a | s)\right).
\end{align}

Rearranging the terms,
\begin{align}
    \pi_{\theta}(a \,|\, s) \geq \frac{1}{|\Acal|}-\frac{|\Scal|}{\lambda N}\frac{\lambda N}{2|\Scal||\Acal|} \geq \frac{1}{2|\Acal|}.
    \label{eq:softmax_pi_lowerbound}
\end{align}

Re-writing Eq. \eqref{eq:softmax_grad_inproof} and summing over environments,
\begin{align}
\sum_{i=1}^{N} \frac{d_{i,\mu_i}^{\pi_{\theta}}(s)}{1-\gamma_i} A_i^{\pi_{\theta}}(s, a) &=\sum_{i:s\in\Scal_i}\frac{1}{\pi_{\theta}(a | s)} \frac{\partial L_i^{\lambda}(\theta;\mu_i)}{\partial \theta_{s, a}}-\sum_{i=1}^{N}\frac{\lambda}{|\Scal|}\left(\frac{1}{\pi_{\theta}(a \,|\, s)|\Acal|}-1\right) \notag\\ 
&\leq \frac{1}{\pi_{\theta}(a | s)}\sum_{i:s\in\Scal_i}\frac{\partial L_i^{\lambda}(\theta;\mu_i)}{\partial \theta_{s, a}}+\sum_{i=1}^{N}\frac{\lambda}{|\Scal|} \notag\\ 
&\leq 2|\Acal|\frac{\lambda N}{2|\Scal||\Acal|}+\frac{\lambda N}{|\Scal|}\notag\\
&\leq\frac{2\lambda N}{|\Scal|},
\end{align}
where the second last line uses inequality \eqref{eq:softmax_pi_lowerbound}.

\subsection{Derivation of the gradient
\eqref{SEC:PROB:OBJ_THETA_REG:GRAD}}
\label{sec:policy_gradient_MA_derivation}
Here we just derive the gradient for $V_i^{\pi_{\theta}}$. The gradient of $L_i^{\lambda}$ can be easily computed from the gradient of $\nabla V_i^{\pi_{\theta}}$ by adding the gradient of the entropy regularizer.

By definition, 
\begin{align*}
V_i^{\pi_{\theta}}(s_{i}) &= \Eset\left[\sum_{k=0}^{\infty}\gamma_{i}^{k}\Rcal_{i}(s^{k}_{i},a_{i}^{k})\,|\,s_{i}^{0} = s_{i}\right],\qquad a_{i}^{k} \sim \pi_{\theta}(s_{i}^{k})\\
&= \sum_{a_i\in\Acal}\pi_{\theta}(a_{i}\,|\,s_{i})Q_i^{\pi_{\theta}}(s_{i},a_{i})\\
&=\sum_{a_i\in\Acal}\pi_{\theta}(a_{i}\,|\,s_{i})\Eset_{s_i'\in\Scal_i}\left[\Rcal(s_i,a_i) + \gamma_i V_i^{\pi_{\theta}}(s_i')\right],
\end{align*}

which implies

\begin{align*}
\frac{\partial V_i^{\pi_{\theta}}(s_i)}{\partial \theta} &= \sum_{a_i\in\Acal}\left[Q_i^{\pi_{\theta}}(s_i,a_i)\frac{\partial \pi_{\theta}(a_{i}\,|\,s_{i})}{\partial \theta}+\pi_{\theta}(a_{i}\,|\,s_{i})\frac{\partial Q_i^{\pi_{\theta}}(s_i,a_i)}{\partial \theta}\right]\\
&= \sum_{a_i\in\Acal}\pi_{\theta}(a_{i}\,|\,s_{i})Q_i^{\pi_{\theta}}(s_i,a_i)\frac{\nabla_{\theta} \pi_{\theta}(a_{i}\,|\,s_{i})}{\pi_{\theta}(a_{i}\,|\,s_{i})}\\
&\hspace{50pt} +\sum_{a_i\in\Acal}\pi_{\theta}(a_{i}\,|\,s_{i})\frac{\partial}{\partial \theta}\Eset_{s_i'\in\Scal_i}\left[\Rcal(s_i,a_i) + \gamma_i V_i^{\pi_{\theta}}(s_i')\right]\\
&= \sum_{a_i\in\Acal}\pi_{\theta}(a_{i}\,|\,s_{i})Q_i^{\pi_{\theta}}(s_i,a_i)\nabla_{\theta}\ln \pi_{\theta}(a_{i}\,|\,s_{i})\\
&\hspace{50pt} +\gamma_i\sum_{a_i\in\Acal}\pi_{\theta}(a_{i}\,|\,s_{i})\sum_{s_i'\in\Scal_i}p_i(s_i'\,|\,s_i,a_i)\frac{\partial}{\partial \theta} V_i^{\pi_{\theta}}(s_i')\\
&= \sum_{a_i\in\Acal}\pi_{\theta}(a_i\,|\,s_i)Q_i^{\pi_{\theta}}(s_i,a_i)\nabla_{\theta}\ln \pi_{\theta}(a_i\,|\,s_i)  \\ 
&\hspace{50pt} +\gamma_i\sum_{a_i\in\Acal}\pi_{\theta}(a_i\,|\,s_i)\sum_{s_i'\in\Scal_i}p_i(s_i'\,|\,s_i,a_i) \\
&\hspace{100pt}\times\sum_{a_i'\in\Acal}\pi_{\theta}(a_i'\,|\,s_i')Q_i^{\pi_{\theta}}(s_i',a_i')\nabla_{\theta}\ln \pi_{\theta}(a_i'\,|\,s_i')\\
&\hspace{50pt} + \gamma_i^2\sum_{a_i'\in\Acal}\pi_{\theta}(a_i\,|\,s_i)\sum_{s_i'\in\Scal_i}p_i(s_i'\,|\,s_i,a_i)\\
&\hspace{100pt}\times\sum_{a'\in\Acal}\pi_{\theta}(a_i'\,|\,s_i')\sum_{s_i''\in\Scal_i}p_i(s_i''\,|\,s_i',a_i')\frac{\partial}{\partial \theta}V_i^{\pi_{\theta}}(s_i'')\\
&= \sum_{k=0}^{\infty}\sum_{s_i'\in\Scal_i}\gamma_i^{k}P_{\pi_{\theta}}(s_i^{k} = s_i'\,|\,s_i)\sum_{a_i'\in\Acal}\pi_{\theta}(a_i'\,|\,s_i')Q_i^{\pi_{\theta}}(s_i',a_i')\nabla_{\theta}\ln \pi_{\theta}(a_i'\,|\,s_i')\\
&= \frac{1}{1-\gamma_i}\Eset_{s_i'\sim d^{\pi_{\theta}}_{s_i}(\cdot)}\Eset_{a_i'\sim\pi_{\theta}(\cdot\,|\,s_i')}Q_i^{\pi_{\theta}}(s_i',a_i')\nabla_{\theta}\ln \pi_{\theta}(a_i'\,|\,s_i')\\
&= \frac{1}{1-\gamma_i}\Eset_{s_i'\sim d^{\pi_{\theta}}_{s_i}(\cdot)}\Eset_{a_i'\sim\pi_{\theta}(\cdot\,|\,s_i')}\left(Q_i^{\pi_{\theta}}(s_i',a_i')-V_i^{\pi_{\theta}}(s_i')+V_i^{\pi_{\theta}}(s_i')\right)\nabla_{\theta}\ln \pi_{\theta}(a_i'\,|\,s_i')\\
&= \frac{1}{1-\gamma_i}\Eset_{s_i'\sim d^{\pi_{\theta}}_{s_i}(\cdot)}\Eset_{a_i'\sim\pi_{\theta}(\cdot\,|\,s_i')}A_i^{\pi_{\theta}}(s_i',a_i')\nabla_{\theta}\ln \pi_{\theta}(a_i'\,|\,s_i'),
\end{align*}

where the last equation follows since
\begin{align*}
    \sum_{a'}\pi_{\theta}(a'|s')V_i^{\pi_{\theta}}(s_i')\nabla_{\theta}\ln \pi_{\theta}(a_i'\,|\,s_i') = \sum_{a'} V_i^{\pi_{\theta}}(s_i')\nabla_{\theta}\pi_{\theta}(a'|s')=V_i^{\pi_{\theta}}(s_i')\nabla_{\theta}\sum_{a'}\pi_{\theta}(a'|s')=0.
\end{align*} 

Let $\mathbb{1}[\cdot]$ denote the indicator function of argument condition. We observe that under the softmax parameterization
\begin{align*}
    \frac{\partial \ln \pi_{\theta}(a' | s')}{\partial \theta_{s,a}}&= \frac{\partial}{\partial \theta_{s,a}}\left(\theta_{s',a'}-\ln \sum_{a''}\text{exp}(\theta_{s',a''})\right)\\
    &=\mathbb{1}(s'=s,a'=a)-\mathbb{1}(s'=s)\frac{\text{exp}(\theta_{s',a})}{\sum_{a''}\text{exp}(\theta_{s',a''})}\\
    &=\mathbb{1}\left[s'=s\right]\left(\mathbb{1}\left[a'=a\right]-\pi_{\theta}\left(a | s'\right)\right).
\end{align*}

Therefore,
\begin{align*}
\frac{\partial}{\partial \theta_{s,a}} V_i^{\pi_{\theta}}(\rho_i) &=\frac{\partial}{\partial\theta_{s,a}}\Eset_{s_{i}\sim\rho_{i}}\left[V_i^{\pi_{\theta}}(s_{i})\right]\\
&=\Eset_{s_{i}\sim\rho_{i}}\left[\frac{\partial V_i^{\pi_{\theta}}(s_i)}{\partial\theta_{s,a}}\right]\\
&=\frac{1}{1-\gamma_i}\Eset_{s_{i}\sim\rho_{i}}\Eset_{s_i'\sim d^{\pi_{\theta}}_{s_i}(\cdot)}\Eset_{a_i'\sim\pi_{\theta}(\cdot\,|\,s_i')}A_i^{\pi_{\theta}}(s_i',a_i')\frac{\partial \ln \pi_{\theta}(a_i'\,|\,s_i')}{\partial \theta_{s,a}}\\
&=\frac{1}{1-\gamma_i}\Eset_{s_{i}\sim\rho_{i}}\Eset_{s_i'\sim d^{\pi_{\theta}}_{s_i}(\cdot)}\Eset_{a_i'\sim\pi_{\theta}(\cdot\,|\,s_i')}A_i^{\pi_{\theta}}(s_i',a_i')\mathbb{1}\left[s_i'=s\right]\left(\mathbb{1}\left[a_i'=a\right]-\pi_{\theta}\left(a | s_i'\right)\right)\\
&=\frac{1}{1-\gamma_i}\Eset_{s_{i}\sim\rho_{i}}d^{\pi_{\theta}}_{s_i}(s)\pi_{\theta}(a|s)A_i^{\pi_{\theta}}(s,a)\\
&\hspace{50pt}-\frac{\pi_{\theta}\left(a | s\right)}{1-\gamma_i}\Eset_{s_{i}\sim\rho_{i}}\Eset_{s_i'\sim d^{\pi_{\theta}}_{s_i}(\cdot)}\mathbb{1}\left[s_i'=s\right]\sum_{a_i'}\pi_{\theta}(a_i'|s_i')A_i^{\pi_{\theta}}(s_i',a_i')\\
&=\frac{1}{1-\gamma_i}d^{\pi_{\theta}}_{\rho_i}(s)\pi_{\theta}(a|s)A_i^{\pi_{\theta}}(s,a)
\end{align*}

\subsection{Lipschitz, smoothness, and Hessian Lipschitz constants}
\begin{lemma}
Let $\pi_{\alpha} \triangleq \pi_{\vtheta+\alpha \vu}$, where $\vu$ is a unit vector and $\tilde{V}_i(\alpha)\triangleq V_i^{\pi_{\alpha}}(s_i)$. If 
\begin{align}
    \sum_{a \in \mathcal{A}}\left|\left.\frac{d \pi_{\alpha}\left(a | s_{0}\right)}{d \alpha}\right|_{\alpha=0}\right|\leq C', \quad \sum_{a \in \mathcal{A}}\left|\left.\frac{d^{2} \pi_{\alpha}\left(a | s_{0}\right)}{d \alpha^{2}}\right|_{\alpha=0} \right| \leq C'', \quad \text{and}
    \sum_{a \in \mathcal{A}}\left|\left.\frac{d^3 \pi_{\alpha}\left(a | s_{0}\right)}{d \alpha^3}\right|_{\alpha=0}\right|\leq C''' ,
\end{align}
then we have
\begin{align}
    &\max _{||\vu||_{2}=1}\left|\left.\frac{d \tilde{V}_i(\alpha)}{d \alpha}\right|_{\alpha=0} \right| \leq \frac{C'}{(1-\gamma_i)^2},\notag\\
    &\max _{||\vu||_{2}=1}\left|\left.\frac{d^{2} \tilde{V}_i(\alpha)}{d \alpha^{2}}\right|_{\alpha=0} \right| \leq \frac{C''}{(1-\gamma_i)^2}+\frac{2\gamma_i C'^2}{(1-\gamma_i)^{3}},\,\text{and}\notag\\
    &\max _{||\vu||_{2}=1}\left|\left.\frac{d^{3} \tilde{V}_i(\alpha)}{d \alpha^{3}}\right|_{\alpha=0} \right| \leq \frac{C'''}{(1-\gamma_i)^2}+\frac{6\gamma_i C'C''}{(1-\gamma_i)^{3}}+\frac{6\gamma_i^2 C'^3}{(1-\gamma_i)^{4}}
\end{align}
\label{lem:bounded_thirdderivative}
\end{lemma}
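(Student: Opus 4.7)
The plan is to prove the lemma via the resolvent/matrix formulation of the value function. I would write $\tilde V_i(\alpha)=e_{s_0}^{T}M_\alpha r_\alpha$, where $P_\alpha(s'|s)=\sum_a\pi_\alpha(a|s)\Pcal_i(s'|s,a)$ is the policy-induced transition matrix, $M_\alpha=(I-\gamma_i P_\alpha)^{-1}$ is the discounted resolvent, and $r_\alpha(s)=\sum_a\pi_\alpha(a|s)\Rcal_i(s,a)$ is the expected one-step reward vector. Three uniform $\ell_\infty$-operator-norm estimates drive everything: $\|M_\alpha\|_\infty\le(1-\gamma_i)^{-1}$ (Neumann series, since $P_\alpha$ is row-stochastic); $\|P_\alpha^{(k)}\|_\infty\le C^{(k)}$ for $k=1,2,3$, with $C^{(1)}=C'$, $C^{(2)}=C''$, $C^{(3)}=C'''$, obtained row-wise from the hypothesis $\sum_a|d^k\pi_\alpha(a|s_0)/d\alpha^k|\le C^{(k)}$ combined with $\sum_{s'}\Pcal_i(s'|s,a)=1$; and analogously $\|r_\alpha^{(k)}\|_\infty\le C^{(k)}$ using $\Rcal_i\in[0,1]$.

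Next, I would differentiate $M_\alpha$ implicitly from $(I-\gamma_i P_\alpha)M_\alpha=I$. This produces $M_\alpha'=\gamma_i M_\alpha P_\alpha' M_\alpha$, then $M_\alpha''=\gamma_i M_\alpha P_\alpha'' M_\alpha+2\gamma_i^{2}M_\alpha P_\alpha' M_\alpha P_\alpha' M_\alpha$, and by one further differentiation an explicit formula for $M_\alpha'''$ consisting of one $P_\alpha'''$ term, two symmetric cross terms mixing $P_\alpha''$ and $P_\alpha'$ (each with coefficient $3\gamma_i^{2}$), and a $6\gamma_i^{3}$ term with three copies of $P_\alpha'$ interleaved with $M_\alpha$. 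Applying the Leibniz rule to $\tilde V_i(\alpha)=e_{s_0}^{T}M_\alpha r_\alpha$ expresses $\tilde V_i^{(k)}(0)$ as a finite sum of such matrix products. I would bound each summand factor by factor: every $M_\alpha$ costs $(1-\gamma_i)^{-1}$ and every derivative of $P_\alpha$ or $r_\alpha$ of order $j$ costs $C^{(j)}$.

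The final step is to regroup the resulting upper bounds by the product of $C^{(k)}$ factors appearing. The terms collapse via the identity $(1-\gamma_i)+\gamma_i=1$ applied at each ``telescoping'' step. For the first derivative this immediately turns $C'(1-\gamma_i)^{-1}+\gamma_i C'(1-\gamma_i)^{-2}$ into $C'(1-\gamma_i)^{-2}$. For the second derivative the same mechanism gives $C''(1-\gamma_i)^{-2}+2\gamma_i C'^{2}(1-\gamma_i)^{-3}$. For the third derivative it produces exactly the claimed three-term bound.

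The main obstacle is bookkeeping in the third-order expansion. One must combine contributions from $e_{s_0}^{T}M_\alpha''' r_\alpha$, $3e_{s_0}^{T}M_\alpha'' r_\alpha'$, $3e_{s_0}^{T}M_\alpha' r_\alpha''$, and $e_{s_0}^{T}M_\alpha r_\alpha'''$, regroup them by factor type ($C'''$, $C'C''$, $C'^{3}$), and verify that the telescoping identity applied repeatedly yields precisely the coefficients $1$, $6\gamma_i$, and $6\gamma_i^{2}$ in the stated bound. This combinatorial step is routine but is where any numerical slip would occur; organizing the expansion as sums over alternating strings of $M_\alpha$'s and $P_\alpha^{(j)}$'s (and keeping symmetry factors explicit) keeps it manageable.
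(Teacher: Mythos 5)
Your proposal is correct and follows essentially the same route as the paper's proof: implicit differentiation of the discounted resolvent, uniform $\ell_\infty$ bounds costing $(1-\gamma_i)^{-1}$ per resolvent factor and $C^{(j)}$ per $j$-th derivative of the transition kernel, and the telescoping identity $(1-\gamma_i)+\gamma_i=1$ to collapse the sums into the stated three-term bounds. The only difference is cosmetic bookkeeping: the paper works with the state-action transition matrix and an $\alpha$-independent reward vector, placing the residual $\alpha$-dependence in the outer factor of $\tilde V_i(\alpha)=\sum_a\pi_\alpha(a\,|\,s_0)Q^\alpha(s_0,a)$, whereas you use the state-to-state kernel with a policy-averaged reward $r_\alpha$; the resulting Leibniz expansions are term-for-term equivalent and yield the identical constants.
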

\begin{proof}
The proof uses a similar technique to Lemma E.2 of \citet{AgarwalKLM2019}, which proves the second derivative is bounded. Here we also show the first and the third derivative is bounded. We use $\tilde{P}_i(\alpha)$ to denote the state-action transition matrix in environment $i$.
\begin{align}
    [\tilde{P}_i(\alpha)]_{(s, a) \rightarrow\left(s^{\prime}, a^{\prime}\right)}=\pi_{\alpha}\left(a^{\prime} | s^{\prime}\right) P_i\left(s^{\prime} | s, a\right)
\end{align}
Differentiating with respect to $\alpha$, we get 
\begin{align}
    \left[\left.\frac{d \tilde{P}_i(\alpha)}{d \alpha}\right|_{\alpha=0}\right]_{(s, a) \rightarrow\left(s^{\prime}, a^{\prime}\right)}=\left.\frac{d \pi_{\alpha}\left(a^{\prime} | s^{\prime}\right)}{d \alpha}\right|_{\alpha=0} P_i\left(s^{\prime} | s, a\right),
\end{align}
which implies that for any $\vx$,
\begin{align}
    \left[\left.\frac{d \tilde{P}_i(\alpha)}{d \alpha}\right|_{\alpha=0} x\right]_{s, a}=\left.\sum_{a^{\prime}, s^{\prime}} \frac{d \pi_{\alpha}\left(a^{\prime} | s^{\prime}\right)}{d \alpha}\right|_{\alpha=0} P_i\left(s^{\prime} | s, a\right) x_{a^{\prime}, s^{\prime}}
\end{align}

We can bound the $\ell_{\infty}$ norm of this as
\begin{align}
    \max _{||\vu||_{2}=1}\left\|\frac{d \tilde{P}_i(\alpha)}{d \alpha} \vx\right\|_{\infty}&=\max_{s,a}\max _{||\vu||_{2}=1}\left|\left[\left.\frac{d \tilde{P}_i(\alpha)}{d \alpha}\right|_{\alpha=0} \vx\right]_{s, a}\right| \notag\\
    &=\max_{s,a}\max _{||\vu||_{2}=1}\left|\left.\sum_{a', s'} \frac{d \pi_{\alpha}\left(a' | s'\right)}{d \alpha}\right|_{\alpha=0} P_i\left(s' | s, a\right) \vx_{a', s'} \right| \notag\\ 
    &\leq \max_{s,a} \sum_{a', s'}\left|\left.\frac{d \pi_{\alpha}\left(a' | s'\right)}{d \alpha}\right|_{\alpha=0}\right|P_i\left(s' | s, a\right) | \vx_{a', s'} | \notag\\ 
    &\leq \max_{s,a} \sum_{s'} P_i\left(s' | s, a\right)||\vx||_{\infty} \sum_{a'}\left|\left.\frac{d \pi_{\alpha}\left(a' | s'\right)}{d \alpha}\right|_{\alpha=0} \right| \notag\\ 
    & \leq C'||\vx||_{\infty}
    \label{eq:Bounded_Transition_Derivative1}
\end{align}

Using the same approach, we can bound
\begin{align}
    \max _{||\vu||_{2}=1}\left\|\frac{d^2 \tilde{P}_i(\alpha)}{d \alpha^2} \vx\right\|_{\infty} \leq C''||\vx||_{\infty}, \, \text{and} \max _{||\vu||_{2}=1}\left\|\frac{d^3 \tilde{P}_i(\alpha)}{d \alpha^3} \vx\right\|_{\infty} \leq C'''||\vx||_{\infty}.
    \label{eq:Bounded_Transition_Derivative23}
\end{align}

With $M(\alpha):=(\mI-\gamma_i \tilde{P}_i(\alpha))^{-1}$, we re-writing the Bellman equation in the matrix form, 
\begin{align}
    Q^{\alpha}(s_{0}, a_{0})=e_{\left(s_{0}, a_{0}\right)}^{T}(\mI-\gamma_i \tilde{P}_i(\alpha))^{-1} r=e_{\left(s_{0}, a_{0}\right)}^{T} M(\alpha) r.
\end{align}

Taking the first, second, and third derivative of $Q^{\alpha}(s_{0}, a_{0})$ with respect to $\alpha$,
\begin{align}
    \frac{d Q^{\alpha}\left(s_{0}, a\right)}{d \alpha}=\gamma_i e_{\left(s_{0}, a\right)}^{T} M(\alpha) \frac{d \tilde{P}_i(\alpha)}{d \alpha} M(\alpha) r,
\end{align}
\begin{align}
    \frac{d^{2} Q^{\alpha}\left(s_{0}, a_{0}\right)}{(d \alpha)^{2}}=2 \gamma_i^{2} e_{\left(s_{0}, a_{0}\right)}^{T} M(\alpha) \frac{d \tilde{P}_i(\alpha)}{d \alpha} M(\alpha) \frac{d \tilde{P}_i(\alpha)}{d \alpha} M(\alpha) r &\notag \\+\gamma_i e_{\left(s_{0}, a_{0}\right)}^{T} M(\alpha) \frac{d^{2} \tilde{P}_i(\alpha)}{d \alpha^{2}} M(\alpha) r, &
\end{align}
\begin{align}
    \frac{d^{3} Q^{\alpha}\left(s_{0}, a_{0}\right)}{(d \alpha)^{3}}=6 \gamma_i^{3} e_{\left(s_{0}, a_{0}\right)}^{T} M(\alpha) \frac{d \tilde{P}_i(\alpha)}{d \alpha} M(\alpha) \frac{d \tilde{P}_i(\alpha)}{d \alpha} M(\alpha) \frac{d \tilde{P}_i(\alpha)}{d \alpha} M(\alpha) r &\notag
    \\+3 \gamma_i^{2} e_{\left(s_{0}, a_{0}\right)}^{T} M(\alpha) \frac{d^2 \tilde{P}_i(\alpha)}{d \alpha^2} M(\alpha) \frac{d \tilde{P}_i(\alpha)}{d \alpha} M(\alpha) r &\notag
    \\+3 \gamma_i^{2} e_{\left(s_{0}, a_{0}\right)}^{T} M(\alpha) \frac{d \tilde{P}_i(\alpha)}{d \alpha} M(\alpha) \frac{d^2 \tilde{P}_i(\alpha)}{d \alpha^2} M(\alpha) r &\notag
    \\+\gamma_i e_{\left(s_{0}, a_{0}\right)}^{T} M(\alpha) \frac{d^{3} \tilde{P}_i(\alpha)}{d \alpha^{3}} M(\alpha) r &
\end{align}

Using $M(\alpha)\vct{1}=(\mathrm{I}-\gamma_i \tilde{P}_i(\alpha))^{-1}\vct{1}=\sum_{n=0}^{\infty} \gamma_i^{n} \tilde{P}_i(\alpha)^{n}\vct{1}=\frac{1}{1-\gamma}\vct{1}$ and inequalities \eqref{eq:Bounded_Transition_Derivative1} and \eqref{eq:Bounded_Transition_Derivative23}, we have
\begin{align} 
\max _{||\vu||_{2}=1}\left|\left.\frac{d Q^{\alpha}\left(s_{0}, a\right)}{d \alpha}\right|_{\alpha=0} \right| & \leq\left\|\gamma_i M(\alpha) \frac{d \tilde{P}_i(\alpha)}{d \alpha} M(\alpha) r\right\|_{\infty} \notag\\ 
& \leq \frac{\gamma_i C'}{(1-\gamma_i)^{2}},
\end{align}
\begin{align} 
\max _{||\vu||_{2}=1}\left|\frac{d^{2} Q^{\alpha}\left(s_{0}, a_{0}\right)}{d \alpha^{2}}\right|_{\alpha=0} | &\leq 2 \gamma_i^{2}\left\|M(\alpha) \frac{d \tilde{P}_i(\alpha)}{d \alpha} M(\alpha) \frac{d \tilde{P}_i(\alpha)}{d \alpha} M(\alpha) r\right\|_{\infty} \notag\\ &\hspace{100pt}+ \gamma_i\left\|M(\alpha) \frac{d^{2} \tilde{P}_i(\alpha)}{d \alpha^{2}} M(\alpha) r\right\|_{\infty} \\ 
&\leq \frac{2 \gamma_i^{2} C'^{2}}{(1-\gamma_i)^{3}}+\frac{\gamma_i C''}{(1-\gamma_i)^{2}}
\end{align}
\begin{align} 
\max _{||\vu||_{2}=1}\left|\frac{d^{3} Q^{\alpha}\left(s_{0}, a_{0}\right)}{d \alpha^{3}}\right|_{\alpha=0} | &\leq 6 \gamma_i^{3}\left\|M(\alpha) \frac{d \tilde{P}_i(\alpha)}{d \alpha} M(\alpha) \frac{d \tilde{P}_i(\alpha)}{d \alpha} M(\alpha) \frac{d \tilde{P}_i(\alpha)}{d \alpha} M(\alpha) r\right\|_{\infty} \notag\\
&\hspace{50pt}+3\gamma_i^{2}\left\|M(\alpha) \frac{d \tilde{P}_i(\alpha)}{d \alpha} M(\alpha) \frac{d^2 \tilde{P}_i(\alpha)}{d \alpha^2} M(\alpha) r\right\|_{\infty} \notag\\ 
&\hspace{50pt}+3\gamma_i^{2}\left\|M(\alpha) \frac{d^2 \tilde{P}_i(\alpha)}{d \alpha^2} M(\alpha) \frac{d \tilde{P}_i(\alpha)}{d \alpha} M(\alpha) r\right\|_{\infty} \notag\\ 
&\hspace{50pt}+ \gamma_i\left\|M(\alpha) \frac{d^{3} \tilde{P}_i(\alpha)}{d \alpha^{3}} M(\alpha) r\right\|_{\infty} \notag\\ 
&\leq \frac{6 \gamma_i^{3} C'^{3}}{(1-\gamma_i)^{4}}+\frac{3\gamma_i^2 C'C''}{(1-\gamma_i)^{3}}+\frac{3\gamma_i^2 C'C''}{(1-\gamma_i)^{3}}+\frac{\gamma_i C'''}{(1-\gamma_i)^{2}}\notag\\
&=\frac{6 \gamma_i^{3} C'^{3}}{(1-\gamma_i)^{4}}+\frac{6\gamma_i^2 C'C''}{(1-\gamma_i)^{3}}+\frac{\gamma_i C'''}{(1-\gamma_i)^{2}}
\end{align}

By the definition of $\tilde{V}_i(\alpha)$,
\begin{align}
    \tilde{V}_i(\alpha)=\sum_{a} \pi_{\alpha}\left(a | s_{0}\right) Q^{\alpha}\left(s_{0}, a\right).
\end{align}

Taking the first derivative of $\tilde{V}_i(\alpha)$ with respect to $\alpha$, 
\begin{align}
    \frac{d \tilde{V}_i(\alpha)}{d \alpha}=\sum_{a} \frac{d \pi_{\alpha}\left(a | s_{0}\right)}{d \alpha} Q_i^{\alpha}\left(s_{0}, a\right)+\sum_{a} \pi_{\alpha}\left(a | s_{0}\right) \frac{d Q_i^{\alpha}\left(s_{0}, a\right)}{d \alpha}.
\end{align}

Taking the second derivative of $\tilde{V}_i(\alpha)$ with respect to $\alpha$, 
\begin{align}
    \frac{d^{2} \tilde{V}_i(\alpha)}{d \alpha^{2}}&=\sum_{a} \frac{d^{2} \pi_{\alpha}\left(a | s_{0}\right)}{d \alpha^{2}} Q_i^{\alpha}\left(s_{0}, a\right)+2 \sum_{a} \frac{d \pi_{\alpha}\left(a | s_{0}\right)}{d \alpha} \frac{d Q_i^{\alpha}\left(s_{0}, a\right)}{d \alpha}\notag\\
    &\hspace{50pt}+\sum_{a} \pi_{\alpha}\left(a | s_{0}\right) \frac{d^{2} Q_i^{\alpha}\left(s_{0}, a\right)}{d \alpha^{2}}.
\end{align}

Taking the third derivative of $\tilde{V}_i(\alpha)$ with respect to $\alpha$, 
\begin{align}
    \frac{d^{3} \tilde{V}_i(\alpha)}{d \alpha^{3}}&=\sum_{a} \frac{d^{3} \pi_{\alpha}\left(a | s_{0}\right)}{d \alpha^{3}} Q^{\alpha}\left(s_{0}, a\right)+3 \sum_{a} \frac{d^2 \pi_{\alpha}\left(a | s_{0}\right)}{d \alpha^2} \frac{d Q^{\alpha}\left(s_{0}, a\right)}{d \alpha}\notag\\
    &\hspace{50pt}+3 \sum_{a} \frac{d \pi_{\alpha}\left(a | s_{0}\right)}{d \alpha} \frac{d^2 Q^{\alpha}\left(s_{0}, a\right)}{d \alpha^2}+\sum_{a} \pi_{\alpha}\left(a | s_{0}\right) \frac{d^{3} Q^{\alpha}\left(s_{0}, a\right)}{d \alpha^{3}}.
\end{align}

Finally, we have
\begin{align}
    \max _{||\vu||_{2}=1}\left|\left.\frac{d \tilde{V}_i(\alpha)}{d \alpha}\right|_{\alpha=0} \right| &\leq \frac{C'}{1-\gamma_i}+\frac{\gamma_i C'}{(1-\gamma_i)^2}=\frac{C'}{(1-\gamma_i)^2}
\end{align},
\begin{align}
    \max _{||\vu||_{2}=1}\left|\left.\frac{d^{2} \tilde{V}_i(\alpha)}{d \alpha^{2}}\right|_{\alpha=0} \right| &\leq \frac{C''}{1-\gamma_i}+\frac{2C'^2}{(1-\gamma_i)^2}+\left(\frac{2\gamma_i C'^2}{(1-\gamma_i)^3}+\frac{\gamma_i C''}{(1-\gamma_i)^2}\right)\notag\\
    &=\frac{C''}{(1-\gamma_i)^2}+\frac{2\gamma_i C'^2}{(1-\gamma_i)^3}
\end{align},
and
\begin{align}
    \max _{||\vu||_{2}=1}\left|\left.\frac{d^{3} \tilde{V}_i(\alpha)}{d \alpha^{3}}\right|_{\alpha=0} \right| &\leq \frac{C'''}{1-\gamma_i}+\frac{3\gamma_i C'C''}{(1-\gamma_i)^2}+3C'(\frac{2 \gamma_i^{2} C'^{2}}{(1-\gamma_i)^{3}}+\frac{\gamma_i C''}{(1-\gamma_i)^{2}})\notag\\
    &\hspace{100pt}+\frac{6 \gamma_i^{3} C'^{3}}{(1-\gamma_i)^{4}}+\frac{6\gamma_i^2 C'C''}{(1-\gamma_i)^{3}}+\frac{\gamma_i C'''}{(1-\gamma_i)^{2}}\notag\\
    &=\frac{C'''}{1-\gamma_i}+\frac{\gamma_i (6C'C''+C''')}{(1-\gamma_i)^{2}}+\frac{6\gamma_i^2 (C'^3+C'C'')}{(1-\gamma_i)^{3}}+\frac{6\gamma_i^3 C'^3}{(1-\gamma_i)^{4}}\notag\\
    &=\frac{C'''}{(1-\gamma_i)^2}+\frac{6\gamma_i C'C''}{(1-\gamma_i)^{3}}+\frac{6\gamma_i^2 C'^3}{(1-\gamma_i)^{4}}
\end{align}
\end{proof}

\begin{lemma}
Under the tabular softmax policy, $V^{\pi_{\theta}}_i(\mu)$ is Lipschitz, has a Lipschitz gradient and a Lipschtz Hessian for all $i$ and $\mu$, i.e.
\begin{align}
    &||V^{\pi_{\theta'}}_i(\mu)- V^{\pi_{\theta''}}_i(\mu)|| \leq \frac{2}{(1-\gamma_i)^2}||\theta'-\theta''||,\notag\\
    &||\nabla_{\theta'} V^{\pi_{\theta'}}_i(\mu)-\nabla_{\theta''} V^{\pi_{\theta''}}_i(\mu)|| \leq \frac{8}{(1-\gamma_i)^3}||\theta'-\theta''||,\,\text{and}\notag\\
    &||\nabla^2_{\theta'} V^{\pi_{\theta'}}_i(\mu)-\nabla^2_{\theta''} V^{\pi_{\theta''}}_i(\mu)|| \leq \frac{48}{(1-\gamma_i)^4}||\theta'-\theta''||.
\end{align}
\label{lem:V_softmax_Hessian_Lipschitz}
\end{lemma}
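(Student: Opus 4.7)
The plan is to reduce all three bounds to the directional-derivative estimates provided by Lemma~\ref{lem:bounded_thirdderivative}. For any unit vector $u\in\Rset^{|\Scal||\Acal|}$, set $\tilde V_i(\alpha) = V_i^{\pi_{\theta+\alpha u}}(\mu)$. The identities
\[
\|\nabla V_i^{\pi_\theta}(\mu)\|_2 = \max_{\|u\|_2=1}|\tilde V_i'(0)|,\quad \|\nabla^2 V_i^{\pi_\theta}(\mu)\|_{op} = \max_{\|u\|_2=1}|\tilde V_i''(0)|,
\]
and the analogous one for the operator norm of $\nabla^3 V_i^{\pi_\theta}(\mu)$, together with the mean value theorem applied on the segment connecting $\theta'$ and $\theta''$, reduce the three claims to uniform pointwise bounds on $|\tilde V_i'(0)|$, $|\tilde V_i''(0)|$, $|\tilde V_i'''(0)|$ of the desired orders.

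The first substantive step is to compute the directional derivatives of the softmax probabilities. Writing $c_a := u_{s_0,a}$, direct differentiation of $\pi_\alpha(a|s_0) = e^{\theta_{s_0,a}+\alpha c_a}/\sum_b e^{\theta_{s_0,b}+\alpha c_b}$ yields, at $\alpha = 0$,
\[
\frac{d\pi_\alpha(a|s_0)}{d\alpha} = \pi_\theta(a|s_0)\,(c_a-\bar c),\qquad \bar c = \sum_b \pi_\theta(b|s_0)\,c_b,
\]
and progressively more complex expressions for the second and third derivatives in terms of centered moments of $c$ under $\pi_\theta(\cdot|s_0)$. Since $\|u\|_2=1$ implies $|c_a|\le 1$ and $\sum_a\pi_\theta(a|s_0)c_a^2\le 1$, summing absolute values over $a$ yields $C'\le 2$, $C''\le 2$, and $C''' \le C_3$ for an explicit absolute constant $C_3$.

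Plugging these into Lemma~\ref{lem:bounded_thirdderivative} and simplifying using $\gamma_i\in(0,1)$ gives the pointwise bounds
\[
|\tilde V_i'(0)|\le \frac{2}{(1-\gamma_i)^2},\quad |\tilde V_i''(0)|\le \frac{8}{(1-\gamma_i)^3},\quad |\tilde V_i'''(0)|\le \frac{48}{(1-\gamma_i)^4},
\]
first for the single starting state $s_i$ and then for $\mu$ by linearity of expectation. Converting these bounds into the claimed Lipschitz constants of $V_i^{\pi_\theta}(\mu)$, $\nabla V_i^{\pi_\theta}(\mu)$, and $\nabla^2 V_i^{\pi_\theta}(\mu)$ through the mean value theorem completes the proof.

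The main obstacle is the third-derivative step. Computing $d^3\pi_\alpha(a|s_0)/d\alpha^3$ explicitly, and then verifying that its $\ell_1$-norm over $a$ is small enough that the three terms in the third-derivative estimate of Lemma~\ref{lem:bounded_thirdderivative} telescope into exactly the constant $48$ in the numerator (rather than a larger number), is the delicate calculation. The key observation that makes this feasible is that every summand of $d^3\pi_\alpha/d\alpha^3$ is a product of $\pi_\theta(a|s_0)$ with a bounded polynomial in the centered variables $(c_a-\bar c)$, whose $\pi_\theta$-weighted absolute moments of order $\le 3$ are all controlled by $\|u\|_2\le 1$ via the power-mean inequality.
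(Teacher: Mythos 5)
Your proposal is correct and follows essentially the same route as the paper's proof: reduce all three claims to bounds on the first, second, and third directional derivatives of $\tilde V_i(\alpha)=V_i^{\pi_{\theta+\alpha u}}(\mu)$ via Lemma~\ref{lem:bounded_thirdderivative}, obtain the constants $C',C'',C'''$ by differentiating the softmax probabilities, and then convert to Lipschitz constants by the mean value theorem (the paper uses the looser but equally valid values $C'=2$, $C''=6$, $C'''=26$, versus your centered-moment computation giving $C''\le 2$). The ``delicate'' third-derivative step you worry about is in fact not delicate: the leading $\frac{48\gamma_i^2}{(1-\gamma_i)^4}$ term is fixed by $C'=2$ alone via $6\gamma_i^2 C'^3$, and one checks that the full numerator equals $48$ at $\gamma_i=1$ and is increasing there, so any explicit $C'''\le 48$ (the true value is a small constant, e.g.\ the paper's $26$ or your $\le 7$ via third central moments) yields the stated bound.
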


\begin{proof}
To show a function is Lipschitz, we show the derivative of the Hessian with respect to $\theta$ is bounded. Under the softmax parameterization, we have
\begin{align}
    \nabla_{\theta_{s}} \pi_{\theta}(a | s)=\pi_{\theta}(a | s)\left(e_{a}-\pi(\cdot | s)\right),
\end{align}
\begin{align}
    \nabla_{\theta_{s}}^{2} \pi_{\theta}(a | s)=\pi_{\theta}(a | s)\left(e_{a} e_{a}^{\top}-e_{a} \pi(\cdot | s)^{\top}-\pi(\cdot | s) e_{a}^{\top}+2 \pi(\cdot | s) \pi(\cdot | s)^{\top}-\operatorname{diag}(\pi(\cdot | s))\right),
\end{align}
\begin{align}
    \frac{\partial}{\partial \theta_{s,a'}}\nabla_{\theta_{s}}^{2} \pi_{\theta}(a | s)&=\pi_{\theta}(a | s)(\vct{1}(a=a')-\pi_{\theta}(a' | s))\left(e_{a} e_{a}^{\top}-e_{a} \pi(\cdot | s)^{\top}-\pi(\cdot | s) e_{a}^{\top}\right.\notag\\
    &\hspace{50pt}\left.+2\pi(\cdot | s) \pi(\cdot | s)^{\top}-\operatorname{diag}(\pi(\cdot | s))\right)\notag\\
    &\hspace{50pt}+\pi_{\theta}(a | s)(-e_a \pi_{\theta}(a' | s)e_{a'}^T+e_a\pi_{\theta}(a' | s)\pi_{\theta}(\cdot | s)^T-e_{a'} \pi_{\theta}(a' | s)e_{a}^T\notag\\
    &\hspace{50pt}+\pi_{\theta}(\cdot | s))\pi_{\theta}(a' | s)e_a^T+4\pi_{\theta}(\cdot | s)\pi_{\theta}(a' | s)e_{a'}^T-4\pi_{\theta}(\cdot | s)\pi_{\theta}\pi_{\theta}(\cdot | s)^T\notag\\
    &\hspace{50pt}+\text{diag}(\pi_{\theta}(a' | s)e_a)-\text{diag}(\pi_{\theta}(a' | s)\pi_{\theta}(\cdot | s)^T))
\end{align}

where $e_a$ is a vector with all 0 and 1 at action $a$. Then, for any $s$,
\begin{align}
    \sum_{a \in \mathcal{A}}\left|\left.\frac{d \pi_{\alpha}(a | s)}{d \alpha}\right|_{\alpha=0} \right| & \leq \sum_{a \in \mathcal{A}}\left|\left.\vu^{T} \nabla_{\theta+\alpha \vu} \pi_{\alpha}(a | s)\right|_{\alpha=0}\right| \notag\\ 
    & \leq \sum_{a \in \mathcal{A}} \pi_{\theta}(a | s)\left|\vu_{s}^{T} e_{a}-\vu_{s}^{T} \pi(\cdot | s)\right| \notag\\ 
    & \leq \max _{a \in \mathcal{A}}\left(\left|\vu_{s}^{T} e_{a}\right|+\left|\vu_{s}^{T} \pi(\cdot | s)\right|\right) \leq 2,
\end{align}
\begin{align} 
\sum_{a \in \mathcal{A}}\left|\left.\frac{d^{2} \pi_{\alpha}(a | s)}{d \alpha^{2}}\right|_{\alpha=0} \right| &\leq \sum_{a \in \mathcal{A}}\left|\left.\vu^{T} \nabla_{\theta+\alpha \vu}^{2} \pi_{\alpha}(a | s)\right|_{\alpha=0} \vu \right| \notag\\ 
&\leq \max _{a \in \mathcal{A}}\left(\left|\vu_{s}^{T} e_{a} e_{a}^{T} \vu_{s}\right|+\left|\vu_{s}^{T} e_{a} \pi(\cdot | s)^{T} \vu_{s}\right|+\left|\vu_{s}^{T} \pi(\cdot | s) e_{a}^{T} \vu_{s}\right|\right.
\notag\\ \hspace{50pt}&\left.+2\left|u_{s}^{\top} \pi(\cdot | s) \pi(\cdot | s)^{\top} u_{s}\right|+\left|u_{s}^{\top} \text{diag}(\pi(\cdot | s)) u_{s}\right|\right) \notag\\ 
&\leq6.
\end{align}

Similarly,
\begin{align} 
\sum_{a \in \mathcal{A}}\left|\left.\frac{d^{3} \pi_{\alpha}(a | s)}{d \alpha^{3}}\right|_{\alpha=0} \right| &\leq \sum_{a \in \mathcal{A}}\sum_{a'\in\Acal}\left|\left.\vu_{a'}\vu^{T} \nabla_{\theta+\alpha \vu}^{3} \pi_{\alpha}(a | s)\right|_{\alpha=0} \vu \right| \notag\\ 
&\leq 26 
\end{align}

Then we can use Lemma \ref{lem:bounded_thirdderivative} with $C'=2,C''=6,C'''=26$, and get 
\begin{align}
    &\max _{||\vu||_{2}=1}\left|\left.\frac{d \tilde{V}_i(\alpha)}{d \alpha}\right|_{\alpha=0} \right|\leq\frac{2}{(1-\gamma_i)^2},\notag\\
    &\max _{||\vu||_{2}=1}\left|\left.\frac{d^{2} \tilde{V}_i(\alpha)}{d \alpha^{2}}\right|_{\alpha=0} \right|\leq\frac{6}{(1-\gamma_i)^2}+\frac{8\gamma_i}{(1-\gamma_i)^3}\leq\frac{8}{(1-\gamma_i)^3},\notag\\
    &\max _{||\vu||_{2}=1}\left|\left.\frac{d^{3} \tilde{V}_i(\alpha)}{d \alpha^{3}}\right|_{\alpha=0}. \right|\leq\frac{26}{(1-\gamma_i)^2}+\frac{72\gamma_i}{(1-\gamma_i)^3}+\frac{48\gamma_i^2}{(1-\gamma_i)^4}\leq\frac{48}{(1-\gamma_i)^4}
    \label{eq:smoothness_V}
\end{align}

This is equivalent to
\begin{align}
    &||V_i^{\pi_{\theta'}}(\mu)- V_i^{\pi_{\theta''}}(\mu)|| \leq \frac{2}{(1-\gamma_i)^2}||\theta'-\theta''||,\notag\\
    &||\nabla V_i^{\pi_{\theta'}}(\mu)- \nabla V_i^{\pi_{\theta''}}(\mu)|| \leq \frac{8}{(1-\gamma_i)^3}||\theta'-\theta''||,\,\text{and}\notag\\
    &||\nabla^2 V_i^{\pi_{\theta'}}(\mu)-\nabla^2 V_i^{\pi_{\theta''}}(\mu)|| \leq \frac{48}{(1-\gamma_i)^4}||\theta'-\theta''||.
    \label{eq:Lipschitz_obj_grad_Hessian}
\end{align}
\end{proof}

\begin{lemma}
The cross entropy regularizer is Lipschitz, has a Lipschitz gradient and a Lipschtz Hessian, i.e.
\begin{align}
    &||\lambda\text{RE}(\pi_\theta')-\lambda\text{RE}(\pi_\theta'')|| \leq \lambda(\frac{1}{\sqrt{|\Acal|}}+1)||\theta'-\theta''||,\notag\\
    &||\nabla_{\theta'}\lambda\text{RE}(\pi_\theta')-\nabla_{\theta''}\lambda\text{RE}(\pi_\theta'')|| \leq \frac{2\lambda}{|\Scal|}||\theta'-\theta''||,\,\text{and}\notag\\
    &||\nabla^2_{\theta'}\lambda\text{RE}(\pi_\theta')-\nabla^2_{\theta''}\lambda\text{RE}(\pi_\theta'')|| \leq \frac{6\lambda}{|\Scal|}||\theta'-\theta''||.
\end{align}
\label{lem:RE_softmax_Hessian_Lipschitz}
\end{lemma}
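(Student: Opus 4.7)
The plan is to apply the same derivative-bounding strategy as in the proof of Lemma~\ref{lem:V_softmax_Hessian_Lipschitz}: each of the three claimed Lipschitz properties will follow from a uniform operator-norm bound on the next-higher derivative of $\text{RE}$ combined with the multivariable mean value theorem along the segment from $\theta'$ to $\theta''$. The main simplification relative to the $V^\pi_i$ case is that $\text{RE}(\pi_\theta)$ depends only on the per-state policy values and is independent of the MDP dynamics, so it decouples across states and admits explicit closed-form expressions for all derivatives up to third order.

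First I would rewrite, under the softmax parameterization,
\[
\text{RE}(\pi_\theta) = -\frac{1}{|\Scal||\Acal|}\sum_{s,a}\theta_{s,a} + \frac{1}{|\Scal|}\sum_{s}\log\Big(\sum_{a}e^{\theta_{s,a}}\Big) - \log|\Acal|,
\]
which makes the block-diagonal structure explicit: mixed partials between distinct states vanish, and every derivative inherits a $1/|\Scal|$ prefactor. A direct computation then gives
\[
\frac{\partial \text{RE}(\pi_\theta)}{\partial \theta_{s,a}} = \frac{1}{|\Scal|}\Big(\pi_\theta(a\,|\,s) - \tfrac{1}{|\Acal|}\Big),
\]
the Hessian's $s$-th diagonal block equals $\tfrac{1}{|\Scal|}(\operatorname{diag}(\pi_\theta(\cdot\,|\,s)) - \pi_\theta(\cdot\,|\,s)\pi_\theta(\cdot\,|\,s)^\top)$, namely the usual softmax Jacobian scaled by $1/|\Scal|$, and the third-derivative tensor can be read off by differentiating this softmax Jacobian once more.

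Next I would bound the operator norm of each derivative. For the gradient, since $\pi_\theta(\cdot\,|\,s)$ lies on the simplex, the triangle inequality gives $\|\pi_\theta(\cdot\,|\,s)-U_\Acal\|_2\leq \|\pi_\theta(\cdot\,|\,s)\|_2 + 1/\sqrt{|\Acal|}\leq 1+1/\sqrt{|\Acal|}$, which yields the first inequality after multiplication by $\lambda$. For the Hessian, each per-state block satisfies $\|\operatorname{diag}(\pi_\theta(\cdot\,|\,s))\|_{op}+\|\pi_\theta(\cdot\,|\,s)\pi_\theta(\cdot\,|\,s)^\top\|_{op}\leq 2$, so block-diagonality over $s$ yields $\|\nabla^2\text{RE}\|_{op}\leq 2/|\Scal|$. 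For the third derivative, the bound $\sum_a|d^3\pi_\alpha(a\,|\,s)/d\alpha^3|\leq 26$ already established inside the proof of Lemma~\ref{lem:V_softmax_Hessian_Lipschitz} (corresponding to $C'''=26$), combined with the $1/|\Scal|$ prefactor and block-diagonal structure, produces the Hessian-Lipschitz constant $6\lambda/|\Scal|$ after routine triangle-inequality bookkeeping.

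The main obstacle I anticipate is the third-derivative step, since differentiating a rank-three softmax tensor produces several combinatorial terms that must be tracked carefully. The heavy lifting has, however, already been done for the policy itself inside the proof of Lemma~\ref{lem:V_softmax_Hessian_Lipschitz}; because $\text{RE}$ contains no dynamics-dependent factors, no additional machinery (transition matrices, discounted visitation distributions, or the resolvent $M(\alpha)=(I-\gamma P)^{-1}$) is required, and reusing the softmax derivative bounds while carrying the $1/|\Scal|$ prefactor suffices to close the argument.
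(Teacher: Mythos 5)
Your proposal follows essentially the same route as the paper's proof: write the relative entropy in closed form under the softmax parameterization, exploit the block-diagonal (per-state) structure so that mixed partials across states vanish, compute the gradient $\frac{1}{|\Scal|}(\pi_\theta(\cdot|s)-\tfrac{1}{|\Acal|}\mathbf{1})$ and the Hessian block $\frac{1}{|\Scal|}(\operatorname{diag}(\pi_\theta(\cdot|s))-\pi_\theta(\cdot|s)\pi_\theta(\cdot|s)^\top)$, and bound each derivative's norm by the triangle inequality; this is exactly what the paper does (with $\zeta=-\lambda\mathrm{RE}$), and your first two bounds are obtained identically. The one inaccuracy is in the third bound: the constant $6\lambda/|\Scal|$ does not come from the bound $C'''=26$ on $\sum_a|d^3\pi_\alpha(a|s)/d\alpha^3|$ (that bound concerns the policy itself and is used only for $V_i^{\pi_\theta}$ in Lemma~\ref{lem:V_softmax_Hessian_Lipschitz}); instead, as you also suggest in passing, one differentiates the per-state softmax Hessian block of $\zeta$ once more and sums the coefficients of the resulting four terms ($1+1+2+2=6$), together with $\|u\|_3\le\|u\|_2$, to land on $6\lambda/|\Scal|$. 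With that substitution your argument closes exactly as in the paper.
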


\begin{proof}
Define
\begin{align}
    \zeta(\theta)\triangleq-\lambda\text{RE}(\pi_{\theta})=\frac{\lambda}{|\mathcal{S}||\mathcal{A}|} \sum_{s, a} \log \pi_{\theta}(a | s).
\end{align}

We have
\begin{align}
    &\nabla_{\theta_s}\zeta(\theta) = \frac{\lambda}{|\Scal|}(\frac{1}{|\Acal|}\vct{1}-\pi_{\theta}(\cdot|s)),\notag\\
    &\nabla_{\theta_s}^2\zeta(\theta) = \frac{\lambda}{|\Scal|}(-\text{diag}(\pi_{\theta}(\cdot|s))+\pi_{\theta}(\cdot|s)\pi_{\theta}(\cdot|s)^T),\notag\\
    &\frac{\partial}{\partial \theta_{s,a'}}\nabla_{\theta_s}^2\zeta(\theta) = \frac{\lambda}{|\Scal|}(-\pi_{\theta}(a'|s)e_{a'}e_{a'}^T+\pi_{\theta}(a'|s)\text{diag}(\pi_{\theta}(\cdot|s))\notag\\
    &\hspace{100pt}+2\pi_{\theta}(a'|s)\pi_{\theta}(\cdot|s)e_{a'}^T-2\pi_{\theta}(a'|s)\pi_{\theta}(\cdot|s)\pi_{\theta}(\cdot|s)^T).
\end{align}


Now we can bound the norm of the gradient, the norm of the Hessian, and the norm of the third level gradient.
\begin{align}
    ||\nabla_{\theta} \zeta(\theta)|| &=\sum_{s} ||\nabla_{\theta_{s}} \zeta(\theta)|| \notag\\
    &\leq \frac{\lambda}{|\Scal|}\sum_{s}||\frac{1}{|\Acal|}\vct{1}-\pi_{\theta}(\cdot|s)|| \notag\\
    &\leq \frac{\lambda}{|\Scal|}\sum_{s}\left(||\frac{1}{|\Acal|}\vct{1}||+||\pi_{\theta}(\cdot|s)||\right)\notag\\
    &\leq \frac{\lambda}{|\Scal|}\sum_{s}\left(\frac{1}{\sqrt{|\Acal|}}+1\right)\notag\\
    &\leq \lambda(\frac{1}{\sqrt{|\Acal|}}+1).
    \label{eq:RE_lipschitz}
\end{align}

For any vector $u\in\mathbb{R}^{|\Scal||\Acal}|$ with $||u||_2=1$,
\begin{align}
    \left|u^T \nabla_{\theta}^{2} \zeta(\theta) u\right|&=\left|\sum_{s} u_{s}^T \nabla_{\theta_{s}}^{2} \zeta(\theta) u_{s}\right| \notag\\
    &\leq \frac{\lambda}{|\Scal|} \sum_{s}\left|u_{s}^T\text{diag}(\pi_{\theta}(\cdot|s))u_s-u_{s}^T \pi_{\theta}(\cdot|s)\pi_{\theta}(\cdot|s)^T u_s\right| \notag\\
    &\leq \frac{2\lambda}{|\Scal|}\sum_{s}||u_s||_{\infty}^{2}\notag\\
    &\leq \frac{2\lambda}{|\Scal|}||u||_2^2\notag\\
    &\leq \frac{2\lambda}{|\Scal|},
    \label{eq:re_smooth}
\end{align}
where the first equality follows since $\nabla_{\theta_{s'}}\nabla_{\theta_{s''}}\zeta(\theta)=0,\,\forall s' \neq s''$. Using this method, we can further get
\begin{align}
    \left|\sum_{s',a'} u_{s',a'} u^T \nabla_{\theta}^{2} \zeta(\theta) u\right|&=\left|\sum_{s} \sum_{a'} u_{s,a'} u_{s}^T \nabla_{\theta_{s}}^{2} \zeta(\theta) u_{s}\right| \notag\\
    &\leq \frac{\lambda}{|\Scal|} \sum_{s}\left|-\sum_{a'}u_{s,a'}u_{s}^T\pi_{\theta}(a'|s)e_{a'}e_{a'}^T u_s\right.\notag\\
    &\hspace{50pt}+\sum_{a'}u_{s,a'}u_{s}^T \pi_{\theta}(a'|s)\text{diag}(\pi_{\theta}(\cdot|s)) u_s\notag\\
    &\hspace{50pt}+2\sum_{a'}u_{s,a'}u_{s}^T\pi_{\theta}(a'|s)\pi_{\theta}(\cdot|s)e_{a'}^T u_s \notag\\
    &\hspace{50pt}\left.-2\sum_{a'}u_{s,a'}u_{s}^T\pi_{\theta}(a'|s)\pi_{\theta}(\cdot|s)\pi_{\theta}(\cdot|s)^T u_s\right|\notag\\
    &\leq \frac{6\lambda}{|\Scal|}\sum_{s}||u_s||_{\infty}^{3}\notag\\
    &\leq \frac{6\lambda}{|\Scal|}||u||_3^3\notag\\
    &\leq \frac{6\lambda}{|\Scal|},
\end{align}
where the last inequality uses $||u||_3\leq||u||_2$. This implies that $\zeta(\theta)$ is $\lambda(\frac{1}{\sqrt{|\Acal|}}+1)$-Lipschitz, $\frac{2\lambda}{|\Scal|}$-smooth, and has $\frac{6\lambda}{|\Scal|}$-Lipschitz Hessian. 
\end{proof}

\section{EXPERIMENTS DETAILS}

\subsection{Drone Experiments}
\label{sec:drone_experiment_details}
The framework used for the drone experiment is PEDRA \citep{PEDRA}, a 3D realistically stimulated drone navigation platform powered by Unreal Engine. In the simulated environment, a drone agent is equipped with a front-facing camera, and can implement actions to control its flight. To model the problem as an MDP, the state is represented by the monocular RGB images captured by the camera of the drone, which has dimension $103(height)\times 103(width)\times 3(color)$. There are a total number of 25 actions, corresponding to the drone controlling the yaw and pitch by various angles. Reward is calculated based on dynamic windowing of the simulated depth map, and is designed to encourage the drone to stay away from obstacles, as used in \citet{anwar2018navren}.

We select 4 indoor environments on the PEDRA platform: indoor long, indoor cloud, indoor frogeyes, and indoor pyramid. They contain widely different lighting conditions, wall colors, furniture objects, and hallway structures (Fig. \ref{fig:env_diagram}).

\begin{figure}[!ht]
\centering
  \includegraphics[width=\columnwidth]{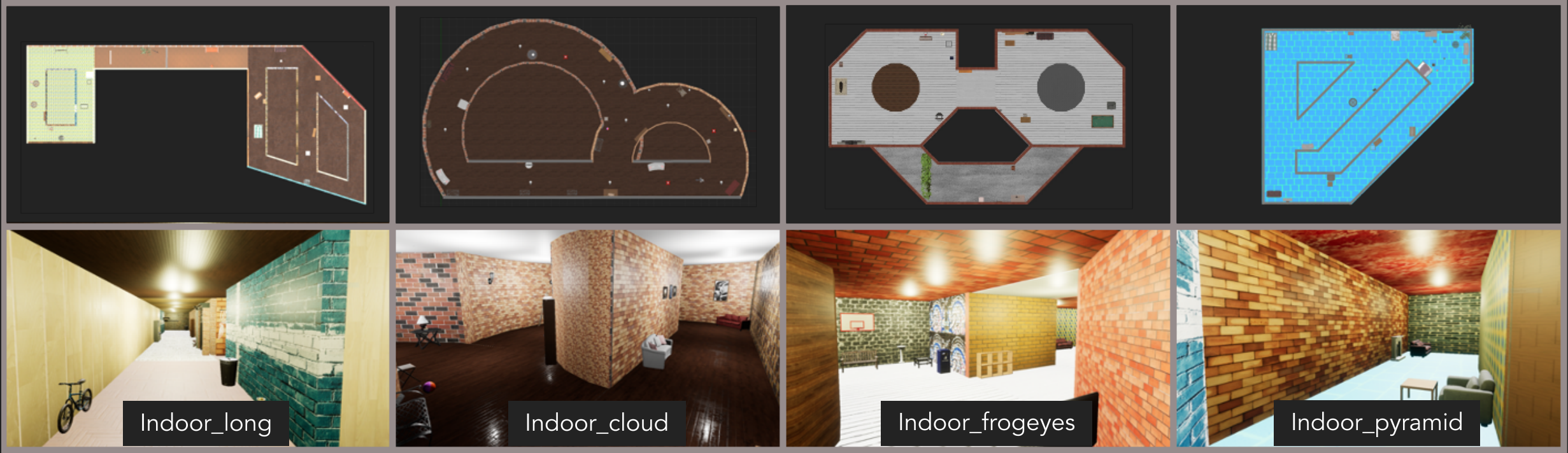}
  \caption{Environments used in drone navigation.}
  \label{fig:env_diagram}
\end{figure}

Every agent uses a 5-layer neural network as the function approximation. The exact architecture is shown in Figure \ref{fig:multienv_drone_architecture}. The agents use the ADAM optimizer with a constant step size of 1e-4, $\beta_1=0.9$, and $\beta_2=0.999$. Communication happens every episode, and follows a cyclic communication graph (ring graph). The same discount factor $\gamma=0.99$ is used by all agents. The weight of the cross entropy regularizer is chosen to be 0.03. We conducted three sets of experiments, where the local gradient $g_i^k$ is estimated using REINFORCE, advantage actor-critic (A2C), and proximal policy optimization (PPO), respectively. The discounted cumulative reward is estimated by the every visit Monte-Carlo method in all experiments. For PPO, we choose the clipping parameter $\epsilon$ to be 0.2. We train the agents for 4000 episodes in all experiments. Using two RTX2080 GPUs, each set of experiments takes about 25 hours to complete.

\begin{figure}[h]
\centering
  \includegraphics[width=0.9\columnwidth]{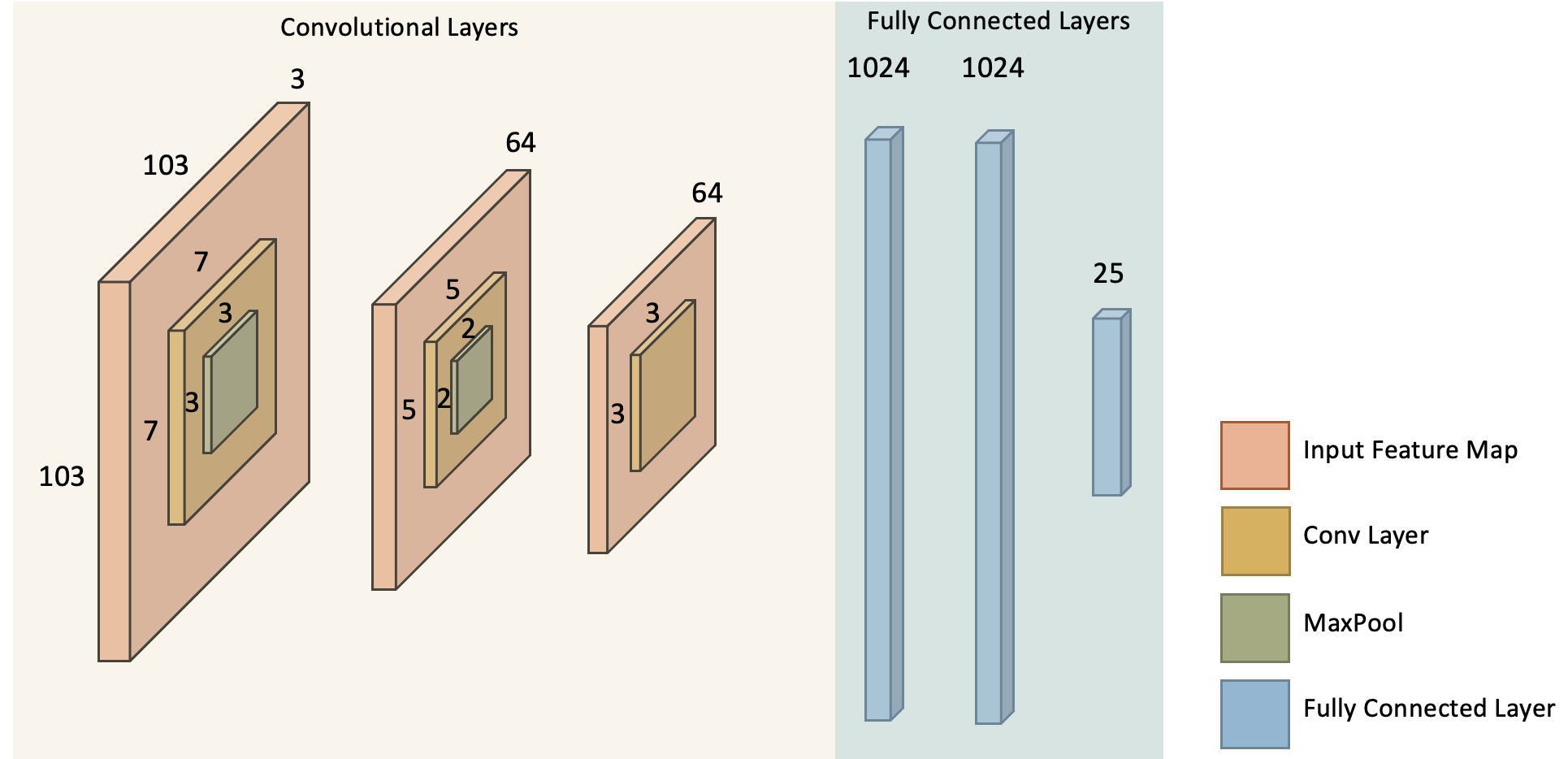}
  \caption{Network architecture for drone experiments}
  \label{fig:multienv_drone_architecture}
\end{figure}

\end{document}